\theoremstyle{plain}
\newtheorem{theorem}{Theorem}[section]
\newtheorem{lemma}[theorem]{Lemma}
\newtheorem{corollary}[theorem]{Corollary}
\newcounter{customcounter}
\newtheorem{theoremappendix}{Theorem}[customcounter]
\newtheorem{lemmaappendix}[theoremappendix]{Lemma}
\newtheorem{corollaryappendix}[theoremappendix]{Corollary}
\theoremstyle{definition}
\theoremstyle{remark}
\DeclareMathOperator{\E}{\mathbb{E}}
\DeclareMathOperator*{\argmax}{arg\,max}
\icmltitlerunning{$f$-Divergence Based Classification: Beyond the Use of Cross-Entropy}
\begin{document}

\twocolumn[
\icmltitle{$f$-Divergence Based Classification: Beyond the Use of Cross-Entropy}



\icmlsetsymbol{equal}{*}

\begin{icmlauthorlist}
\icmlauthor{Nicola Novello}{yyy}
\icmlauthor{Andrea M. Tonello}{yyy}
\end{icmlauthorlist}

\icmlaffiliation{yyy}{Department of Networked and Embedded Systems, University of Klagenfurt, Klagenfurt, Austria}

\icmlcorrespondingauthor{Nicola Novello}{nicola.novello@aau.at}

\icmlkeywords{Machine Learning, ICML}

\vskip 0.3in
]



\printAffiliationsAndNotice{}  

\begin{abstract}
In deep learning, classification tasks are formalized as optimization problems often solved via the minimization of the cross-entropy.  
However, recent advancements in the design of objective functions allow the usage of the $f$-divergence to generalize the formulation of the optimization problem for classification. 
We adopt a Bayesian perspective and formulate the classification task as a maximum a posteriori probability problem. 
We propose a class of objective functions based on the variational representation of the $f$-divergence. Furthermore, driven by the challenge of improving the state-of-the-art approach, we propose a bottom-up method that leads us to the formulation of an objective function corresponding to a novel $f$-divergence referred to as shifted log (SL). 
We theoretically analyze the objective functions proposed and numerically test them in three application scenarios: toy examples, image datasets, and signal detection/decoding problems. The analyzed scenarios demonstrate the effectiveness of the proposed approach and that the SL divergence achieves the highest classification accuracy in almost all the considered cases.
\end{abstract}

\section{Introduction}
\label{sec:introduction}


Classification problems are relevant in a multitude of domains, such as computer vision, biomedical, and telecommunications engineering \cite{peng2010novel, DeepLearningMethodsforImprovedDecodingofLinearCodes, uy2019revisiting}. In general, classification refers to the estimation of a discrete vector $\mathbf{x}$ (i.e., the class) given an observation vector $\mathbf{y}$.
In the Bayesian framework, the optimal method to solve classification problems is derived from the maximum a posteriori probability (MAP) principle \cite{boyd2004convex, jeong2024demystifying}. 
Classical estimation theory uses a model-based approach to derive the estimation algorithm. Then, the MAP algorithm is well-defined and applicable when the posterior density is known. 
If the posterior probability is unknown, the first fundamental step towards solving the MAP problem consists of learning the posterior density from the data. In this direction, deep learning (DL) approaches learn the model and solve the estimation task directly via data observation. This is achieved by leveraging artificial neural networks, whose ability to model probability density functions (pdfs) makes them particularly suited for this task, as shown in \cite{hornik1989multilayer, lecun2015deep, mohamed2016learning, papamakarios2017masked}. DL models require the design of two main elements: the network architecture, which defines the class of functions the network can estimate, and the objective function that is exploited during the training phase to learn the optimal parameters of the network.  
In reference to neural network-based classification techniques, most of the previous work focused on the conceievement of the network architecture \cite{DBLP:journals/corr/SimonyanZ14a, szegedy2015going, rezende2017malicious, tong2020channel, bhojanapalli2021understanding}. 
Contrarily, a smaller part of classification literature is dedicated to the objective function design. In most cases, classification is achieved through the minimization of the cross-entropy loss function between the data empirical probability density function $p_{data}$ and the probability density function output of the neural network $p_{model}$ \cite{kussul2017deep, HyperspectralImageClassificationWithDeepLearningModels}. 
The minimization of the cross-entropy corresponds to the minimization of the Kullback-Leibler (KL) divergence between the same two probability distributions. The KL divergence is defined as 
\begin{equation}
\label{eq:DKL definition}
    D_{KL}(p_{data} || p_{model}) = \E_{p_{data}} \left[ \log \left( \frac{p_{data}}{p_{model}} \right) \right] ,
\end{equation}
while the cross-entropy (CE) is $\E_{p_{data}} \left[- \log \left( p_{model} \right) \right]$. \\
Alternatively, some papers propose the usage of proper losses, which is a family of losses characterized by Bregman divergences \cite{gneiting2007strictly}. For instance, the authors in \cite{hui2020evaluation} compare the square loss and the cross-entropy for supervised classification tasks, while in \cite{dong2019single}, the authors propose two novel objective functions based on logistic regression.\\
Another popular class of divergence functions is the $f$-divergence, which has been used for various classification algorithms. The authors in \cite{yu2020training} propose a min-max game for the objective function design of deep energy-based models, where they substitute the minimization of the KL with any $f$-divergence. In \cite{wei2020optimizing}, the authors propose a max-max optimization problem to tackle classification with noisy labels. They maximize the $f$-mutual information (a generalization of the mutual information) between the classifier's output and the true label distribution. 
In \cite{zhong2023learning}, the $f$-divergence is used as a regularization term in a min-max optimization problem, for the design of fair classifiers (i.e., minimize the classifier discrepancy over sub-groups of the population). 

In this paper, we propose to estimate the conditional posterior probability (needed in the MAP classifier) by expressing it as a density ratio (see \eqref{eq:top_down_posterior_estimator}) and by using a discriminative learning approach \cite{Song2020}. 
Density ratio estimation approaches have been used in a wide variety of applications \cite{nowozin2016f, wei2020optimizing, letizia2023variational}. 
However, unlike known classification objective functions, the main idea behind the proposed estimators is not to use a divergence minimization-based technique. Instead, to solve the classification task, we estimate the ratio between joint and marginal probability densities with a discriminator network and maximize such a ratio (corresponding to the posterior probability) with respect to the class elements. Therefore, contrarily to other $f$-divergence-based approaches that need a double training optimization procedure (e.g., max-max or min-max), our approach relies on a single training maximization formulation needed to learn the posterior density. 
Additionally, it enables the use of the $f$-divergence to obtain a broader set of classifiers beyond the conventional approach based on the exploitation of the cross-entropy. 

In more detail, the contributions of this paper are fourfold: a) we design a class of posterior probability estimators that exploits the variational representation of the $f$-divergence \cite{Nguyen2010}; b) we present a second class of posterior probability estimators formulated using a bottom-up approach; c) we propose a new objective function for classification tasks that corresponds to the variational representation of a novel $f$-divergence. The proposed divergence is analyzed theoretically and experimentally, and a comparison with the $f$-divergences known in the literature follows, showing that the new one achieves the best performance in almost all the considered scenarios; d) finally, we conceive two specific network architectures that are trained with the proposed objective functions.

\section{MAP-based Classification Through Posterior Probability Learning}
\label{sec:map}

\begin{figure}[htbp]
	\centerline{\includegraphics[width=0.9\columnwidth]{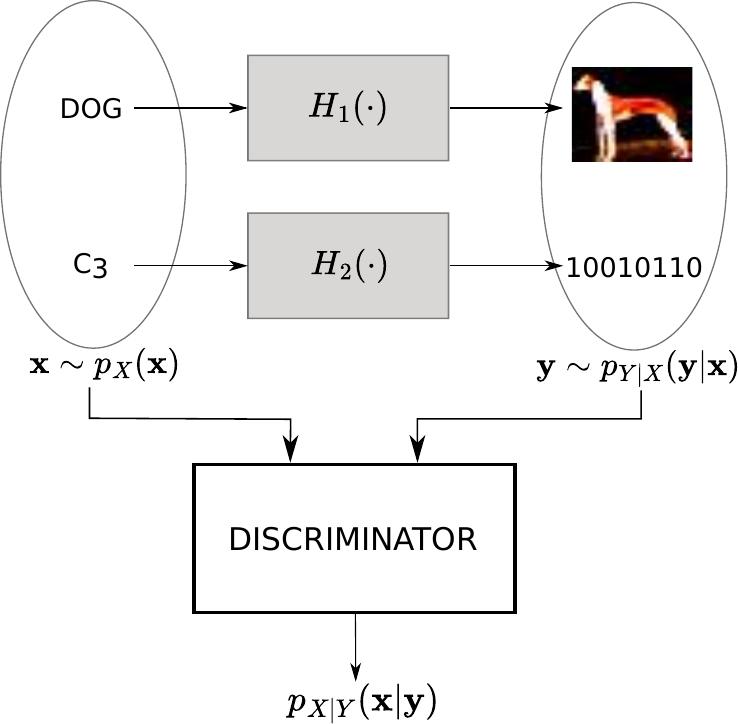}}
	\caption{System model representation. $X$ is the input of a stochastic model $H(\cdot)$, while the output is $Y$. In the example represented by $H_1(\cdot)$, the input is the class element "dog" and the output is an image of a dog. Differently, $H_2(\cdot)$ is a communication channel, therefore the input is a codeword, and the output is the binary representation of such a codeword after the noise addition.}
	\label{fig:system-model}
\end{figure}

In this section, we describe the approach that we propose to tackle classification tasks and the notation used.
Let $X$ and $Y$ be two random vectors described by the probability density functions $p_X(\mathbf{x})$ and $p_Y(\mathbf{y})$, respectively. Let $X$ and $Y$ be the input and output of a stochastic model (referred to as $H(\cdot)$), respectively, as shown in Fig. \ref{fig:system-model}. 
In a classification context, $X$ is discrete and represents the class type with alphabet $\mathcal{A}_x$, while $Y$ is the observation of the class elements. \\ 
The MAP estimator (classifier) is formulated as: 
\begin{equation}
\label{eq:MAP_problem}
    \hat{\mathbf{x}} = \argmax_{\mathbf{x} \in \mathcal{A}_x} p_{X|Y}(\mathbf{x}|\mathbf{y}) ,
\end{equation}
where $p_{X|Y}(\mathbf{x}|\mathbf{y})$ is the posterior probability density.
In this paper, we propose to adopt a discriminative formulation. Therefore, we express $p_{X|Y}(\mathbf{x}|\mathbf{y})$ as the ratio between $p_{XY}(\mathbf{x}, \mathbf{y})$ and $p_Y(\mathbf{y})$ by using the Bayes theorem.\\
Since the classification task is an estimation problem, we formulate it with the MAP approach because it is the optimal approach for estimation problems in the Bayesian framework \cite{proakis2007fundamentals, jeong2024demystifying}. A key advantage of the proposed MAP-based approach is that it does not need a double training optimization procedure because it directly learns the posterior pdf formulating the objective function either using the $f$-divergence (Section \ref{sec:topDown}) or with a bottom-up approach (Section \ref{sec:bottom-up}). When the posterior pdf is learned, the estimation (thus the classification) problem is solved, because the argmax operator in \eqref{eq:MAP_problem} can be easily computed. 
First, we study the more general problem of estimating the posterior pdf when $X$ is a continuous random vector. Then, we consider the specific case of $X$ being discrete. The classification problem is solved choosing the optimal class $\mathbf{x}$ that, during inference, maximizes the estimate of $p_{X|Y}(\mathbf{x}|\mathbf{y})$ (i.e., that solves \eqref{eq:MAP_problem}).

\section{Posterior Probability Learning Through the Exploitation of $f$-Divergence}
\label{sec:topDown}

The first approach we propose is to estimate $p_{X|Y}(\mathbf{x}|\mathbf{y})$ by exploiting the variational representation of the $f$-divergence. 

\subsection{$f$-Divergence}
Given two probability distributions $P$ and $Q$ admitting, respectively, the absolutely continuous density functions $p$ and $q$ with respect to $d\mathbf{x}$ defined on the domain $\mathcal{X}$, the $f$-divergence (also known as Ali-Silvey distance) is defined as \cite{ali1966general, csiszar1967information} 
\begin{equation}
    D_f(P||Q) = \int_{\mathcal{X}} q(\mathbf{x}) f\left( \frac{p(\mathbf{x})}{q(\mathbf{x})} \right) d\mathbf{x} ,
\end{equation}
where the \textit{generator function} $f: \mathbb{R}_+ \longrightarrow \mathbb{R}$ is a convex, lower-semicontinuous function such that $f(1)=0$. 
Every generator function has a \textit{Fenchel conjugate} function $f^*$, that is defined as 
\begin{equation}
\label{eq:fenchel_conjugate}
    f^*(t) = \sup_{u \in dom_f} \left\{ ut - f(u) \right\},
\end{equation}
where $dom_f$ is the domain of $f(\cdot)$, $f^{*}$ is convex and such that $f^{**}(u) = f(u)$. \\
Leveraging \eqref{eq:fenchel_conjugate}, the authors in \cite{Nguyen2010} expressed a lower bound on any $f$-divergence, that is referred to as variational representation of the $f$-divergence:
\begin{align}
\label{eq:variational_representation}
    D_f(P||Q) &\geq \sup_{T \in \mathcal{T}} \left\{ \E_{p(\mathbf{x})} \left[ T(\mathbf{x}) \right] - \E_{q(\mathbf{x})} \left[ f^*(T(\mathbf{x})) \right] \right\} ,
\end{align}
where $T(\mathbf{x})$ is parametrized by an artificial neural network, and the bound in \eqref{eq:variational_representation} is tight when $T(\mathbf{x})$ is
\begin{equation}
\label{eq:T_hat}
    T^{\diamond}(\mathbf{x}) = f^{\prime} \left( \frac{p(\mathbf{x})}{q(\mathbf{x})} \right) ,
\end{equation}
where $f^\prime$ is the first derivative of $f$.


\subsection{Posterior Estimation Through the Variational Representation of the $f$-Divergence}
\label{subsec:top-down}
Theorem \ref{theorem:top_down} provides a class of objective functions that, when maximized, leads to the estimation of the posterior probability density.

\begin{theorem}
\label{theorem:top_down}
Let $X$ and $Y$ be the random vectors with pdfs $p_X(\mathbf{x})$ and $p_Y(\mathbf{y})$, respectively. Assume $\mathbf{y} = H(\mathbf{x})$, where $H(\cdot)$ is a stochastic function, then $p_{XY}(\mathbf{x}, \mathbf{y})$ is the joint density. Define $\mathcal{T}_x$ to be the support of $X$ and $p_U(\mathbf{x})$ to be a uniform pdf with support $\mathcal{T}_x$. Let $f_{u}: \mathbb{R}_+ \longrightarrow \mathbb{R}$ be a convex function such that $f_{u}(1)=0$, and $f_{u}^*$ be the Fenchel conjugate of $f_{u}$.
Let $\mathcal{J}_f(T)$ be the objective function defined as 
\begin{align}
\label{eq:top_down_variational_representation_posterior}
    \mathcal{J}_f(T) &= \E_{(\mathbf{x},\mathbf{y}) \sim p_{XY}(\mathbf{x},\mathbf{y})} \left[ T(\mathbf{x},\mathbf{y}) \right] \notag \\
    & - \E_{(\mathbf{x},\mathbf{y}) \sim p_U(\mathbf{x})p_Y(\mathbf{y})} \left[ f_{u}^*(T(\mathbf{x},\mathbf{y})) \right] .
\end{align}
Then, 
\begin{equation}
    T^{\diamond}(\mathbf{x},\mathbf{y}) = \argmax_{T \in \mathcal{T}} \mathcal{J}_f(T) 
\end{equation}
leads to the estimation of the posterior density
\begin{equation}
\label{eq:top_down_posterior_estimator}
    \hat{p}_{X|Y}(\mathbf{x}|\mathbf{y}) = \frac{p_{XY}(\mathbf{x},\mathbf{y})}{p_Y(\mathbf{y})} = \frac{(f_{u}^{*})^{\prime}(T^{\diamond}(\mathbf{x},\mathbf{y}))}{|\mathcal{T}_x|},
\end{equation}
where $T^{\diamond}(\mathbf{x},\mathbf{y})$ is parametrized by an artificial neural network.
\end{theorem}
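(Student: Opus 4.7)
The plan is to recognize $\mathcal{J}_f(T)$ in \eqref{eq:top_down_variational_representation_posterior} as exactly the variational lower bound \eqref{eq:variational_representation} applied to the pair of densities $p(\mathbf{x},\mathbf{y}) = p_{XY}(\mathbf{x},\mathbf{y})$ (the joint) and $q(\mathbf{x},\mathbf{y}) = p_U(\mathbf{x})\,p_Y(\mathbf{y})$ (the product of the uniform prior over $\mathcal{T}_x$ and the marginal of $Y$), on the product space $\mathcal{T}_x \times \mathcal{Y}$. Because $f_u$ satisfies the hypotheses required in Section~3.1 (convex, $f_u(1)=0$, with Fenchel conjugate $f_u^*$), the Nguyen--Wainwright--Jordan bound applies verbatim, so $\mathcal{J}_f(T) \le D_{f_u}(p_{XY} \,\|\, p_U p_Y)$ for every measurable $T$.

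Next, I would invoke the tightness condition \eqref{eq:T_hat}: the supremum is attained at
\begin{equation*}
T^{\diamond}(\mathbf{x},\mathbf{y}) = f_u^{\prime}\!\left( \frac{p_{XY}(\mathbf{x},\mathbf{y})}{p_U(\mathbf{x})\,p_Y(\mathbf{y})} \right).
\end{equation*}
To invert this identity for the density ratio, I would use the standard Fenchel-duality fact that, whenever $f_u$ is strictly convex and differentiable on the interior of its domain, $(f_u^*)^{\prime} = (f_u^{\prime})^{-1}$; this follows from $f_u^{**} = f_u$ together with the envelope theorem applied to \eqref{eq:fenchel_conjugate}. Applying $(f_u^*)^{\prime}$ to both sides therefore yields
\begin{equation*}
(f_u^*)^{\prime}\bigl(T^{\diamond}(\mathbf{x},\mathbf{y})\bigr) = \frac{p_{XY}(\mathbf{x},\mathbf{y})}{p_U(\mathbf{x})\,p_Y(\mathbf{y})}.
\end{equation*}

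The proof then closes by plugging in $p_U(\mathbf{x}) = 1/|\mathcal{T}_x|$ on its support and using Bayes' rule $p_{XY}/p_Y = p_{X|Y}$ to obtain
\begin{equation*}
\hat{p}_{X|Y}(\mathbf{x}|\mathbf{y}) = \frac{p_{XY}(\mathbf{x},\mathbf{y})}{p_Y(\mathbf{y})} = \frac{(f_u^*)^{\prime}\bigl(T^{\diamond}(\mathbf{x},\mathbf{y})\bigr)}{|\mathcal{T}_x|},
\end{equation*}
which is exactly \eqref{eq:top_down_posterior_estimator}. The universality of the parametrization $T$ by a neural network is what justifies replacing the supremum over $\mathcal{T}$ by the argmax over network parameters in practice.

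The main obstacle, and the step I would be most careful about, is justifying the pointwise inversion $(f_u^*)^{\prime} = (f_u^{\prime})^{-1}$ on the relevant range of arguments: one needs the density ratio $p_{XY}/(p_U p_Y)$ to take values in the interior of $\mathrm{dom}\, f_u$ (so that $f_u^{\prime}$ is defined) and $f_u$ to be strictly convex there (so that the inverse is single-valued). A secondary technical point is that $p_U(\mathbf{x})\, p_Y(\mathbf{y})$ must dominate $p_{XY}(\mathbf{x},\mathbf{y})$, which is guaranteed by the assumption that $\mathcal{T}_x$ is the support of $X$ (hence $p_U > 0$ wherever $p_X > 0$, and the conditional $p_{Y|X}$ is absolutely continuous with respect to $p_Y$ under mild conditions on the stochastic model $H(\cdot)$). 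All remaining manipulations are routine substitution.
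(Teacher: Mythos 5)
Your proposal is correct and follows essentially the same route as the paper's proof: apply the Nguyen--Wainwright--Jordan tightness condition to the pair $(p_{XY},\, p_U p_Y)$ to get $T^{\diamond} = f_u'\bigl(p_{XY}/(p_U p_Y)\bigr)$, invert via $(f_u^*)' = (f_u')^{-1}$ (which the paper isolates as its Lemma B.1), and substitute $p_U(\mathbf{x}) = 1/|\mathcal{T}_x|$. The regularity caveats you flag (strict convexity, the ratio lying in the interior of $\mathrm{dom}\, f_u$, domination of $p_{XY}$ by $p_U p_Y$) are left implicit in the paper but do not change the argument.
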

\setcounter{table}{0}
\begin{table*}[t]
  \centering
  \caption{Objective functions table. The corresponding $f$-divergences are: Kullback-Leibler, Reverse Kullback-Leibler, squared Hellinger distance, GAN, and Pearson $\chi^2$.}
  \label{tab:value functions}
  \vskip 0.15in
  \begin{center}
  \begin{small}
  \begin{sc}
  \begin{tabular}{ c c c c } 
     \hline
     Name & Objective function & $T^{\diamond}(\mathbf{x}, \mathbf{y})$ & $D^{\diamond}(\mathbf{x}, \mathbf{y})$\\
     \hline
      $\mathcal{J}_{KL}(D)$ & $\E_{p_{XY}(\mathbf{x}, \mathbf{y})} \Bigl[ \log(D(\mathbf{x}, \mathbf{y})) \Bigr] - \E_{p_U(\mathbf{x})p_Y(\mathbf{y})}\Bigl[ |\mathcal{T}_x| D(\mathbf{x}, \mathbf{y}) \Bigr] $ & $\log(D^{\diamond}(\mathbf{x}, \mathbf{y})) +1$ & $\frac{p_{XY}(\mathbf{x}, \mathbf{y})}{p_Y(\mathbf{y})}$\\
      $\mathcal{J}_{RKL}(D)$ & $\E_{p_{XY}(\mathbf{x}, \mathbf{y})} \Bigl[ - D(\mathbf{x}, \mathbf{y}) \Bigr] + \E_{p_U(\mathbf{x})p_Y(\mathbf{y})} \Bigl[ |\mathcal{T}_x| \log(D(\mathbf{x}, \mathbf{y})) \Bigr] $ & $-D^{\diamond}(\mathbf{x}, \mathbf{y})$ & $\frac{p_Y(\mathbf{y})}{p_{XY}(\mathbf{x}, \mathbf{y})}$ \\
      $\mathcal{J}_{HD}(D)$ & $\E_{p_{XY}(\mathbf{x}, \mathbf{y})} \Bigl[- \sqrt{D(\mathbf{x}, \mathbf{y})} \Bigr] - \E_{p_U(\mathbf{x})p_Y(\mathbf{y})} \Bigl[ |\mathcal{T}_x| \frac{1}{\sqrt{D(\mathbf{x}, \mathbf{y})}} \Bigr] $ & $1 - \sqrt{D^{\diamond}}$ & $\frac{p_Y(\mathbf{y})}{p_{XY}(\mathbf{x}, \mathbf{y})}$ \\
      $\mathcal{J}_{GAN}(D)$ & $\E_{p_{XY}(\mathbf{x}, \mathbf{y})} \Bigl[\log(1 - D(\mathbf{x}, \mathbf{y}) )\Bigr]+ \E_{p_U(\mathbf{x})p_Y(\mathbf{y})} \Bigl[ |\mathcal{T}_x| \log(D(\mathbf{x}, \mathbf{y})) \Bigr]$ & $\log(1 - D^{\diamond}(\mathbf{x}, \mathbf{y}))$ & $\frac{p_Y(\mathbf{y})}{p_{XY}(\mathbf{x}, \mathbf{y}) + p_Y(\mathbf{y})}$ \\
      $\mathcal{J}_{P}(D)$ & $\E_{p_{XY}(\mathbf{x}, \mathbf{y})} \Bigl[2(D(\mathbf{x}, \mathbf{y}) - 1)\Bigr] - \E_{p_U(\mathbf{x})p_Y(\mathbf{y})} \Bigl[ |\mathcal{T}_x| D^2(\mathbf{x}, \mathbf{y}) \Bigr]$ & $2(D^{\diamond}(\mathbf{x}, \mathbf{y}) - 1)$ & $\frac{p_{XY}(\mathbf{x}, \mathbf{y})}{p_Y(\mathbf{y})}$ \\
     \hline
    \end{tabular}
    \end{sc}
    \end{small}
    \end{center}
    \vskip -0.1in
\end{table*}
According to Theorem \ref{theorem:top_down}, there exists a class of objective functions to train a discriminator whose output is processed to obtain an estimate of the posterior probability.
The choice of the $f$-divergence offers a degree of freedom (DOF) in the objective function design.
To improve the training convergence of the objective functions formulated as variational representation of $f$-divergences, the literature exploits a change of variable $T(\mathbf{x}, \mathbf{y}) = r(D(\mathbf{x}, \mathbf{y}))$ in \eqref{eq:top_down_variational_representation_posterior} \cite{nowozin2016f}. With the same goal, we propose to introduce a second DOF in the development of the objective function \eqref{eq:top_down_variational_representation_posterior}. 
Accordingly, it is sufficient to substitute $D^{\diamond}(\mathbf{x}, \mathbf{y}) = r^{-1}(T^{\diamond}(\mathbf{x}, \mathbf{y}))$ in \eqref{eq:top_down_posterior_estimator} to attain the corresponding posterior probability estimator. The exploitation of the DOFs uniquely defines the objective function and numerically impacts the discriminator's parameters convergence during the training phase. 
We use Theorem \ref{theorem:top_down} to obtain five specific estimators trained with the objective functions reported in Tab. \ref{tab:value functions}. These objective functions are obtained by first selecting the generator function and then choosing $r(\cdot)$. The generator functions used to derive Tab. \ref{tab:value functions} are reported in Tab. \ref{tab:f_divergences_table_unsupervised} of Appendix \ref{subsec:Appendix_value_functions} and are retrieved from a modification of the well known $f$-divergences reported for completeness in Tab. \ref{tab:f_divergences_table} of Appendix \ref{subsec:Appendix_value_functions}. In particular, the modification applied is attained as $f_{u}^{*}(t) = |\mathcal{T}_x|f^{*}(t)$, where $f_u^{*}(t)$ and $f^{*}(t)$ are referred to as \textit{unsupervised} and \textit{supervised} generator functions, respectively. In addition, Tab. \ref{tab:value functions} comprises the change of variable $r(D^{\diamond}(\mathbf{x}, \mathbf{y}))$ and the expression of $D^{\diamond}(\mathbf{x}, \mathbf{y})$ for each objective function.

Lemma \ref{lemma:convergence} proves the convergence property of any posterior probability estimator that is formulated as in Theorem \ref{theorem:top_down}. 

\begin{lemma}
\label{lemma:convergence}
    Let the artificial neural network $D(\cdot) \in \mathcal{D}$ be with enough capacity and training time (i.e., in the nonparametric limit). Assume the gradient ascent update rule $D^{(i+1)} = D^{(i)} + \mu \nabla \mathcal{J}_f(D^{(i)})$ converges to
    \begin{equation}
        D^{\diamond} = \argmax_{D \in \mathcal{D}} \mathcal{J}_f(D) ,
    \end{equation}
    where $\mathcal{J}_f(D)$ is defined as in \eqref{eq:top_down_variational_representation_posterior}, with the change of variable $D = r^{-1}(T)$.
    Then, the difference between the optimal posterior probability and its estimate at iteration $i$ is
    \begin{align}
        p^{\diamond} - p^{(i)} \simeq \frac{1}{|\mathcal{T}_x|} \Bigl( \delta^{(i)} \Bigl[ (f_{u}^{*})^{\prime \prime}(r(D^{(i)})) \Bigr] \Bigr) , \label{eq:gap_optimum_posterior}
    \end{align}
    where $\delta^{(i)}=r(D^{\diamond}) - r(D^{(i)})$, $(f_{u}^{*})^{\prime \prime}$ is the second derivative of $f_u^*$, and $\mu >0$ the learning rate.
    If $D^{\diamond}$ corresponds to the global optimum achieved by using the gradient ascent method, the posterior probability estimator in \eqref{eq:top_down_posterior_estimator} converges to the real value of the posterior density.
\end{lemma}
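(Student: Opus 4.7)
The plan is to leverage the closed-form relationship between the posterior density and the optimal discriminator given in Theorem \ref{theorem:top_down}, then control the gap between the current iterate and the optimum via a first-order Taylor expansion of $(f_u^*)'$.

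First, I would substitute the change of variable $T = r(D)$ into \eqref{eq:top_down_posterior_estimator}, so that the posterior estimator at iteration $i$ reads $p^{(i)} = (f_u^*)'(r(D^{(i)}))/|\mathcal{T}_x|$, and likewise $p^{\diamond} = (f_u^*)'(r(D^{\diamond}))/|\mathcal{T}_x|$. Subtracting yields the exact identity
\begin{equation*}
p^{\diamond} - p^{(i)} = \frac{1}{|\mathcal{T}_x|} \bigl[ (f_u^*)'(r(D^{\diamond})) - (f_u^*)'(r(D^{(i)})) \bigr].
\end{equation*}

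Second, I would apply a first-order Taylor expansion of $(f_u^*)'$ around $r(D^{(i)})$, evaluated at $r(D^{\diamond})$. Because $f_u$ is convex and lower-semicontinuous, its Fenchel conjugate $f_u^*$ is convex, and, for all the generators listed in Table \ref{tab:value functions}, twice differentiable on the interior of its domain. The expansion gives $(f_u^*)'(r(D^{\diamond})) \approx (f_u^*)'(r(D^{(i)})) + (f_u^*)''(r(D^{(i)})) \cdot \delta^{(i)}$ with $\delta^{(i)} = r(D^{\diamond}) - r(D^{(i)})$. Plugging this into the identity from the previous step recovers \eqref{eq:gap_optimum_posterior}.

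Third, for the convergence claim, I would invoke the nonparametric assumption together with the hypothesized convergence of the gradient-ascent update: since the network has sufficient capacity to realize $D^{\diamond}$ and the iterates $D^{(i)}$ converge to $D^{\diamond}$, continuity of $r$ implies $\delta^{(i)} \to 0$. Because $(f_u^*)''$ is continuous (hence locally bounded) on a neighborhood of $r(D^{\diamond})$, the right-hand side of \eqref{eq:gap_optimum_posterior} vanishes, so $p^{(i)} \to p^{\diamond}$ and the estimator recovers the true posterior.

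The main difficulty is not the Taylor step itself, which is routine, but justifying the approximation rigorously: one has to argue that the higher-order remainder is negligible near the optimum, and that the relevant derivatives of $f_u^*$ are well defined along the entire trajectory of $D^{(i)}$. Both points hold for the concrete generators in Table \ref{tab:value functions}, so the argument can be made quantitative by a standard Lagrange remainder bound; I would include a short remark to this effect rather than an exhaustive case analysis.
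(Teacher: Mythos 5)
Your proposal is correct and follows essentially the same route as the paper's proof: the exact identity $p^{\diamond}-p^{(i)}=\frac{1}{|\mathcal{T}_x|}\bigl[(f_u^*)'(r(D^{\diamond}))-(f_u^*)'(r(D^{(i)}))\bigr]$, a first-order Taylor expansion of $(f_u^*)'$ at $r(D^{(i)})=r(D^{\diamond})-\delta^{(i)}$, and the observation that $\delta^{(i)}\to 0$ under the convergence hypothesis. Your added remarks on the remainder and the regularity of $(f_u^*)''$ only make explicit what the paper leaves implicit.
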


In addition to proving the convergence property of the posterior estimators in Theorem \ref{theorem:top_down}, Lemma \ref{lemma:convergence} provides an intuitive explanation of the posterior probability estimator's bias's dependency on $f$ in \eqref{eq:gap_optimum_posterior}.

Theorem \ref{theorem:top_down} provides an effective method to solve classification problems by designing the objective function based on the choice of an $f$-divergence. This differs from other methods that leverage $f$-divergences and that need a dual optimization strategy \cite{wei2020optimizing, zhong2023learning}. The proposed method relies on a single optimization problem. In fact, it generalizes the cross-entropy minimization approach (see Section \ref{sec:Architecture}). 
In the next section, we propose a novel bottom-up approach that guides the design of new objective functions for the posterior probability estimation problem.

\section{Bottom-Up Posterior Probability Learning}
\label{sec:bottom-up}

In this section, we propose a bottom-up methodology for developing objective functions that, when maximized, lead to the estimation of the posterior probability. 
The bottom-up approach reverses the top-down procedure typical of $f$-divergence formulations \cite{nowozin2016f, wei2020optimizing}. 
The main advantage of this new method is that it guides the design of the objective function by starting with the imposition of the optimal convergence condition of the discriminator's output $D^{\diamond}(\mathbf{x}, \mathbf{y})$ (see Appendix \ref{subsec:appendix_insights_bottom_up}). 
Theorem \ref{theorem:Bottom-up} presents the class of objective functions that, when maximized, leads to the bottom-up posterior estimator.
\begin{theorem}  
\label{theorem:Bottom-up}
Let $X$ and $Y$ be the random vectors with pdfs $p_X(\mathbf{x})$ and $p_Y(\mathbf{y})$, respectively. Assume $\mathbf{y} = H(\mathbf{x})$, where $H(\cdot)$ is a stochastic function, then $p_{XY}(\mathbf{x}, \mathbf{y})$ is the joint density. Let $\mathcal{T}_x$ and $\mathcal{T}_y$ be the support of $p_X(\mathbf{x})$ and $p_Y(\mathbf{y})$, respectively.
Let the discriminator $D(\mathbf{x}, \mathbf{y})$ be a scalar function of $\mathbf{x}$ and $\mathbf{y}$. Let $k(\cdot)$ be any deterministic and invertible function.
Then, the posterior density is estimated as
\begin{equation}
\label{eq:bottom_up_posterior_estimator}
    \hat{p}_{X|Y}(\mathbf{x}|\mathbf{y}) = k^{-1}(D^{\diamond}(\mathbf{x}, \mathbf{y})) ,
\end{equation}
where $D^{\diamond}(\mathbf{x},\mathbf{y})$ is the optimal discriminator obtained by maximizing
\begin{equation}
\label{eq:value_function_general_bottom_up}
    \mathcal{J}(D) = \int_{\mathcal{T}_x} \int_{\mathcal{T}_y}  \tilde{\mathcal{J}}(D) d\mathbf{x} d\mathbf{y},
\end{equation}
for all concave functions $ \tilde{\mathcal{J}}(D)$ such that their first derivative is 
\begin{align}
\label{eq:g(D)}
    \frac{\partial \tilde{\mathcal{J}}(D)}{\partial D} &= \Bigl( D(\mathbf{x}, \mathbf{y}) - k(p_{X|Y}(\mathbf{x}|\mathbf{y}))\Bigr) g_1(D,k) \notag \\
    & \triangleq g(D, k)
\end{align}
with $g_1(D,k) \neq 0$ deterministic and $\frac{\partial g(D,k)}{ \partial D} \leq 0$.
\end{theorem}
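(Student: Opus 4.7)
My plan is to reduce the functional optimization to a pointwise one, apply a first-order condition, and then verify optimality using the concavity encoded in the hypotheses on $g(D,k)$.

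First, since $\mathcal{J}(D) = \int_{\mathcal{T}_x}\int_{\mathcal{T}_y} \tilde{\mathcal{J}}(D)\, d\mathbf{x}\, d\mathbf{y}$ is an integral of the scalar functional $\tilde{\mathcal{J}}(D(\mathbf{x},\mathbf{y}))$, and the admissible $D$ has no cross-point coupling, the maximization can be performed pointwise: for each fixed $(\mathbf{x},\mathbf{y}) \in \mathcal{T}_x \times \mathcal{T}_y$, we select the scalar value $D(\mathbf{x},\mathbf{y})$ that maximizes the integrand. This is the step that requires the most care to justify rigorously (measurable selection, interchange of sup and integral), but it follows in the nonparametric limit because the integrand depends on $D$ only through its value at $(\mathbf{x},\mathbf{y})$.

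Next, I would write down the first-order stationarity condition at the optimum $D^{\diamond}$. By hypothesis,
\begin{equation*}
\frac{\partial \tilde{\mathcal{J}}(D)}{\partial D} = g(D,k) = \bigl(D(\mathbf{x},\mathbf{y}) - k(p_{X|Y}(\mathbf{x}|\mathbf{y}))\bigr)\, g_1(D,k).
\end{equation*}
Setting this to zero at $D = D^{\diamond}(\mathbf{x},\mathbf{y})$ and using the assumption $g_1(D,k) \neq 0$, the only stationary value is
\begin{equation*}
D^{\diamond}(\mathbf{x},\mathbf{y}) = k\bigl(p_{X|Y}(\mathbf{x}|\mathbf{y})\bigr).
\end{equation*}
To confirm this stationary point is a (global) maximum rather than a minimum or saddle, I would use the assumption $\partial g(D,k)/\partial D \leq 0$, which is precisely the second-derivative condition $\partial^2 \tilde{\mathcal{J}}/\partial D^2 \leq 0$ ensuring concavity of $\tilde{\mathcal{J}}$ in $D$; this also matches the theorem's standing requirement that $\tilde{\mathcal{J}}$ is concave, so any stationary point is a global maximizer.

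Finally, I would invert $k$ (which is permitted because $k$ is assumed deterministic and invertible) to obtain
\begin{equation*}
k^{-1}\bigl(D^{\diamond}(\mathbf{x},\mathbf{y})\bigr) = p_{X|Y}(\mathbf{x}|\mathbf{y}),
\end{equation*}
which is exactly the claimed estimator \eqref{eq:bottom_up_posterior_estimator}. The main obstacle, as noted, is the pointwise-maximization justification; once that is granted, the rest is a short first-order computation plus invocation of the sign hypotheses on $g_1$ and $\partial g/\partial D$. A minor subtlety is that the proof only constructs a $\tilde{\mathcal{J}}$ up to an additive function independent of $D$, but this does not affect either the maximizer or the estimator, so it can be absorbed into the statement without loss of generality.
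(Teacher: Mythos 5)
Your proposal is correct and follows essentially the same route as the paper's proof: impose the first-order condition on the integrand, use $g_1(D,k)\neq 0$ to isolate $D^{\diamond}=k(p_{X|Y})$, and invoke $\partial g/\partial D\leq 0$ to certify that the stationary point is a maximum, then invert $k$. Your added remarks on the pointwise reduction of the functional optimization and on the additive constant of integration are more careful than the paper's treatment, but they do not change the argument.
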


The proposed estimator leverages a discriminative formulation to estimate the density ratio corresponding to the posterior probability. Theorem \ref{theorem:Bottom-up} comprises two DOFs in the objective function design. 
The former is the choice of $k(\cdot)$, since the discriminator estimates an invertible transformation of the posterior density. 
We noticed that the classifier performs better when $k(\cdot)$ is chosen to resemble particular activation functions. 
The second DOF, represented by $g_1(\cdot)$, is a rearrangement term that modifies the result of the integration of $g(\cdot)$ with respect to $D$. The exploitation of such a DOF allows to attain different objective functions even when imposing the same optimal discriminator's output. Theorems \ref{theorem:top_down} and \ref{theorem:Bottom-up} have different advantages. The benefit of Theorem \ref{theorem:top_down} is its simple applicability, that must, however, begin with the selection of the generator function $f$. Theorem \ref{theorem:Bottom-up} has the advantage of guiding the objective function design without relying on existing $f$-divergences. \\
Corollary \ref{corollary:bottom-up-top-down-value-functions} in Appendix \ref{subsec:appendix_corollary_bottom_up_top_down_value_functions} exploits the DOFs in the choice of $k(\cdot)$ and $g_1(\cdot)$ to obtain the objective functions in Tab. \ref{tab:value functions}, which were previously derived from Theorem \ref{theorem:top_down}.
An interesting result highlighted by Corollary \ref{corollary:bottom-up-top-down-value-functions} is that all the objective functions corresponding to the well-known $f$-divergences in Tab. \ref{tab:f_divergences_table_unsupervised} use the same class of functions $g_1(\cdot) \propto 1/D^{\alpha}\cdot 1/(1-D)^{\beta}$ expressed in \eqref{eq:g_1(D)} in Appendix \ref{subsec:appendix_corollary_bottom_up_top_down_value_functions}. 
This property, highlighted by the class of bottom-up estimators, shows us that $\mathcal{J}_{GAN}(D)$ is the only one using $\beta \neq 0$. 
Thanks to the observation of this peculiarity and driven by curiosity, we develop a new objective function starting from $k_{GAN}(\cdot)$ and $g_{1,GAN}(\cdot)$ but imposing $\beta=0$.

\section{Shifted Log Objective Function and Divergence}
\label{sec:New_f}

In this section, we present a new objective function for classification problems, that we design by using Theorem \ref{theorem:Bottom-up}. Then, we prove that such an objective function corresponds to the variational representation of a novel $f$-divergence, called \textit{shifted log} (SL). We will demonstrate in Section \ref{sec:Results} that such a new objective function achieves the best performance in almost all the classification scenarios discussed.  

\begin{theorem}
\label{theorem:new_f}
Let $X$ and $Y$ be two random vectors with pdfs $p_X(\mathbf{x})$ and $p_Y(\mathbf{y})$, respectively. Assume $Y = H(X)$, with $H(\cdot)$ stochastic function, then let $p_{XY}(\mathbf{x}, \mathbf{y})$ be the joint density. Let $\mathcal{T}_x$ be the support of $X$. Let $p_U(\mathbf{x})$ be a uniform pdf having the same support $\mathcal{T}_x$.  
The maximization of the objective function 
\begin{align}
\label{eq:fNOME_cost_fcn}
    &\mathcal{J}_{SL}(D) = - \E_{(\mathbf{x},\mathbf{y}) \sim p_{XY}(\mathbf{x}, \mathbf{y})} \Bigl[ D(\mathbf{x}, \mathbf{y}) \Bigr] \notag \\
    & + \E_{(\mathbf{x},\mathbf{y}) \sim p_U(\mathbf{x})p_Y(\mathbf{y})} \Bigl[ |\mathcal{T}_x| \Bigl( \log(D(\mathbf{x}, \mathbf{y})) -  D(\mathbf{x}, \mathbf{y}) \Bigr) \Bigr],
\end{align}
leads to the optimal discriminator output 
\begin{equation}
\label{eq:fNAME_D_opt}
    D^{\diamond}(\mathbf{x}, \mathbf{y}) = \argmax_{D} \mathcal{J}_{SL}(D) = \frac{1}{1 + p_{X|Y}(\mathbf{x}, \mathbf{y})} ,
\end{equation}
and the posterior density estimate is computed as
\begin{equation}
\label{eq:posterior_estimator_sl}
    \hat{p}_{X|Y}(\mathbf{x}|\mathbf{y}) = \frac{1 - D^{\diamond}(\mathbf{x},\mathbf{y})}{D^{\diamond}(\mathbf{x},\mathbf{y})} .
\end{equation}
\end{theorem}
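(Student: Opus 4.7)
The plan is to solve the variational optimization problem defining $D^{\diamond}$ by pointwise maximization in the nonparametric limit, following the same strategy used to establish Theorem~\ref{theorem:top_down} and Theorem~\ref{theorem:Bottom-up}. First I would rewrite the two expectations as a single integral over $\mathcal{T}_x \times \mathcal{T}_y$: since $p_U(\mathbf{x}) = 1/|\mathcal{T}_x|$ on its support, the factor $|\mathcal{T}_x|$ in the second term cancels, reducing $\mathcal{J}_{SL}(D)$ to an integral whose integrand has the form $\phi_{\mathbf{x},\mathbf{y}}(D(\mathbf{x},\mathbf{y})) = -p_{XY}(\mathbf{x},\mathbf{y})\,D(\mathbf{x},\mathbf{y}) + p_Y(\mathbf{y})\bigl(\log D(\mathbf{x},\mathbf{y}) - D(\mathbf{x},\mathbf{y})\bigr)$.

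Second, because $\phi_{\mathbf{x},\mathbf{y}}$ depends on $D$ only through its value at $(\mathbf{x},\mathbf{y})$, maximizing the functional reduces to scalar optimization of $\phi_{\mathbf{x},\mathbf{y}}(d)$ in $d>0$ at every point. Differentiating gives $\phi'_{\mathbf{x},\mathbf{y}}(d) = -p_{XY}(\mathbf{x},\mathbf{y}) + p_Y(\mathbf{y})(1/d - 1)$, and setting this to zero yields
\[
    \frac{1}{d} \;=\; 1 + \frac{p_{XY}(\mathbf{x},\mathbf{y})}{p_Y(\mathbf{y})} \;=\; 1 + p_{X|Y}(\mathbf{x}|\mathbf{y}),
\]
which is precisely the claimed $D^{\diamond}(\mathbf{x},\mathbf{y}) = 1/(1 + p_{X|Y}(\mathbf{x}|\mathbf{y}))$. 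The second derivative $\phi''_{\mathbf{x},\mathbf{y}}(d) = -p_Y(\mathbf{y})/d^2 \le 0$ certifies strict concavity on $(0,\infty)$, so this critical point is the unique global maximizer. Inverting the relation $D^{\diamond} = 1/(1 + p_{X|Y})$ then gives $\hat p_{X|Y}(\mathbf{x}|\mathbf{y}) = 1/D^{\diamond} - 1 = (1 - D^{\diamond})/D^{\diamond}$, which is the posterior estimator in~\eqref{eq:posterior_estimator_sl}.

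There is no serious technical obstacle; the only subtle point is that pointwise optimization is legitimate only in the nonparametric limit where $D$ may take any positive value at each $(\mathbf{x},\mathbf{y})$ independently, the same standing assumption made in Section~\ref{sec:topDown} and Lemma~\ref{lemma:convergence}. As a consistency check with the bottom-up framework, one can factor $\phi'_{\mathbf{x},\mathbf{y}}(d) = -\bigl(d - k(p_{X|Y}(\mathbf{x}|\mathbf{y}))\bigr)\cdot p_Y(\mathbf{y})(1 + p_{X|Y}(\mathbf{x}|\mathbf{y}))/d$ with $k(p) = 1/(1+p)$, exhibiting $\mathcal{J}_{SL}$ as an instance of the class in Theorem~\ref{theorem:Bottom-up} that shares its $k(\cdot)$ with the GAN row of Tab.~\ref{tab:value functions}, consistent with the motivating remark preceding the theorem.
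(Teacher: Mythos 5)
Your proof is correct and matches the paper's in substance: both reduce to the same pointwise first-order condition on the integrand, with the identical derivative $-\bigl(p_{XY}(\mathbf{x},\mathbf{y})+p_Y(\mathbf{y})\bigr)+p_Y(\mathbf{y})/D$, the same concavity check $-p_Y(\mathbf{y})/D^2\le 0$, and the same inversion of $k(p)=1/(1+p)$. The only difference is direction: the paper constructs $\mathcal{J}_{SL}$ from the bottom-up recipe of Theorem~\ref{theorem:Bottom-up} (fixing $k$ and $g_1$ with $\alpha=1,\beta=0$ and integrating $g(D,k)$), whereas you differentiate the given objective and verify stationarity directly — the same computation read in the opposite order, as your closing consistency check makes explicit.
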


Corollary \ref{corollary:f_SL} states that $\mathcal{J}_{SL}(D)$ in \eqref{eq:fNOME_cost_fcn} can be obtained from Theorem \ref{theorem:top_down} by using a new $f$-divergence referred to as shifted log.

\begin{corollary}
\label{corollary:f_SL}
    Define the generator function of the shifted log divergence
\begin{equation}
\label{eq:fNOME_f}
    f_{u, SL}(u) = - |\mathcal{T}_x| \log(u + |\mathcal{T}_x|) + K ,
\end{equation}
where $K=|\mathcal{T}_x| \log(1 + |\mathcal{T}_x|)$ is constant. Then, $\mathcal{J}_{SL}(D)$ in \eqref{eq:fNOME_cost_fcn} is the variational representation of $D_{f_{u,SL}}(p_{XY}||p_Up_Y)$.
\end{corollary}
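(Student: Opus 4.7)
The plan is to verify that $f_{u,SL}$ in \eqref{eq:fNOME_f} satisfies the hypotheses of Theorem \ref{theorem:top_down}, and then to show that substituting it into the variational representation \eqref{eq:top_down_variational_representation_posterior} together with a suitable change of variable $T=r(D)$ reproduces $\mathcal{J}_{SL}(D)$ in \eqref{eq:fNOME_cost_fcn}, up to an additive constant that does not affect the argmax.

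First I would check the admissibility conditions. Writing $c=|\mathcal{T}_x|$, a direct computation gives $f_{u,SL}''(u)=c/(u+c)^2>0$ on $\mathbb{R}_+$, so $f_{u,SL}$ is strictly convex and lower semicontinuous; the constant $K=c\log(1+c)$ is chosen precisely so that $f_{u,SL}(1)=0$. Next I would compute the Fenchel conjugate. The stationarity condition $t=f_{u,SL}'(u)=-c/(u+c)$ inverts to $u^{\star}(t)=-c/t-c$ on the effective domain $t<0$, and substituting back yields a closed form of the shape $f_{u,SL}^{*}(t)=-c\log(-t)-ct+C_0$ for an explicit constant $C_0$ that depends on $K$.

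The last step is to fix the change of variable. The natural guess $T=-D$ is confirmed by matching the tightness condition \eqref{eq:T_hat}: since $p_U(\mathbf{x})=1/c$, one has $p_{XY}/(p_Up_Y)=c\,p_{X|Y}$, so $T^{\diamond}=f_{u,SL}'(c\,p_{X|Y})=-1/(1+p_{X|Y})$, which coincides with $-D^{\diamond}$ given by \eqref{eq:fNAME_D_opt}. Substituting $T=-D$ into \eqref{eq:top_down_variational_representation_posterior} turns $-f_{u,SL}^{*}(-D)$ into $c\log D-cD-C_0$, so the resulting objective equals $\mathcal{J}_{SL}(D)-C_0$; since $C_0$ is constant in $D$, the argmax and the induced estimator \eqref{eq:top_down_posterior_estimator} are unaffected, and a short computation verifies that the latter collapses to \eqref{eq:posterior_estimator_sl}. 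The only real obstacle is careful bookkeeping of this additive constant, which appears because enforcing $f_{u,SL}(1)=0$ fixes $K$ independently of the shift that would make the conjugate term match $\mathcal{J}_{SL}$ term by term; no deeper conceptual difficulty arises.
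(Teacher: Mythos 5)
Your proof is correct and follows essentially the same route as the paper's: both hinge on the change of variable $T=-D$ and the Fenchel pair $f_{u,SL}^{*}(t)=-|\mathcal{T}_x|(\log(-t)+t)$ (up to an additive constant absorbed by the normalization $f_{u,SL}(1)=0$). The only difference is direction — the paper reads $f_{u,SL}^{*}$ off from $\mathcal{J}_{SL}$ and Legendre-transforms back to obtain $f_{u,SL}$, whereas you start from $f_{u,SL}$ and transform forward to recover $\mathcal{J}_{SL}$ — which is the same computation run in reverse.
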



\subsection{Remarks on the New Objective Function and $f$-Divergence}
\begin{figure*}[t]
    \centering
    \subfloat[\centering Unsupervised architecture \label{fig:unsupervisedArchitecture}]{{\includegraphics[width=7cm]{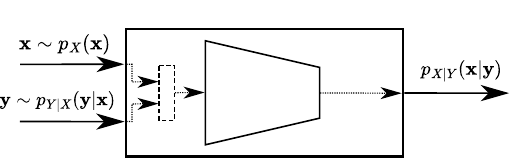} }}%
    \qquad
    \subfloat[\centering Supervised architecture \label{fig:supervisedArchitecture}]{{\includegraphics[width=7cm]{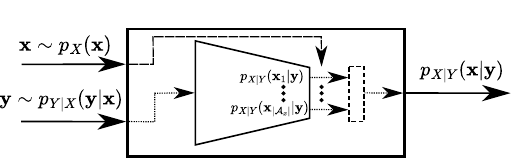} }}%
    \caption{Diagrams of unsupervised and supervised architectures. The thick rectangle delineates the \textit{discriminator} in Fig. \ref{fig:system-model}. The trapezoidal shape represents the neural network architecture.}%
    \label{fig:architectures}
\end{figure*}
Since the proposed objective function in \eqref{eq:fNOME_cost_fcn} is the variational representation of an $f$-divergence (for Corollary \ref{corollary:f_SL}), Lemma \ref{lemma:convergence} ensures the convergence property of its estimate to the true posterior probability in the nonparametric limit.

The supervised version of the SL divergence is 
\begin{equation}
\label{eq:general_fSL}
    f_{SL}(u) = -\log(u + 1) + \log(2) ,
\end{equation}
which is obtained by substituting $|\mathcal{T}_x|=1$ (see Appendix \ref{subsec:Appendix_value_functions}).
Although $f_{SL}$ is obtained in the context of posterior estimation problems, the proposed $f$-divergence can be applied to a broader variety of tasks.
Since $D_{SL}$ is upper-bounded (see Corollary \ref{corollary:upper_lower_bound_newF} in Appendix \ref{subsec:appendix_corollary_upper_lower_bound_SL}), it is a suitable generator function when the optimization problem requires to maximize the variational representation of the $f$-divergence (i.e., a max-max game). For instance, for classification tasks as in \cite{wei2020optimizing} or for representation learning applications as in \cite{hjelm2018learning}. 

\subsection{Comparison Between SL and GAN Divergences}
\label{subsec:Theoretical_comparison_GAN_newf}
    The GAN divergence is known to be highly-performing in a wide variety of tasks \cite{nowozin2016f, hjelm2018learning}. Moreover, the objective functions corresponding to SL and GAN divergences can be obtained from Theorem \ref{theorem:Bottom-up} by choosing the same $k(\cdot)$, but different $g_1(\cdot)$. 
    Corollary \ref{corollary:unsupervised_GAN_SL} compares the concavity of the two objective functions in the neighborhood of the global optimum. 

    \stepcounter{theorem}
     \begin{corollary}
    \label{corollary:unsupervised_GAN_SL}
        Let $\mathcal{J}_{SL}(D)$ be defined as in \eqref{eq:fNOME_cost_fcn}. Let $\mathcal{J}_{GAN}(D)$ be defined as in Tab. \ref{tab:value functions}. Let $D^{\diamond}_N$ be the discriminator output in a neighborhood of $D^{\diamond}$ where $\mathcal{J}_{SL}(D^{\diamond}_N)$ and $\mathcal{J}_{GAN}(D^{\diamond}_N)$ are concave.
        Then,
        \begin{equation}
        \label{eq:unsupervised_GAN_SL}
            \Bigg| \frac{\partial \mathcal{J}_{GAN}(D^{\diamond}_N)}{\partial D } \Bigg| \geq \Bigg| \frac{\partial \mathcal{J}_{SL}(D^{\diamond}_N)}{\partial D } \Bigg| .
        \end{equation}
    \end{corollary}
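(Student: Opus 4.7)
The plan is to reduce the statement to a pointwise algebraic comparison. First, I would extract the integrands: since $|\mathcal{T}_x| p_U(\mathbf{x}) \equiv 1$ on the support $\mathcal{T}_x$, the two objectives admit integrands (with respect to $d\mathbf{x}\, d\mathbf{y}$) given by $\tilde{\mathcal{J}}_{SL}(D) = -p_{XY}D + p_Y(\log D - D)$ and $\tilde{\mathcal{J}}_{GAN}(D) = p_{XY}\log(1-D) + p_Y \log D$. Interpreting $\partial/\partial D$ in the pointwise sense (treating $D$ as a scalar variable at a fixed $(\mathbf{x},\mathbf{y})$, which is precisely how the optimization in Theorem \ref{theorem:Bottom-up} is carried out), reduces the corollary to a scalar inequality that, once verified pointwise, can be lifted back to $\mathcal{J}$ since signs agree on the neighborhood.

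Next, I would differentiate both integrands directly. Writing $r \triangleq p_{X|Y}(\mathbf{x}|\mathbf{y}) = p_{XY}/p_Y$, a short algebraic rearrangement gives
\begin{equation*}
\frac{\partial \tilde{\mathcal{J}}_{SL}(D)}{\partial D} = \frac{p_Y\bigl(1-(1+r)D\bigr)}{D}, \qquad \frac{\partial \tilde{\mathcal{J}}_{GAN}(D)}{\partial D} = \frac{p_Y\bigl(1-(1+r)D\bigr)}{D(1-D)}.
\end{equation*}
Both expressions vanish at $D^{\diamond} = 1/(1+r)$, consistent with Theorem \ref{theorem:new_f} and the GAN row of Tab. \ref{tab:value functions}. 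The crucial step is the exact pointwise identity $\partial_D \tilde{\mathcal{J}}_{GAN}(D) = (1-D)^{-1}\, \partial_D \tilde{\mathcal{J}}_{SL}(D)$ valid for all $D \in (0,1)$. Since $1/(1-D) > 1$ on this interval and the two derivatives therefore share the same sign, taking absolute values yields $|\partial_D \tilde{\mathcal{J}}_{GAN}(D^{\diamond}_N)| \geq |\partial_D \tilde{\mathcal{J}}_{SL}(D^{\diamond}_N)|$ for any $D^{\diamond}_N \in (0,1)$, which is the claimed inequality.

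Finally, I would verify that the concavity hypothesis is essentially vacuous. Differentiating once more gives $\partial^2_D \tilde{\mathcal{J}}_{SL} = -p_Y/D^2 < 0$ and $\partial^2_D \tilde{\mathcal{J}}_{GAN} = -p_{XY}/(1-D)^2 - p_Y/D^2 < 0$ throughout $D \in (0,1)$, so concavity holds globally rather than only in a restricted neighborhood of $D^{\diamond}$. The hypothesis thus introduces no additional constraint and the inequality in fact holds on all of $(0,1)$.

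The proof is essentially a direct computation, so there is no deep obstacle. The only genuine subtlety is notational: the statement writes $\partial \mathcal{J}/\partial D$ without formally distinguishing between the functional derivative of $\mathcal{J}$ and the pointwise derivative of its integrand, so I would spell out at the outset that both objectives are optimized pointwise per $(\mathbf{x}, \mathbf{y})$ (as in Theorem \ref{theorem:Bottom-up}), making the scalar-in-$D$ interpretation the natural one and allowing the pointwise inequality to transfer to the integrated functional without loss.
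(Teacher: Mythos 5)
Your proof is correct, and it reaches the paper's conclusion by a cleaner route. The paper also works at the level of the integrands $\tilde{\mathcal{J}}_{SL}$ and $\tilde{\mathcal{J}}_{GAN}$ and computes the same pointwise derivatives, but it then perturbs the optimum as $D^{\diamond}_N = D^{\diamond} + \delta$, approximates $\tfrac{\partial}{\partial D}\tilde{\mathcal{J}}_{GAN}$ by $-\gamma\,(p_{XY}+p_Y) + p_Y/D^{\diamond}_N$ with $\gamma = 1/(1-\delta)$, and finishes by a case analysis on the sign of $\delta$ (left versus right derivative at a maximum). Your key step replaces that first-order approximation with the exact identity
\begin{equation*}
\frac{\partial \tilde{\mathcal{J}}_{GAN}(D)}{\partial D} \;=\; \frac{1}{1-D}\,\frac{\partial \tilde{\mathcal{J}}_{SL}(D)}{\partial D}, \qquad D \in (0,1),
\end{equation*}
which immediately gives sign agreement and the magnitude inequality since $1/(1-D) > 1$. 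This buys you three things the paper's argument does not: the inequality holds exactly (no $\approx$), it holds on all of $(0,1)$ rather than only in an infinitesimal neighborhood of $D^{\diamond}$, and the concavity hypothesis is shown to be automatic from the explicit second derivatives. Your closing remark about the pointwise-versus-functional reading of $\partial\mathcal{J}/\partial D$ is also the right caveat: the lift from integrand to integral requires the sign agreement you establish, a point the paper passes over with the looser statement that ``the inequality between the integrands holds when the integrals are computed over the same interval.''
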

    Although the shape of the loss landscape depends on many factors (e.g., the batch size \cite{keskar2016large, chaudhari2019entropy}), sharper maxima attain larger test error \cite{li2018visualizing}. 
    The steepness of the concavity of the objective function in the neighborhood of the global optimum provides insights on the basin of attraction of the point of maximum. Intuitively, a flatter landscape corresponds to a larger basin of attraction of the global optimum, rendering training with the SL divergence better than with the GAN divergence. The results in Fig. \ref{fig:speed_accuracy} and Tab. \ref{tab:speed_accuracy} in Appendix \ref{subsec:appendix_numerical_results} validate the findings of Corollary \ref{corollary:unsupervised_GAN_SL}.
    

\section{Discriminator Architecture}
\label{sec:Architecture}
 
In this section, we discuss the appropriate modifications to the discriminator's architecture to suit our estimators to the classification scenario, where the number of classes is finite. 
The architecture type used differs depending on the alphabet of $X$, referred to as $\mathcal{A}_x$. When $X$ is continuous, we use a structure referred to as \textit{unsupervised architecture}. Differently, when $X$ is discrete, we use a structure referred to as \textit{supervised architecture}.

\subsection{Unsupervised Architecture}

In this setting, the samples $\mathbf{x}$ and $\mathbf{y}$ drawn from the empirical probability distributions $p_{XY}(\mathbf{x}, \mathbf{y})$ and $p_U(\mathbf{x})p_Y(\mathbf{y})$ are concatenated and fed into the discriminator. 
The discriminator output is a scalar, since it is the posterior density function estimate corresponding to the pair $(\mathbf{x}, \mathbf{y})$ given as input.
The discriminator architecture is represented in Fig. \ref{fig:architectures}(a), where the concatenation between the $\mathbf{x}$ and $\mathbf{y}$ realizations is identified by a dashed rectangle. 

\subsection{Supervised Architecture}
The supervised architecture (Fig. \ref{fig:architectures}(b)) introduces in the problem's formulation the constraint that $\mathcal{A}_x$ is a set containing a finite number of elements $\mathcal{A}_x = \left\{ \mathbf{x}_1, \dots, \mathbf{x}_m \right\}$.
This constraint is embedded in the architecture (highlighted by a dashed arrow in Fig. \ref{fig:architectures}(b)), so that the output layer contains one neuron for each sample in $\mathcal{A}_x$.
With this modification, the \textit{i}-th output neuron returns $p_{X|Y}(\mathbf{x}_i | \mathbf{y})$.
Accordingly, the input layer is fed with only the realizations $\mathbf{y}$. 
Notation wise, the discriminator's output is referred to as $\textbf{D}(\mathbf{y})$.
Theorem \ref{theorem:supervised_value_function} shows the modification to the formulation of the objective functions in \eqref{eq:top_down_variational_representation_posterior} when using a supervised architecture. First, we define the notation useful for the theorem statement: $\textbf{D}(\mathbf{y}) = [D(\mathbf{x}_1, \mathbf{y}), \dots, D(\mathbf{x}_m, \mathbf{y})]$ and $\textbf{1}_m(\mathbf{x}_i)=[0, \dots, 0, \underbrace{1}_{i^{th} \>\> pos.}, 0, \dots, 0]^T$.

\begin{theorem}
\label{theorem:supervised_value_function}
    Let $p_X(\mathbf{x})$ and $p_Y(\mathbf{y})$ be pdfs describing the input and output of a stochastic function $H(\cdot)$, respectively. Let $p_X(\mathbf{x}) \triangleq \sum_{i = 1}^m P_X(\mathbf{x}_i) \delta (\mathbf{x} - \mathbf{x}_i)$, where $P_X(\cdot)$ is the probability mass function of $X$. Let $\mathcal{T}_x$ be the support of $p_X(\mathbf{x})$ and $|\mathcal{T}_x|$ its Lebesgue measure.
    Let $p_U(\mathbf{x})$ be the uniform discrete pdf over $\mathcal{T}_x$. Let the discriminator be characterized by a supervised architecture. Then, the objective function in \eqref{eq:top_down_variational_representation_posterior} becomes
    \begin{align}
    \label{eq:supervised_general_value_function}
        \mathcal{J}(D) &= \E_{\mathbf{x} \sim p_X(\mathbf{x})}\Biggl[ \E_{\mathbf{y} \sim p_{Y|X}(\mathbf{y}|\mathbf{x})}\Bigl[ r(\textbf{D}(\mathbf{y})) \textbf{1}_m(\mathbf{x}) \Bigr] \Biggr] \notag \\
        & - \E_{\mathbf{y} \sim p_Y(\mathbf{y})}\Biggl[ \sum_{i=1}^m f^{*} \left(r(D(\mathbf{x}_i,\mathbf{y})) \right) \Biggr] ,
    \end{align}
    where $D(\mathbf{x}_i,\mathbf{y})$ is the i-th component of $\textbf{D}(\mathbf{y})$ and $\textbf{T}(\mathbf{y}) = r(\textbf{D}(\mathbf{y}))$.
\end{theorem}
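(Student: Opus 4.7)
The plan is to start from the variational objective $\mathcal{J}_f(T)$ in \eqref{eq:top_down_variational_representation_posterior}, apply the change of variable $T(\mathbf{x},\mathbf{y}) = r(D(\mathbf{x},\mathbf{y}))$, and then specialize each of the two expectations to the discrete $X$ / supervised-architecture setting. The argument is essentially a substitution plus a reinterpretation of the scalar map $D(\mathbf{x}_i,\mathbf{y})$ as the $i$-th component of the vector output $\mathbf{D}(\mathbf{y})$; no new analytic machinery should be needed.

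For the first term, I would write the joint density as $p_{XY}(\mathbf{x},\mathbf{y}) = p_{Y|X}(\mathbf{y}|\mathbf{x}) \sum_{i=1}^m P_X(\mathbf{x}_i)\,\delta(\mathbf{x}-\mathbf{x}_i)$ and integrate out the Dirac comb in $\mathbf{x}$ to get the nested form $\mathbb{E}_{\mathbf{x} \sim p_X}[\mathbb{E}_{\mathbf{y} \sim p_{Y|X}}[\,r(D(\mathbf{x},\mathbf{y}))\,]]$. The supervised architecture means a single forward pass on $\mathbf{y}$ produces the whole vector $\mathbf{D}(\mathbf{y})=[D(\mathbf{x}_1,\mathbf{y}),\dots,D(\mathbf{x}_m,\mathbf{y})]$, so evaluating $D(\mathbf{x},\mathbf{y})$ at $\mathbf{x}=\mathbf{x}_i$ is exactly the inner product $r(\mathbf{D}(\mathbf{y}))\cdot \mathbf{1}_m(\mathbf{x}_i)$. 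This rewrites the first term in the target form.

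For the second term, I would use the identification $f_u^*(t)=|\mathcal{T}_x|\,f^*(t)$ from the unsupervised-to-supervised generator relation recalled in Section~\ref{subsec:top-down} and Appendix~\ref{subsec:Appendix_value_functions}, which is what produces the $|\mathcal{T}_x|$ factors seen in Tab.~\ref{tab:value functions}. Since $p_U$ is the uniform discrete pmf on the $m$ atoms of $\mathcal{T}_x$, with $|\mathcal{T}_x|=m$ in this setting, one has $|\mathcal{T}_x|\,\mathbb{E}_{\mathbf{x}\sim p_U}[g(\mathbf{x})]=\sum_{i=1}^m g(\mathbf{x}_i)$ for any $g$. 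Factoring $p_U(\mathbf{x})p_Y(\mathbf{y})$, moving the $\mathbf{y}$-expectation outside, and applying this identity with $g(\mathbf{x})=f^*(r(D(\mathbf{x},\mathbf{y})))$ converts the product-marginal expectation into $\mathbb{E}_{\mathbf{y}\sim p_Y}\bigl[\sum_{i=1}^m f^*(r(D(\mathbf{x}_i,\mathbf{y})))\bigr]$, matching the second term of \eqref{eq:supervised_general_value_function}.

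The only real bookkeeping hurdle, and what I expect to be the main (mild) obstacle, is keeping the two parallel conventions straight: on one side a continuous formalism where $p_X$ is written as a mixture of Dirac deltas and $|\mathcal{T}_x|$ is a Lebesgue measure, and on the other a genuinely discrete formalism where $|\mathcal{T}_x|=m$ and $p_U$ is a pmf. Being explicit that the $|\mathcal{T}_x|$ in $f_u^*=|\mathcal{T}_x| f^*$ is the same $m$ that cancels the uniform weight $1/m$ of $p_U$ is the key bookkeeping step, after which combining the two rewritten terms yields \eqref{eq:supervised_general_value_function}.
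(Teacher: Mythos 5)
Your proposal is correct and follows essentially the same route as the paper's proof: both decompose $p_X$ and $p_U$ as Dirac combs (equivalently, use $|\mathcal{T}_x|\,\E_{\mathbf{x}\sim p_U}[g(\mathbf{x})]=\sum_{i=1}^m g(\mathbf{x}_i)$), invoke $f_u^*=|\mathcal{T}_x|f^*$ so that the $|\mathcal{T}_x|$ cancels the uniform weight $1/|\mathcal{T}_x|$, and then reinterpret $D(\mathbf{x}_i,\mathbf{y})$ as the $i$-th component of the vector output $\mathbf{D}(\mathbf{y})$ via the indicator $\mathbf{1}_m(\mathbf{x}_i)$. The bookkeeping point you flag about the dual reading of $|\mathcal{T}_x|$ is exactly the step the paper handles by setting $P_U(\mathbf{x}_i)=1/|\mathcal{T}_x|$ before applying the sifting property.
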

The supervised versions of the objective functions utilized in this paper are listed in Section \ref{subsec:Appendix_value_functions} of the Appendix.\\
For classification problems, the objective function obtained by substituting the KL divergence in Theorem \ref{theorem:supervised_value_function} (see \eqref{eq:kl_value_function_supervised} in Appendix \ref{subsubsec:appendix_KL_obj_fcn}) is exactly the cross-entropy loss. When the Softmax function is applied to the discriminator output (because $D^\diamond(\mathbf{x}, \mathbf{y})=p_{X|Y}(\mathbf{x}|\mathbf{y})$ is a discrete pdf), the expectation over $p_Y(\mathbf{y})$ becomes a constant always equal to $1$. Maximizing \eqref{eq:kl_value_function_supervised} is equivalent to minimizing the negative of \eqref{eq:kl_value_function_supervised}, which is precisely the minimization of the CE. 

\section{Results}
\label{sec:Results}
\setcounter{table}{3}

\begin{table*}
\caption{Classification accuracy on MNIST (M), Fashion MNIST (FM), CIFAR10 (C10), and CIFAR100 (C100). The MobileNetV2 is referred to as MobileNet.} 
  \begin{center}
  \begin{small}
  \begin{sc}
    \begin{tabular}{ c c c c c c c c } 
     \hline
     Dataset & Model & CE & RKL & HD & GAN & P & SL \\
     \hline
     M & Tiny & $\textbf{99.08} \pm 0.06$ & $96.05 \pm 0.25$ & $98.68 \pm 0.05$ & $\textbf{99.08} \pm 0.07$ & $98.89 \pm 0.08$ & $99.03 \pm 0.04$ \\
     \hline
     FM & Tiny & $91.64 \pm 0.09$ & $82.63 \pm 1.78$ & $90.75 \pm 0.13$ & $91.63 \pm 0.10$ & $89.86 \pm 0.67$ & $\textbf{91.83} \pm 0.02$ \\
     \hline
      \multirow{5}{*}{C10} & Tiny & $70.13 \pm 0.05$ & $63.59 \pm 0.34$ & $69.38 \pm 0.28$ & $69.98 \pm 0.15$ & $59.62 \pm 0.45$ & $\textbf{70.87} \pm 0.26$  \\
      & VGG & $93.69 \pm 0.03$ & $84.24 \pm 2.21$ & $93.51 \pm 0.06$ & $93.75 \pm 0.04$ & $84.79 \pm 0.21$ & $\textbf{93.93} \pm 0.08$\\
      & DLA & $95.04 \pm 0.02$ & $90.83 \pm 0.10$ & $94.56 \pm 0.11$ & $95.04 \pm 0.13$ & $91.61 \pm 0.21$ & $\textbf{95.31} \pm 0.09$\\
      & ResNet & $95.39 \pm 0.04$ & $92.88 \pm 0.26$ & $95.15 \pm 0.08$ & $95.24 \pm 0.06$ & $93.78 \pm 0.21$ & $\textbf{95.43} \pm 0.04$\\
      & MobileNet & $92.59 \pm 0.13$ & $83.97 \pm 0.21$ & $91.95 \pm 0.33$ & $92.37 \pm 0.14$ & $84.30 \pm 0.32$ & $\textbf{93.89} \pm 0.15$ \\
      \hline
      \multirow{4}{*}{C100} & VGG & $72.73 \pm 0.30$ & $45.80 \pm 2.86$ & $73.51 \pm 0.03$ & $68.88 \pm 0.20$ & $37.19 \pm 0.66$ & $\textbf{73.61} \pm 0.05$\\
      & DLA & $76.29 \pm 0.43$ & $68.86 \pm 1.17$ & $78.63 \pm 0.14$ & $77.34 \pm 0.22$ & $57.97 \pm 0.07$ & $\textbf{78.65} \pm 0.01$\\
      & ResNet & $\textbf{78.29} \pm 0.18$ & $ 70.68 \pm 0.44$ & $77.59 \pm 0.06$ & $77.43 \pm 0.08$ & $61.12 \pm 0.23$ & $78.03 \pm 0.04$\\
      & MobileNet & $72.61 \pm 0.08$ & $53.17 \pm 0.35$ & $73.00 \pm 0.30$ & $65.66 \pm 0.46$ & $46.00 \pm 0.37$ & $\textbf{74.78} \pm 0.23$ \\
     \hline
    \end{tabular}
    \end{sc}
    \end{small}
    \end{center}
    \vskip -0.1in
    \label{tab:classification_images}
\end{table*}

In this section, we report several numerical results to assess the validity of the methods proposed to estimate the posterior probability density and enable the classification task. The considered scenarios are: classification for image datasets; signal decoding in telecommunications engineering cast into a classification task; posterior probability estimation when $p_X(\mathbf{x})$ is continuous.
The results demonstrate that different $f$-divergences attain different performance. 
We show that the KL divergence (thus the CE) is not necessarily the best choice for classification tasks, and more in general for probability estimation problems. 
We demonstrate that the SL divergence achieves the best performance in almost all the tested contexts. When referring to the performance of any $f$-divergence, we will implicitly imply the performance achieved using the objective function derived using such an $f$-divergence.\\
The first two scenarios are classification problems, therefore we use the supervised formulation of the discriminator architecture. The third scenario comprises two toy cases, where we show that the unsupervised formulation of the proposed estimators works for continuous random vectors $X$. 
Before discussing the numerical results, we briefly describe the details of the code implementation\footnote{Our implementation can be found at \url{https://github.com/tonellolab/discriminative-classification-fDiv}}.

\subsection{Implementation Details}
\label{subsec:implementation details}

\textbf{Supervised Architecture}: For the first scenario (Section \ref{subsubsec:numerical_results_images}), we use convolutional neural networks. 
When referring to \textit{tiny} network, we use a discriminator comprising a small set of convolutional layers (less than 4) followed by a feedforward fully connected part. Besides the tiny network, in the first scenario we utilize various deep network architectures: VGG \cite{DBLP:journals/corr/SimonyanZ14a}, ResNet \cite{resnet}, DLA \cite{yu2018deep}, and MobileNetV2 \cite{sandler2018mobilenetv2}.
The discriminator hyper-parameters slightly vary depending on the dataset tested. The network parameters are updated by using SGD with momentum. The activation function of the last layer depends on the objective function optimized during the training phase.\\
For the second scenario (Section \ref{subsubsec:numerical_results_decoding}), we use fully connected feedforward neural networks.
The architecture used for the decoding scenario comprises two hidden layers with $100$ neurons each. The network weights are updated by using the Adam optimizer \cite{DBLP:journals/corr/KingmaB14}. The LeakyReLU activation function is utilized in all the layers except the last one, where the activation function is chosen based on the objective function. In some cases, the Dropout technique \cite{JMLR:v15:srivastava14a} helps the convergence of the training process. 

\textbf{Unsupervised Architecture}: The discriminator architecture utilized for the unsupervised tasks comprises two hidden layers with $100$ neurons each and the LeakyReLU activation function. 
The activation function of the output layer depends on the objective function used during training. The network weights and biases are updated by using the Adam optimizer. Dropout is used during training.

\subsection{Image Datasets Classification}
\label{subsubsec:numerical_results_images}
The first scenario tackled is the classification of image datasets. The objective functions performance is tested for the MNIST \cite{lecun1998gradient}, Fashion MNIST \cite{xiao2017fashion}, CIFAR10, and CIFAR100 \cite{krizhevsky2009learning} datasets. A more detailed description of the datasets can be found in Appendix \ref{subsec:appendix_numerical_results}. 
We compare the classification accuracy of the supervised versions of the objective functions in Tab. \ref{tab:value functions} and in \eqref{eq:fNOME_cost_fcn}, which are all reported in Appendix \ref{subsec:Appendix_value_functions}. To improve the training procedure, we apply data augmentation on the CIFAR datasets by randomly cropping and flipping the images. The learning rate is initially set to $0.1$ and then we use a cosine annealing scheduler \cite{loshchilov2016sgdr} to modify its value during the $200$ epochs of training.  
\begin{figure}[ht]
	\centerline{\includegraphics[width=0.9\columnwidth]{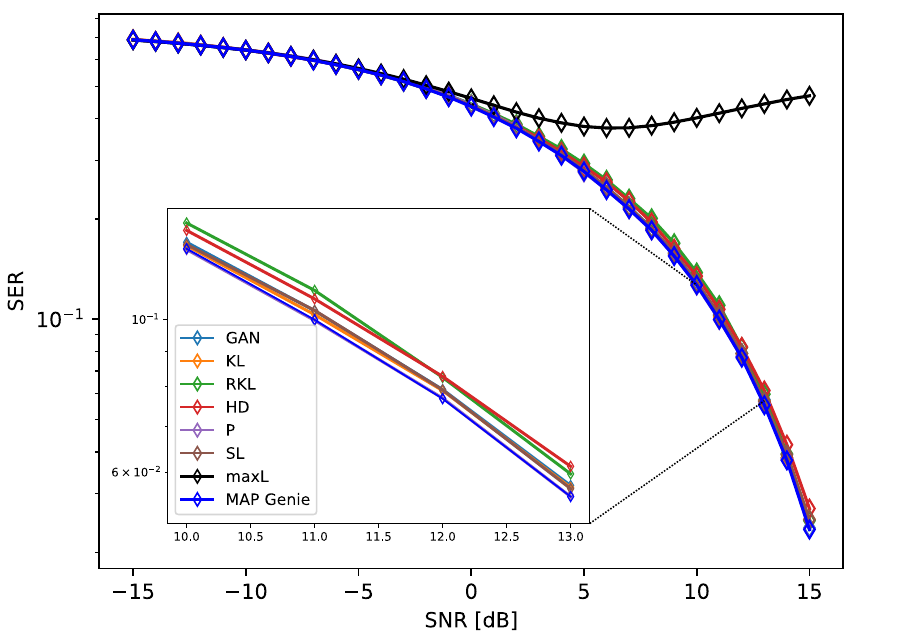}}
	\caption{SER achieved by using a 4-PAM modulation over a nonlinear communication channel.}
	\label{fig:4-PAM}
\end{figure}
\begin{figure*}[ht]
	\centerline{\includegraphics[width=\textwidth]{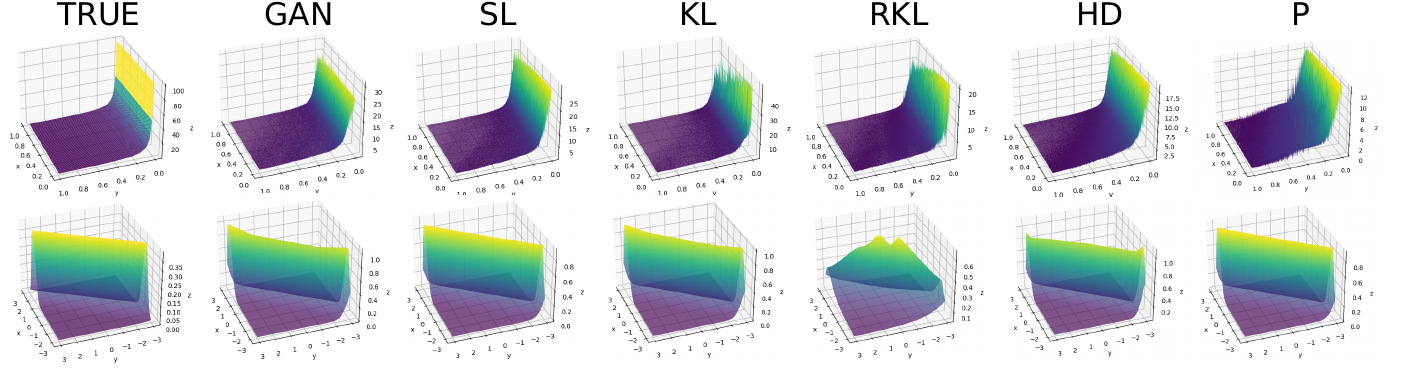}}
	\caption{Continuous posterior density estimation for various $f$-divergences. The results of the Exponential task is represented in the upper row, while the outcomes of the Gaussian task are depicted in the lower row. The true posterior density is the first plot of each row.}
	\label{fig:exponential_gaussian}
\end{figure*}
We compute the mean accuracy and its standard deviation for each dataset and $f$-divergence by running the code over multiple random seeds. 
The classification accuracy results displayed in Tab. \ref{tab:classification_images} (and in the extended version comprising more network architectures in Tab. \ref{tab:classification_images_appendix} in Appendix \ref{subsec:appendix_numerical_results}) confirm that each generator function has a different impact on the neural network training, as also shown in \cite{nowozin2016f, wei2020optimizing}. 
The SL divergence attains the highest classification accuracy for almost all the architectures tested, showing its effectiveness compared to the other divergences and its stable behavior over different datasets and architectures. In particular, the SL divergence attains better performance than the CE, which corresponds to the state-of-the-art approach for image classification tasks. Furthermore, in the few scenarios where the SL divergence does not achieve the best performance, it achieves the second best (only in one case the third best, in Table \ref{tab:classification_images_appendix} in Appendix \ref{subsec:appendix_numerical_results}) performance, with an accuracy close to the optimal one (see Tables \ref{tab:classification_images}, \ref{tab:classification_images_appendix}). The CE is, on average, the second-best objective function.
Conversely, the Pearson $\chi^2$ performs the worst in almost all the scenarios. Moreover, except from the SL divergence and the CE, the performance of the other $f$-divergences is more architecture dependent. For instance, as reported in Tab. \ref{tab:classification_images_appendix} in Appendix \ref{subsec:appendix_numerical_results}, the PreActResNet \cite{he2016identity} attains the highest accuracy when it is trained with the GAN-based objective function. 
The choice of the architecture often depends on the goal of the classification algorithm. For embedded systems, light architectures are used. Therefore, the MobileNetV2 is an option \cite{chiu2020mobilenet}. In such a case, the SL divergence obtains an accuracy $1.5/2\%$ higher than the CE, which makes the SL the preferred choice for the network's training.
Additional numerical results on other network architectures and on the speed of convergence of the training phase are reported in Appendix \ref{subsec:appendix_numerical_results} for space limitations. The speed of convergence analysis (Fig. \ref{fig:speed_accuracy}) demonstrates that the training with the SL divergence leads to a faster convergence to the optimum discriminator w.r.t. the GAN divergence, as stated in Corollary \ref{corollary:unsupervised_GAN_SL}. 

\subsection{Signal Decoding}
\label{subsubsec:numerical_results_decoding}
The second scenario is the decoding problem. Decoding a sequence of received bits is crucial in a telecommunication system. In some cases, when the communication channel is known, the optimal decoding technique is also known \cite{proakis2007fundamentals}. However, the communication channel is generally unknown, and DL-based techniques can be used to learn it \cite{Mehran2019Deep}. By knowing that the optimal decoding criterion is the posterior probability maximization, we demonstrate that the proposed MAP approach solves the decoding problem and that the supervised version of $\mathcal{J}_{SL}(D)$ (see \eqref{eq:supervised_SL_value_function} in Appendix \ref{subsubsec:appendix_SL_obj_fcn}) achieves optimal performance. 
We consider a 4-PAM (i.e., pulse amplitude modulation with four symbols) over a nonlinear channel with additive Gaussian noise. In particular, given the symbol at time instant $t$ (referred to as $x_t$), we obtain the channel output as $y_t = sgn(x_t)\sqrt{|x_t|} + n_t$, where $n_t$ is the Gaussian noise and $sgn(\cdot)$ is the sign function.
We show the symbol error rate (SER) behavior achieved by the proposed posterior estimators when varying the signal-to-noise ratio (SNR) in Fig. \ref{fig:4-PAM}. The estimators proposed are compared with the results of the max likelihood estimator (referred to as maxL) and the MAP Genie estimators. The MAP Genie estimator uses the knowledge of the channel nonlinearity to decode the received sequence of bits.
The proposed list of posterior probability estimators performs better than the max likelihood estimator, achieving accuracy close to the optimal MAP Genie estimator. In Appendix \ref{subsec:appendix_numerical_results}, other scenarios of decoding tasks are reported.  

\subsection{Continuous Posterior Estimation}
This section considers two toy examples for the case $|\Omega| \geq |\mathbb{R}|$. The comparison between the closed-form of the posterior distribution and the discriminator estimate is showed in Fig. \ref{fig:exponential_gaussian}. The discriminator prediction is obtained by training a tiny neural network with the objective functions reported in Tab. \ref{tab:value functions} and in \eqref{eq:fNOME_cost_fcn}. The closed-form posterior distribution (referred to as \textit{true} in Fig. \ref{fig:exponential_gaussian}) is the first element of each row.

\textbf{Exponential task}. In the first toy example, we define the model $Y= X+N$, where $X$ and $N$ are independent Exponential random variables. Therefore, $Y$ is a Gamma distribution \cite{durrett2019probability}. 
The closed-form posterior probability can be expressed as in \eqref{eq:posterior_closed_form_exponential} (see Appendix \ref{subsec:appendix closed form exponential} for the formula and proof). Similarly to the previous numerical results, different $f$-divergences lead to diverse estimates of the posterior density. The objective functions corresponding to the GAN, SL, and HD divergences attain better estimates w.r.t. the others. For a fixed $y$, in fact, the posterior density value is constant over $x$.

\textbf{Gaussian task}. In the second toy example, we consider the model $Y = X+N$, where $X$, $N$ are independent Gaussian random variables. Thus, $Y$ is a Gaussian distribution. The posterior density expression is reported in Appendix \ref{subsec:appendix closed form gaussian} (see \eqref{eq:posterior_closed_form_gaussian}). The objective functions corresponding to the SL, KL, and P divergences attain better estimates w.r.t. the others, since the estimate attains the desired saddle shape.


\section{Conclusions}
\label{sec:Conclusions}

In this paper, we proposed a new MAP perspective for supervised classification problems. 
We have proposed to use a discriminative formulation to express the posterior probability density, and we have derived two classes of estimators to estimate it.
From them, we extracted a list of posterior probability estimators and compared them with the notorious cross-entropy minimization approach. 
Numerical results on different scenarios demonstrate the effectiveness of the presented estimators and that the proposed SL divergence achieves the highest classification accuracy in almost all the scenarios.
Additionally, we show that the proposed posterior probability estimators work for the general case of continuous a priori probabilities, for which we design a specific neural network architecture.

\section*{Acknowledgements}
We thank Nunzio Alexandro Letizia for the precious insights and the fruitful discussions. We also thank the anonymous ICML 2024
reviewers for their detailed and helpful feedback.

\section*{Impact Statement}
This paper presents work whose goal is to advance the field of Machine Learning. There are many potential societal consequences of our work, none of which we feel must be specifically highlighted here.


\bibliography{main_icml2024}
\bibliographystyle{icml2024}

\newpage
\appendix
\onecolumn
\section{Appendix: Objective Functions Used in the Experiments}
\label{subsec:Appendix_value_functions}

\setcounter{table}{1}

\begin{table}
\caption{Unsupervised $f$-divergences table. The corresponding $f$-divergences are: Kullback-Leibler, Reverse Kullback-Leibler, squared Hellinger distance, GAN, and Pearson $\chi^2$.} 
\centering
\vskip 0.15in
  \begin{center}
  \begin{small}
  \begin{sc}
    \begin{tabular}{ c c c } 
     \hline
     Name & $f_{u}(u)$ & $f_{u}^{*}(t)$ \\
     \hline
      KL & $u \log\left(\frac{u}{|\mathcal{T}_x|}\right)$ & $|\mathcal{T}_x|\exp{(t-1)}$  \\
      RKL & $-|\mathcal{T}_x|\log(u)$ & $-|\mathcal{T}_x|(1 + \log(-t))$  \\
      HD & $(\sqrt{u} - \sqrt{|\mathcal{T}_x|})^2$ & $|\mathcal{T}_x|\frac{t}{1-t}$\\
      GAN & $u \log(u) - (u+|\mathcal{T}_x|)\log(u+|\mathcal{T}_x|)$ & $-|\mathcal{T}_x|\log(1-\exp{(t)})$ \\
      P & $\frac{1}{|\mathcal{T}_x|}(u-|\mathcal{T}_x|)^2$ & $|\mathcal{T}_x| \left( \frac{1}{4}t^2 + t \right) $ \\
     \hline
    \end{tabular}
    \end{sc}
    \end{small}
    \end{center}
    \vskip -0.1in
    \label{tab:f_divergences_table_unsupervised}
\end{table}

\begin{table}
\caption{$f$-divergences table. The corresponding $f$-divergences are: Kullback-Leibler, Reverse Kullback-Leibler, squared Hellinger distance, GAN, and Pearson $\chi^2$.} 
\centering
\vskip 0.15in
  \begin{center}
  \begin{small}
  \begin{sc}
    \begin{tabular}{ c c c } 
     \hline
     Name & $f(u)$ & $f^{*}(t)$ \\
     \hline
      KL & $u \log(u)$ & $\exp{(t-1)}$  \\
      RKL & $-\log(u)$ & $-1- \log(-t)$  \\
      HD & $(\sqrt{u} -1)^2$ & $\frac{t}{1-t}$\\
      GAN & $u \log(u) - (u+1)\log(u+1)$ & $-\log(1-\exp{(t)})$ \\
      P & $(u-1)^2$ & $\frac{1}{4}t^2 + t$ \\
     \hline
    \end{tabular}
    \end{sc}
    \end{small}
    \end{center}
    \vskip -0.1in
    \label{tab:f_divergences_table}
\end{table}

The unsupervised and supervised versions of the objective functions used to achieve the results showed in Section \ref{sec:Results} are reported in this section. The training part consists in the alternation of two phases. In the former phase, the network is fed with $N$ realizations of the joint distribution $p_{XY}(\mathbf{x}, \mathbf{y})$ to compute the first term of the objective function. In the latter phase, the type of architecture defines the procedure to compute the second term of the objective function. The unsupervised architecture is fed with $N$ samples drawn from $p_U(\mathbf{x})p_Y(\mathbf{y})$. The supervised architecture is fed with $N$ samples drawn from $p_Y(\mathbf{y})$ (see Theorem \ref{theorem:supervised_value_function}).\\ 
During the test part, the network is fed with the samples drawn from the joint distribution ($(\mathbf{x}, \mathbf{y}) \sim p_{XY}(\mathbf{x}, \mathbf{y})$), and the posterior probability density estimate is obtained as in \eqref{eq:top_down_posterior_estimator} or \eqref{eq:bottom_up_posterior_estimator}.\\ 
We report the objective functions derived from well-known $f$-divergences. Therefore, we first report the notorious $f$-divergences in Table \ref{tab:f_divergences_table}, and their unsupervised version in Table \ref{tab:f_divergences_table_unsupervised}. These tables do not contain the constant terms that render $f_u(1)=0$ or $f(1)=0$, as their presence do not affect the optimization of the derived objective functions (see the proof of Lemma \ref{lemma:f_u_valid_divergence}). 
Let $f_{u}$ and $f_{u}^{*}$ the unsupervised generator function and its Fenchel conjugate in Table \ref{tab:f_divergences_table_unsupervised}, respectively. Differently, $f$ and $f^{*}$ are the supervised generator function and its Fenchel conjugate in Table \ref{tab:f_divergences_table}, respectively.
Then, 
\begin{equation}
\label{eq:unsupervised_f_def}
    f_{u}^{*}(t) = |\mathcal{T}_x| f^{*}(t),
\end{equation}
where $|\mathcal{T}_x|$ is the Lebesgue measure of the support of $p_X(\mathbf{x})$.
Accordingly, 
\begin{equation}
    f_{u}(u) = \sup_{t \in dom_{f_{u}^{*}}} \left\{ ut - f_{u}^{*}(t) \right\} .
\end{equation}
Vice versa, the supervised version of $f_u$ and $f_u^*$ can be easily attained by substituting $|\mathcal{T}_x|=1$ in the unsupervised formulation. \\
The usage of \eqref{eq:unsupervised_f_def} in Theorem \ref{theorem:top_down} is needed because $|\mathcal{T}_x|$ counterbalances the effect of $p_U(\mathbf{x})$ to obtain the pdf ratio equivalent to the posterior density $p_{X|Y}(\mathbf{x}| \mathbf{y}) = p_{XY}(\mathbf{x}, \mathbf{y})/p_Y(\mathbf{y})$. In fact, $|\mathcal{T}_x|$ is included in the expectation term computed over $p_U(\mathbf{x})$.
In Lemma \ref{lemma:f_u_valid_divergence}, we prove that $f_u$ is the generator function of a valid $f$-divergence.

\textbf{Note}: The notations \textit{supervised} and \textit{unsupervised} refer to the discriminator architecture. The former one is the typical supervised classification architecture, while the latter one does not use $X$ as the set of labels, but as an additional input. 
The terminology we use to distinguish between the two different versions of $f$-divergences ($f$ and $f_u$) is a consequence of the architecture's notation. The unsupervised architecture, in fact, is trained using objective functions defined with $f_u$ (see Theorem \ref{theorem:top_down} and Corollary \ref{corollary:f_SL}), while the supervised architecture is trained by using objective functions based on $f$ (see Theorem \ref{theorem:supervised_value_function}). 

We recall the notation used in the objective functions: $\textbf{D}(\mathbf{y}) = [D(\mathbf{x}_1, \mathbf{y}), \dots, D(\mathbf{x}_m, \mathbf{y})]$ is a row vector, $\textbf{1}_m = [1, \dots, 1]^T$ is a column vector and $\textbf{1}_m(\mathbf{x}_i)=[0, \dots, 0, \underbrace{1}_{i^{th} \>\> pos.}, 0, \dots, 0]^T$ is a column vector.

\subsection{KL-Based Objective Functions}
\label{subsubsec:appendix_KL_obj_fcn}
\textbf{Kullback-Leibler Divergence}: The variational representation of the KL divergence is achieved substituting $f_{u,KL}^{*}$ listed in Table \ref{tab:f_divergences_table_unsupervised} in \eqref{eq:top_down_variational_representation_posterior}. This leads to the unsupervised objective function
\begin{align}
\label{eq:kl_value_function}
    \mathcal{J}_{KL}(D) &= \E_{(\mathbf{x}, \mathbf{y}) \sim p_{XY}(\mathbf{x}, \mathbf{y})} \Bigl[ \log(D(\mathbf{x}, \mathbf{y})) \Bigr] - \E_{(\mathbf{x}, \mathbf{y}) \sim p_U(\mathbf{x})p_Y(\mathbf{y})} \Bigl[ |\mathcal{T}_x| D(\mathbf{x}, \mathbf{y}) \Bigr] + 1 ,
\end{align}
where $T^{\diamond}(\mathbf{x}, \mathbf{y}) = \log(\frac{p_{XY}(\mathbf{x}, \mathbf{y})}{|\mathcal{T}_x|p_U(\mathbf{x})p_Y(\mathbf{y})}) + 1$, $p_U(\mathbf{x}) = \frac{1}{|\mathcal{T}_x|}$, and $D^{\diamond}(\mathbf{x}, \mathbf{y})= \frac{p_{XY}(\mathbf{x}, \mathbf{y})}{p_Y(\mathbf{y})}$. 
The supervised version of the objective function in \eqref{eq:kl_value_function} is attained by using Theorem \ref{theorem:supervised_value_function}:
\begin{align}
\label{eq:kl_value_function_supervised}
    \mathcal{J}_{KL}(D) &= \E_{\mathbf{x} \sim p_{X}(\mathbf{x})} \Biggl[ \E_{\mathbf{y} \sim p_{Y|X}(\mathbf{y}| \mathbf{x})} \Bigl[ \log(\textbf{D}(\mathbf{y})) \textbf{1}_m(\mathbf{x})  \Bigr] \Biggr] - \E_{ \mathbf{y} \sim p_Y(\mathbf{y})} \Bigl[ \textbf{D}(\mathbf{y}) \textbf{1}_m \Bigr] .
\end{align}
When the last layer of the supervised discriminator utilizes the softmax activation function (i.e., when the output is normalized to be a discrete probability density function), then the second term in \eqref{eq:kl_value_function_supervised} is always equal to 1. 
Thus, the maximization of \eqref{eq:kl_value_function_supervised} exactly corresponds to the minimization of the KL divergence in \eqref{eq:DKL definition}, and therefore to the minimization of the cross-entropy.\\
Interestingly, the more general formulation in \eqref{eq:kl_value_function_supervised} allows the usage of different activation functions in the last layer, with the only requirement that the discriminator's output is constrained to assume positive values (e.g., softplus). \\

\subsection{RKL-Based Objective Functions}
\label{subsubsec:appendix_RKL_obj_fcn}
\textbf{Reverse Kullback-Leibler Divergence}: Theorem \ref{theorem:top_down} leads to the variational representation of the reverse KL divergence, when substituting $f_{u,RKL}^{*}$ listed in Table \ref{tab:f_divergences_table_unsupervised} in \eqref{eq:top_down_variational_representation_posterior} 
\begin{align}
\label{eq:reverseKL_value_function}
    \mathcal{J}_{RKL}(D) &= \E_{(\mathbf{x}, \mathbf{y}) \sim p_{XY}(\mathbf{x}, \mathbf{y})} \Bigl[ - D(\mathbf{x}, \mathbf{y}) \Bigr] + \E_{(\mathbf{x}, \mathbf{y}) \sim p_U(\mathbf{x})p_Y(\mathbf{y})} \Bigl[ |\mathcal{T}_x| \log(D(\mathbf{x}, \mathbf{y})) \Bigr] ,
\end{align}
where $T^{\diamond}(\mathbf{x}, \mathbf{y}) = -\frac{|\mathcal{T}_x|p_U(\mathbf{x})p_Y(\mathbf{y})}{p_{XY}(\mathbf{x}, \mathbf{y})}$ and $D^{\diamond}(\mathbf{x}, \mathbf{y}) = \frac{p_Y(\mathbf{y})}{p_{XY}(\mathbf{x}, \mathbf{y})}$. 
The supervised version of the objective function in \eqref{eq:reverseKL_value_function} is obtained by using Theorem \ref{theorem:supervised_value_function}.
\begin{align}
\label{eq:supervised_reverseKL_value_fucntion}
    \mathcal{J}_{RKL}(D) &= \E_{\mathbf{x} \sim p_{X}(\mathbf{x})} \Biggl[ \E_{\mathbf{y} \sim p_{Y|X}(\mathbf{y}| \mathbf{x})} \Bigl[ - \textbf{D}(\mathbf{y}) \textbf{1}_m(\mathbf{x})  \Bigr] \Biggr] + \E_{ \mathbf{y} \sim p_Y(\mathbf{y})} \Bigl[ \log(\textbf{D}(\mathbf{y})) \textbf{1}_m \Bigr] . \\ \notag
\end{align}

\subsection{HD-Based Objective Functions}
\label{subsubsec:appendix_HD_obj_fcn}
\textbf{Hellinger Squared Distance}: Theorem \ref{theorem:top_down} leads to the variational representation of the Hellinger squared distance, when substituting $f_{u,HD}^{*}$ listed in Table \ref{tab:f_divergences_table_unsupervised} in \eqref{eq:top_down_variational_representation_posterior}
\begin{align}
\label{eq:HD_value_function}
    \mathcal{J}_{HD}(D) &= \E_{(\mathbf{x}, \mathbf{y}) \sim p_{XY}(\mathbf{x}, \mathbf{y})} \Bigl[- \sqrt{D(\mathbf{x}, \mathbf{y})} \Bigr] - \E_{(\mathbf{x}, \mathbf{y}) \sim p_U(\mathbf{x})p_Y(\mathbf{y})} \Biggl[ |\mathcal{T}_x| \frac{1}{\sqrt{D(\mathbf{x}, \mathbf{y})}} \Biggr] ,
\end{align}
where $T^{\diamond}(\mathbf{x}, \mathbf{y}) = \left( \sqrt{\frac{p_{XY}(\mathbf{x}, \mathbf{y})}{|\mathcal{T}_x|p_U(\mathbf{x})p_Y(\mathbf{y})}}  - 1 \right) \sqrt{\frac{|\mathcal{T}_x|p_U(\mathbf{x})p_Y(\mathbf{y})}{p_{XY}(\mathbf{x}, \mathbf{y})}}$, and $D^{\diamond}(\mathbf{x}, \mathbf{y}) = \frac{p_Y(\mathbf{y})}{p_{XY}(\mathbf{x}, \mathbf{y})}$.
The supervised implementation of the objective function in \eqref{eq:HD_value_function} is achieved by using Theorem \ref{theorem:supervised_value_function}
\begin{align}
    \label{eq:supervised_HD_value_function}
    \mathcal{J}_{HD}(D) &= \E_{\mathbf{x} \sim p_{X}(\mathbf{x})} \Biggl[ \E_{\mathbf{y} \sim p_{Y|X}(\mathbf{y}| \mathbf{x})} \Bigl[ -\sqrt{\textbf{D}(\mathbf{y})} \textbf{1}_m(\mathbf{x})  \Bigr] \Biggr] - \E_{ \mathbf{y} \sim p_Y(\mathbf{y})} \Biggl[ \frac{1}{\sqrt{\textbf{D}(\mathbf{y})}} \textbf{1}_m \Biggr] . \\ \notag
\end{align}

\subsection{GAN-Based Objective Functions}
\label{subsubsec:appendix_GAN_obj_fcn}
\textbf{GAN}: Theorem \ref{theorem:top_down} leads to the variational representation of the GAN divergence, when substituting $f_{u,GAN}^{*}$ listed in Table \ref{tab:f_divergences_table_unsupervised} in \eqref{eq:top_down_variational_representation_posterior}
\begin{align}
\label{eq:GAN_value_function}
    \mathcal{J}_{GAN}(D) &= \E_{(\mathbf{x}, \mathbf{y}) \sim p_{XY}(\mathbf{x}, \mathbf{y})} \Bigl[\log(1 - D(\mathbf{x}, \mathbf{y}) )\Bigr] + \E_{(\mathbf{x}, \mathbf{y}) \sim p_U(\mathbf{x})p_Y(\mathbf{y})} \Bigl[ |\mathcal{T}_x| \log(D(\mathbf{x}, \mathbf{y})) \Bigr] ,
\end{align}
where $T^{\diamond}(\mathbf{x}, \mathbf{y}) = \log{\left( \frac{p_{XY}(\mathbf{x}, \mathbf{y})}{p_{XY}(\mathbf{x}, \mathbf{y}) + |\mathcal{T}_x|p_U(\mathbf{x})p_Y(\mathbf{y})} \right)}$ and $D^{\diamond}(\mathbf{x}, \mathbf{y})= \frac{p_Y(\mathbf{y})}{p_Y(\mathbf{y}) + p_{XY}(\mathbf{x}, \mathbf{y})}$. The supervised implementation of the objective function in \eqref{eq:GAN_value_function} is
\begin{align}
    \label{eq:supervised_GAN_value_function}
    \mathcal{J}_{GAN}(D) &= \E_{\mathbf{x} \sim p_{X}(\mathbf{x})} \Biggl[ \E_{\mathbf{y} \sim p_{Y|X}(\mathbf{y}| \mathbf{x})} \Bigl[ \log(\textbf{1}_m - \textbf{D}(\mathbf{y})) \cdot \textbf{1}_m(\mathbf{x})  \Bigr] \Biggr] \notag + \E_{ \mathbf{y} \sim p_Y(\mathbf{y})} \Bigl[ \log(\textbf{D}(\mathbf{y})) \textbf{1}_m \Bigr] . \\ 
\end{align}

\subsection{P-Based Objective Functions}
\label{subsubsec:appendix_P_obj_fcn}
\textbf{Pearson $\chi^2$}: Theorem \ref{theorem:top_down} leads to the variational representation of the Pearson $\chi^2$ divergence, when substituting $f_{u,P}^{*}$ listed in Table \ref{tab:f_divergences_table_unsupervised} in \eqref{eq:top_down_variational_representation_posterior}
\begin{align}
\label{eq:P_value_function}
    \mathcal{J}_{P}(D) &= \E_{(\mathbf{x}, \mathbf{y}) \sim p_{XY}(\mathbf{x}, \mathbf{y})} \Bigl[ 2 (D(\mathbf{x}, \mathbf{y}) - 1) \Bigr] - \E_{(\mathbf{x}, \mathbf{y}) \sim p_U(\mathbf{x})p_Y(\mathbf{y})} \Biggl[ |\mathcal{T}_x| D^2(\mathbf{x}, \mathbf{y}) \Biggr] ,
\end{align}
where $T^{\diamond}(\mathbf{x}, \mathbf{y}) = 2 \left( \frac{p_{XY}(\mathbf{x}, \mathbf{y})}{|\mathcal{T}_x|p_U(\mathbf{x})p_Y(\mathbf{y})} - 1 \right)$, and $D^{\diamond}(\mathbf{x}, \mathbf{y}) = \frac{p_{XY}(\mathbf{x}, \mathbf{y})}{p_Y(\mathbf{y})}$.
The supervised implementation of the objective function in \eqref{eq:P_value_function} is achieved by using Theorem \ref{theorem:supervised_value_function}
\begin{align}
    \label{eq:supervised_HD_value_function}
    \mathcal{J}_{P}(D) &= \E_{\mathbf{x} \sim p_{X}(\mathbf{x})} \Biggl[ \E_{\mathbf{y} \sim p_{Y|X}(\mathbf{y}| \mathbf{x})} \Bigl[ 2(\textbf{D}(\mathbf{y}) - \textbf{1}_m) \textbf{1}_m(\mathbf{x})  \Bigr] \Biggr] - \E_{ \mathbf{y} \sim p_Y(\mathbf{y})} \Biggl[ \textbf{D}(\mathbf{y})\textbf{D}^T(\mathbf{y}) \Biggr] , \\ \notag
\end{align}
where $\textbf{D}(\mathbf{y})\textbf{D}^T(\mathbf{y}) = \sum_{i=1}^m D^2(\mathbf{x}_i, \mathbf{y})$.  

\subsection{SL-Based Objective Functions}
\label{subsubsec:appendix_SL_obj_fcn}
The unsupervised and supervised versions of the objective function corresponding to the shifted log divergence are discussed in this paragraph. For completeness, we report here the generator function and its Fenchel conjugate.\\ $f_{u, SL}(u) = - |\mathcal{T}_x| \log(u + |\mathcal{T}_x|)$, $f_{u, SL}^{*}(t) = -|\mathcal{T}_x|(\log(-t) + t)$.

\textbf{Shifted log}: Theorem \ref{theorem:Bottom-up} leads to the objective function in \eqref{eq:fNOME_cost_fcn} when 
\begin{align}
    k(p_{X|Y}(\mathbf{x}|\mathbf{y})) &= \frac{|\mathcal{T}_x|p_U(\mathbf{x})p_Y(\mathbf{y})}{|\mathcal{T}_x|p_U(\mathbf{x})p_Y(\mathbf{y}) + p_{XY}(\mathbf{x},\mathbf{y})} \\
    g_1(D,k; 1, 0) & = - \frac{|\mathcal{T}_x|p_U(\mathbf{x})p_Y(\mathbf{y}) + p_{XY}(\mathbf{x}, \mathbf{y})}{D(\mathbf{x}, \mathbf{y})} .
\end{align} 
We report here for completeness the unsupervised objective function in \eqref{eq:fNOME_cost_fcn}
\begin{align}
\label{eq:SL_value_function}
    \mathcal{J}_{SL}(D) & = - \E_{(\mathbf{x},\mathbf{y}) \sim p_{XY}(\mathbf{x}, \mathbf{y})} \Bigl[ D(\mathbf{x}, \mathbf{y}) \Bigr] + \E_{(\mathbf{x},\mathbf{y}) \sim p_U(\mathbf{x})p_Y(\mathbf{y})} \Bigl[ |\mathcal{T}_x| \Bigl( \log(D(\mathbf{x}, \mathbf{y})) -  D(\mathbf{x}, \mathbf{y}) \Bigr) \Bigr] ,
\end{align}
which can be obtained from Theorem \ref{theorem:top_down} with $T^{\diamond}(\mathbf{x}, \mathbf{y}) = -\frac{|\mathcal{T}_x| p_U(\mathbf{x})p_Y(\mathbf{y})}{|\mathcal{T}_x|p_U(\mathbf{x})p_Y(\mathbf{y}) + p_{XY}(\mathbf{x}, \mathbf{y})}$ and $D^{\diamond}(\mathbf{x}, \mathbf{y}) = \frac{p_Y(\mathbf{y})}{p_Y(\mathbf{y}) + p_{XY}(\mathbf{x}, \mathbf{y})}$.
The supervised implementation of the objective function in \eqref{eq:SL_value_function} is achieved by using Theorem \ref{theorem:supervised_value_function}
\begin{align}
    \label{eq:supervised_SL_value_function}
    \mathcal{J}_{SL}(D) & = - \E_{\mathbf{x} \sim p_{X}(\mathbf{x})} \Biggl[ \E_{\mathbf{y} \sim p_{Y|X}(\mathbf{y}| \mathbf{x})} \Bigl[ \textbf{D}(\mathbf{y})^T \textbf{1}_m(\mathbf{x})\Bigr] \Biggr] + \E_{\mathbf{y} \sim p_Y(\mathbf{y})} \Bigl[ \Bigl( \log( \textbf{D}(\mathbf{y})) -  \textbf{D}(\mathbf{y}) \Bigr)^T \textbf{1}_m \Bigr] .
\end{align}

\section{Appendix: Proofs}
\label{subsec:Appendix_proofs}
 \stepcounter{customcounter}
 \stepcounter{customcounter}
 \stepcounter{customcounter}
\subsection{Proof of Lemma B.1}
\label{subsec:appendix_f_star_first}

\begin{lemma}
\label{lemma:f_star_first}
    Let $f: \mathbb{R}_+ \longrightarrow \mathbb{R}$ be the generator function of any $f$-divergence. Then, it holds:
    \begin{equation}
    \label{eq:f_equivalence}
        (f^{*})'(t) = (f^{'})^{-1}(t)
    \end{equation}
\end{lemma}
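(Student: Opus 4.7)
My plan is to start from the variational definition of the Fenchel conjugate in \eqref{eq:fenchel_conjugate} and apply the envelope theorem. Since $f$ is convex (it is a valid generator of an $f$-divergence), for each $t$ in the interior of $\mathrm{dom}_{f^*}$ the supremum $f^*(t) = \sup_{u \in \mathrm{dom}_f}\{ut - f(u)\}$ is attained at some $u^*(t)$. Assuming $f$ is differentiable on the interior of its domain, the first-order optimality condition for the inner maximization yields
\begin{equation}
    \frac{\partial}{\partial u}\bigl(ut - f(u)\bigr)\bigg|_{u = u^*(t)} = t - f'(u^*(t)) = 0,
\end{equation}
so that $t = f'(u^*(t))$. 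Because $f$ is convex, $f'$ is monotone nondecreasing; under the standard assumption that $f$ is strictly convex on the interior of its domain (as is the case for all generators in Tab.~\ref{tab:f_divergences_table}), $f'$ is strictly monotone and hence invertible, giving
\begin{equation}
    u^*(t) = (f')^{-1}(t).
\end{equation}

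Next I would substitute the optimizer back into the conjugate and differentiate: $f^*(t) = u^*(t)\, t - f(u^*(t))$, and by the chain rule
\begin{equation}
    (f^*)'(t) = u^*(t) + t\,\frac{du^*(t)}{dt} - f'(u^*(t))\,\frac{du^*(t)}{dt} = u^*(t) + \bigl(t - f'(u^*(t))\bigr)\frac{du^*(t)}{dt}.
\end{equation}
The first-order condition shows the parenthesized term vanishes, so $(f^*)'(t) = u^*(t) = (f')^{-1}(t)$, which is the desired identity. The differentiation step is precisely the envelope theorem, which is why the derivative of $u^*(t)$ never needs to be computed explicitly.

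The main delicate point is not the computation itself but the hypotheses that justify it: one needs $f$ to be strictly convex and differentiable on the interior of its domain so that $f'$ is a bijection between $\mathrm{int}(\mathrm{dom}_f)$ and $\mathrm{int}(\mathrm{dom}_{f^*})$, and one needs the supremum in the Fenchel conjugate to be attained in this interior. Both hold for the generators used in the paper, so a brief remark pointing this out should suffice. If desired, one could alternatively appeal directly to the standard convex-analytic fact (Legendre duality) that for a closed strictly convex differentiable $f$, the subdifferential correspondence $\partial f^* = (\partial f)^{-1}$ specializes to $(f^*)' = (f')^{-1}$, avoiding the envelope argument entirely.
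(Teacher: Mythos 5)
Your proof is correct and follows essentially the same route as the paper's: impose the first-order optimality condition to get $\hat{u}=(f')^{-1}(t)$, substitute back into the conjugate, differentiate, and observe that the terms involving $\frac{d\hat{u}}{dt}$ cancel. Your explicit remarks on the regularity hypotheses (strict convexity, differentiability, attainment of the supremum in the interior) are a welcome addition that the paper leaves implicit, but they do not change the underlying argument.
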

\begin{proof}
Let us recall the definition of Fenchel conjugate, to report a self-contained proof:
    \begin{equation}
f^*(t) = \sup_{u \in \mathbb{R}} \left\{ ut - f(u) \right\} .
\end{equation}
Then, in order to find $\hat{u}$ that achieves the supremum, we impose 
\begin{equation}
\label{eq:fenchel_derivative_0}
    \frac{\partial}{\partial u} \left\{ ut- f(u) \right\} = 0 ,
\end{equation}
that implies $f^{'}(u) = t$. The condition \eqref{eq:fenchel_derivative_0} can be imposed because $f(\cdot)$ is a convex function. 
Thus, 
\begin{equation}
\label{eq:u_hat}
    \hat{u} = (f^{'})^{-1}(t)
\end{equation}
Then, substituting \eqref{eq:u_hat} in the definition of the fenchel conjugate, it becomes:
\begin{equation}
    f^*(t) = (f^{'})^{-1}(t)t - f((f^{'})^{-1}(t)) .
\end{equation}
Then, by computing the first derivative w.r.t. $t$:
\begin{equation}
    (f^{*})^{'}(t) = ((f^{'})^{-1})^{'}(t)t + (f^{'})^{-1}(t) - \underbrace{f^{'}((f^{'})^{-1}(t))}_{=t} ((f^{'})^{-1})^{'}(t) .
\end{equation}
The first and third terms cancel out, leading to \eqref{eq:f_equivalence}.
\end{proof}

\subsection{Proof of Lemma B.2}
\begin{lemma}
\label{lemma:f_u_valid_divergence}
    Let $f: \mathbb{R}_+ \longrightarrow \mathbb{R}$ be the generator function of any $f$-divergence, $f^*(\cdot)$ its Fenchel conjugate, and $K>0$ a constant. Then, $f^*_u(t) \triangleq K f^*(t)$ is the Fenchel conjugate of a valid $f$-divergence.
\end{lemma}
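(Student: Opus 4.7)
The plan is to establish the lemma in two steps: first, recover a candidate generator whose Fenchel conjugate is $Kf^*$ via the Fenchel--Moreau biconjugate theorem; second, reconcile the normalization condition $f_u(1)=0$ required by the Csiszár definition of an $f$-divergence. Since $f$ is convex and lower-semicontinuous by hypothesis, so is $f^*$, and therefore $Kf^*$ is convex and lsc for any $K>0$. Computing its biconjugate explicitly,
\begin{equation*}
    (Kf^*)^*(u) \;=\; \sup_{t \in \mathrm{dom}_{f^*}} \bigl\{\, ut - Kf^*(t) \,\bigr\} \;=\; K \sup_{t}\bigl\{\tfrac{u}{K}\,t - f^*(t)\bigr\} \;=\; K f^{**}(u/K) \;=\; K f(u/K),
\end{equation*}
where the last equality uses $f^{**}=f$ because $f$ is already convex and lsc. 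This identifies $\hat f_u(u)\triangleq Kf(u/K)$ as a convex, lsc function whose Fenchel conjugate is exactly $Kf^*(t)$.

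The subtle point is that $\hat f_u$ need not satisfy $\hat f_u(1)=0$; in general $\hat f_u(1) = Kf(1/K)$. I would therefore define the corrected generator
\begin{equation*}
    f_u(u) \;\triangleq\; K f(u/K) - K f(1/K),
\end{equation*}
which is convex, lsc, and satisfies $f_u(1)=0$, and thus qualifies as a bona fide Csiszár generator defining a valid $f$-divergence $D_{f_u}$. Since a pure additive constant in a function only shifts its conjugate by the opposite constant, one obtains
\begin{equation*}
    f_u^*(t) \;=\; K f^*(t) + K f(1/K),
\end{equation*}
which coincides with $Kf^*(t)$ up to a single additive constant.

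Finally, I would argue that this additive constant is harmless for every use of $f_u^*$ in the paper. In the variational representation of Eq.~\eqref{eq:top_down_variational_representation_posterior}, replacing $f_u^*$ by $f_u^* + c$ only adds the constant $-c$ to $\mathcal{J}_f(T)$, because the expectation of a constant under any probability measure is that same constant. Hence the maximizer $T^{\diamond}$ and the induced posterior estimator in Eq.~\eqref{eq:top_down_posterior_estimator} are unchanged, which also justifies the remark in Appendix~\ref{subsec:Appendix_value_functions} that the constants enforcing $f(1)=0$ or $f_u(1)=0$ are omitted from Tables~\ref{tab:f_divergences_table} and \ref{tab:f_divergences_table_unsupervised}. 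The main obstacle is exactly this normalization subtlety: the literal identity $f_u^* = K f^*$ holds only before imposing $f_u(1)=0$, so the lemma must be read as the statement that $K f^*$ is the Fenchel conjugate of a valid $f$-divergence generator modulo an inconsequential constant.
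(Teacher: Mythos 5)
Your proof is correct and follows essentially the same route as the paper's: both recover the generator as the (bi)conjugate of $Kf^*$, use the fact that a Fenchel conjugate is automatically convex, and then subtract the constant $f_u(1)$ while verifying that the resulting additive shift of the conjugate is immaterial for the objective function and for the posterior estimate computed via $(f_u^*)'$. The only addition on your side is the explicit closed form $f_u(u)=Kf(u/K)-Kf(1/K)$, which the paper leaves implicit but which makes the normalization subtlety clearer.
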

\begin{proof}
    Firstly, we must prove the convexity of $f_u(u)$. Since $f^*(t)$ is a convex function, then also $f^*_u(t)$ is a convex function because $K$ is a positive constant (if the second derivative of $f^*(t)$ is non-negative, then multiplying it by a positive constant will result in a non-negative function). If $f^*_u(t)$ is a convex function, then also $f_u(u)$ is convex, by definition, because it is computed as the Fenchel conjugate (i.e., the convex conjugate). \\
    Secondly, if $f_u(1)=C$, then $f_u(1)=0$ is achieved by subtracting $C$ to the $f_u(u)$ obtained from the computation of the Fenchel conjugate of $f_u^*(t)$. The subtraction of $C$ to $f_u(u)$ has just a translation effect, not affecting the training process. To prove it, let consider $f_u(u) = f_u^{\bullet}(u)-C$, with $f_u(1)=0$ (thus $f_u^\bullet(1)=C$). 
    Then, 
    \begin{align}
        f^*_u(t)= \sup_u[ut - f_u(u)] = \sup_u[ut-f_u^\bullet(u) + C]
    \end{align}
    The $u$ that maximizes $ut-f_u^\bullet(u) + C$ (referred to as $\hat{u}$) is obtained by imposing the first derivative w.r.t. $u$ equal to zero (similarly to \eqref{eq:fenchel_derivative_0} and \eqref{eq:u_hat} in the proof of Lemma \ref{lemma:f_star_first}). Therefore, $\hat{u}$ is not influenced by $C$, since $C$ disappears when computing the first derivative. Then, the value of $\hat{u}$ is substituted in $ut-f_u^\bullet(u) + C$. Therefore, we obtain $f^*_u(t) = \hat{u}t-f_u^\bullet(\hat{u}) + C$, where the first two terms $\hat{u}t$ and $f_u^\bullet(\hat{u})$ do not depend on $C$. Thus, $C$ becomes a constant in $f_u^*(t)$, which implies that it becomes an additive constant in the objective function (see \eqref{eq:top_down_variational_representation_posterior}), which does not have any effect on the training procedure.
    Finally, $f_u(1)=C$ (thus $f^*_u(t) = \hat{u}t-f_u^\bullet(\hat{u}) + C$) does not impact the estimate of $p_{X|Y}$, since it is computed using $(f_u^*)^{\prime}(\cdot)$, which is not affected by $C$.
\end{proof}

\subsection{Proof of Theorem 3.1}
\label{subsec:appendix_theorem_top_down}
\begin{theoremappendix}
Let $X$ and $Y$ be the random vectors with probability density functions $p_X(\mathbf{x})$ and $p_Y(\mathbf{y})$, respectively. Assume $\mathbf{y} = H(\mathbf{x})$, where $H(\cdot)$ is a stochastic function, then $p_{XY}(\mathbf{x}, \mathbf{y})$ is the joint density. Define $\mathcal{T}_x$ to be the support of $X$ and $p_U(\mathbf{x})$ to be a uniform distribution with support $\mathcal{T}_x$. Let $f_{u}: \mathbb{R}_+ \longrightarrow \mathbb{R}$ be a convex function such that $f_{u}(1)=0$, and $f_{u}^*$ be the Fenchel conjugate of $f_{u}$.
Let $\mathcal{J}_f(T)$ be the objective function defined as 
\begin{align}
\label{eq:top_down_variational_representation_posterior_appendix}
    \mathcal{J}_f(T) &= \E_{(\mathbf{x},\mathbf{y}) \sim p_{XY}(\mathbf{x},\mathbf{y})} \left[ T(\mathbf{x},\mathbf{y}) \right] - \E_{(\mathbf{x},\mathbf{y}) \sim p_U(\mathbf{x})p_Y(\mathbf{y})} \left[ f_{u}^*(T(\mathbf{x},\mathbf{y})) \right] .
\end{align}
Then, 
\begin{equation}
    T^{\diamond}(\mathbf{x},\mathbf{y}) = \argmax_{T \in \mathcal{T}} \mathcal{J}_f(T) 
\end{equation}
leads to the estimation of the posterior density
\begin{equation}
\label{eq:top_down_posterior_estimator_appendix}
    \hat{p}_{X|Y}(\mathbf{x}|\mathbf{y}) = \frac{p_{XY}(\mathbf{x},\mathbf{y})}{p_Y(\mathbf{y})} = \frac{(f_{u}^{*})^{\prime}(T^{\diamond}(\mathbf{x},\mathbf{y}))}{|\mathcal{T}_x|},
\end{equation}
where $T^{\diamond}(\mathbf{x},\mathbf{y})$ is parametrized by an artificial neural network.
\end{theoremappendix}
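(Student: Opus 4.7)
The plan is to recognize $\mathcal{J}_f(T)$ as the variational lower bound of the $f$-divergence $D_{f_u}(p_{XY} \| p_U p_Y)$ and then apply the known tightness condition from \cite{Nguyen2010} (restated as equation~(6) in the excerpt). With $P = p_{XY}$ and $Q = p_U p_Y$, the expression in \eqref{eq:top_down_variational_representation_posterior_appendix} is exactly the right-hand side of that bound, so the supremum over $T \in \mathcal{T}$ is attained at $T^{\diamond}(\mathbf{x},\mathbf{y}) = f_u'\!\left( p_{XY}(\mathbf{x},\mathbf{y})/(p_U(\mathbf{x}) p_Y(\mathbf{y})) \right)$.

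To make the argument self-contained I would first give a short pointwise derivation of this fact: rewrite the objective as
\begin{equation}
\mathcal{J}_f(T) = \int_{\mathcal{T}_x}\int_{\mathcal{T}_y}\Bigl[ T(\mathbf{x},\mathbf{y})\,p_{XY}(\mathbf{x},\mathbf{y}) - f_u^*(T(\mathbf{x},\mathbf{y}))\,p_U(\mathbf{x})p_Y(\mathbf{y})\Bigr] d\mathbf{y}\,d\mathbf{x},
\end{equation}
and, using the convexity of $f_u^*$, maximize the integrand pointwise in $T(\mathbf{x},\mathbf{y})$. Setting the derivative with respect to $T$ to zero yields the stationarity condition
\begin{equation}
(f_u^*)'(T^{\diamond}(\mathbf{x},\mathbf{y}))\, p_U(\mathbf{x}) p_Y(\mathbf{y}) = p_{XY}(\mathbf{x},\mathbf{y}),
\end{equation}
which rearranges to $(f_u^*)'(T^{\diamond}(\mathbf{x},\mathbf{y})) = p_{XY}(\mathbf{x},\mathbf{y}) / (p_U(\mathbf{x}) p_Y(\mathbf{y}))$. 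By Lemma~B.1, $(f_u^*)' = (f_u')^{-1}$, which both justifies the inversion and confirms consistency with $T^{\diamond} = f_u'(p_{XY}/(p_U p_Y))$.

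The final step is to convert the density ratio into the posterior. Because $p_U$ is uniform on $\mathcal{T}_x$, we have $p_U(\mathbf{x}) = 1/|\mathcal{T}_x|$ on the support, so
\begin{equation}
\frac{p_{XY}(\mathbf{x},\mathbf{y})}{p_U(\mathbf{x}) p_Y(\mathbf{y})} = |\mathcal{T}_x|\,\frac{p_{XY}(\mathbf{x},\mathbf{y})}{p_Y(\mathbf{y})} = |\mathcal{T}_x|\, p_{X|Y}(\mathbf{x}|\mathbf{y}).
\end{equation}
Dividing through by $|\mathcal{T}_x|$ gives the claimed identity $\hat{p}_{X|Y}(\mathbf{x}|\mathbf{y}) = (f_u^*)'(T^{\diamond}(\mathbf{x},\mathbf{y}))/|\mathcal{T}_x|$.

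I expect the only genuinely delicate point to be the justification of the pointwise maximization, which relies on strict convexity/differentiability of $f_u^*$ to guarantee a unique maximizer at each $(\mathbf{x},\mathbf{y})$ and on the nonparametric assumption that $T$ can realize any measurable function. Beyond that, identifying the uniform factor $1/|\mathcal{T}_x|$ as the source of the $|\mathcal{T}_x|$ normalization in \eqref{eq:top_down_posterior_estimator_appendix} is a bookkeeping step, and the appeal to Lemma~B.1 handles the inversion between $f_u'$ and $(f_u^*)'$ cleanly.
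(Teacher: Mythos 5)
Your proposal is correct and follows essentially the same route as the paper: both identify the objective as the Nguyen--Wainwright--Jordan variational bound with $P=p_{XY}$, $Q=p_Up_Y$, invoke the tightness condition $T^{\diamond}=f_u'\bigl(p_{XY}/(p_Up_Y)\bigr)$ together with Lemma~B.1 (i.e., $(f_u^*)'=(f_u')^{-1}$), and then use $p_U(\mathbf{x})=1/|\mathcal{T}_x|$ to recover the posterior. Your added pointwise stationarity derivation of $(f_u^*)'(T^{\diamond})\,p_Up_Y=p_{XY}$ is a self-contained justification of the tightness condition the paper simply cites, but it does not change the structure of the argument.
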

\begin{proof} 
From \cite{Nguyen2010}, $T^{\diamond}(\mathbf{x}, \mathbf{y})$ achieved when maximizing \eqref{eq:top_down_variational_representation_posterior_appendix} is
\begin{equation}
\label{eq:t_opt_top_down_appendix}
    T^{\diamond}(\mathbf{x}, \mathbf{y}) = f^{\prime}_u \left( \frac{p_{XY}(\mathbf{x}, \mathbf{y})}{p_U(\mathbf{x})p_Y(\mathbf{y})}\right) ,
\end{equation}
as defined in \eqref{eq:T_hat}.
Thus, \eqref{eq:top_down_posterior_estimator_appendix} is equivalent to
\begin{align}
\label{eq:top_down_proof_first_step_appendix}
    \frac{(f_{u}^{*})^{\prime}(T^{\diamond}(\mathbf{x},\mathbf{y}))}{|\mathcal{T}_x|} &= \frac{(f_u^\prime)^{-1}(T^{\diamond}(\mathbf{x},\mathbf{y}))}{|\mathcal{T}_x|}\\
    \label{eq:top_down_proof_second_step_appendix}
    &= \frac{(f_u^\prime)^{-1} \Bigl(f^{\prime}_u \left( \frac{p_{XY}(\mathbf{x}, \mathbf{y})}{p_U(\mathbf{x})p_Y(\mathbf{y})}\right) \Bigr)}{|\mathcal{T}_x|} \\
    &= \frac{p_{XY}(\mathbf{x}, \mathbf{y})}{|\mathcal{T}_x|p_U(\mathbf{x})p_Y(\mathbf{y})}\\
    &= \frac{p_{XY}(\mathbf{x}, \mathbf{y})}{p_Y(\mathbf{y})} ,
\end{align}
where the equality in \eqref{eq:top_down_proof_first_step_appendix} is proved in Lemma \ref{lemma:f_star_first} in Appendix \ref{subsec:Appendix_proofs}, while \eqref{eq:top_down_proof_second_step_appendix} is obtained by substituting \eqref{eq:t_opt_top_down_appendix} in \eqref{eq:top_down_proof_first_step_appendix}. Since $p_U(\mathbf{x}) = \frac{1}{|\mathcal{T}_x|}$, as it is the uniform probability density function (pdf) over $\mathcal{T}_x$, the thesis follows. From \cite{tonello2022mind}, the usage of the uniform probability density function $p_U(\mathbf{x})$ is fundamental to define the objective function in \eqref{eq:top_down_variational_representation_posterior_appendix}. Its importance derives from the need of the discriminator to be fed with both $X$ and $Y$ realizations.
\end{proof}

\subsection{Proof of Lemma 3.2}
\label{subsec:appendix_lemma_convergence}
\begin{lemmaappendix}
    Let the artificial neural network $D(\cdot) \in \mathcal{D}$ be with enough capacity and training time (i.e., in the nonparametric limit). Assume the gradient ascent update rule $D^{(i+1)} = D^{(i)} + \mu \nabla \mathcal{J}_f(D^{(i)})$ converges to
    \begin{equation}
        D^{\diamond} = \argmax_{D \in \mathcal{D}} \mathcal{J}_f(D) ,
    \end{equation}
    where $\mathcal{J}_f(D)$ is defined as in \eqref{eq:top_down_variational_representation_posterior}, with the change of variable $D = r^{-1}(T)$.
    Then, the difference between the optimal posterior probability and its estimate at iteration $i$ is
    \begin{align}
        p^{\diamond} - p^{(i)} \simeq \frac{1}{|\mathcal{T}_x|} \Bigl( \delta^{(i)} \Bigl[ (f_{u}^{*})^{\prime \prime}(r(D^{(i)})) \Bigr] \Bigr) , \label{eq:gap_optimum_posterior_appendix}
    \end{align}
    where $\delta^{(i)}=r(D^{\diamond}) - r(D^{(i)})$, and $\mu >0$ the learning rate.
    If $D^{\diamond}$ corresponds to the global optimum achieved by using the gradient ascent method, the posterior probability estimator in \eqref{eq:top_down_posterior_estimator} converges to the real value of the posterior density.
\end{lemmaappendix}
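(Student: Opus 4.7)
\medskip

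\textbf{Proof plan for Lemma 3.2.} The plan is to reduce the claim to a one-variable first-order Taylor expansion of $(f_u^*)'$ evaluated at the current and optimal iterates. By Theorem 3.1 (applied with the change of variable $T=r(D)$), the posterior estimate produced by the discriminator at any iterate $D \in \mathcal{D}$ is
\begin{equation*}
    p(D) \;=\; \frac{(f_u^*)'\bigl(r(D)\bigr)}{|\mathcal{T}_x|},
\end{equation*}
so in particular $p^{\diamond}=p(D^{\diamond})$ and $p^{(i)}=p(D^{(i)})$. First I would subtract these two expressions to obtain
\begin{equation*}
    p^{\diamond}-p^{(i)} \;=\; \frac{1}{|\mathcal{T}_x|}\Bigl[(f_u^*)'\bigl(r(D^{\diamond})\bigr)-(f_u^*)'\bigl(r(D^{(i)})\bigr)\Bigr],
\end{equation*}
which is an exact identity with no approximation.

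Next, since $f_u$ is convex and lower-semicontinuous, $f_u^*$ is convex as well, and under the nonparametric-limit smoothness assumption it admits a continuous second derivative on the relevant interior of its domain. I would then expand $(f_u^*)'$ to first order about the point $r(D^{(i)})$:
\begin{equation*}
    (f_u^*)'\bigl(r(D^{\diamond})\bigr) \;=\; (f_u^*)'\bigl(r(D^{(i)})\bigr) \;+\; (f_u^*)''\bigl(r(D^{(i)})\bigr)\,\delta^{(i)} \;+\; o(\delta^{(i)}),
\end{equation*}
with $\delta^{(i)}=r(D^{\diamond})-r(D^{(i)})$. Substituting into the previous identity and dropping the higher-order remainder yields exactly \eqref{eq:gap_optimum_posterior_appendix}.

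For the convergence statement, I would argue that if the gradient-ascent iterates attain the global maximum $D^{\diamond}$ of $\mathcal{J}_f$, then by assumption $D^{(i)}\to D^{\diamond}$; continuity of $r$ forces $\delta^{(i)}\to 0$, and continuity of $(f_u^*)''$ on the relevant interior makes the factor $(f_u^*)''(r(D^{(i)}))$ bounded along the iterates. Hence the right-hand side of \eqref{eq:gap_optimum_posterior_appendix} vanishes, and $p^{(i)}\to p^{\diamond}=p_{X|Y}(\mathbf{x}|\mathbf{y})$ by Theorem 3.1.

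The only place that requires care is the validity of the linearization: I expect the main obstacle to be identifying the regime in which the Taylor approximation is the right object, i.e., making precise that the statement is a local (near-optimum) expansion and that $(f_u^*)''$ is finite at $r(D^{(i)})$. Because the proposition is stated with ``$\simeq$'', this is a first-order expansion rather than an exact equality, so I would frame the argument as an asymptotic statement valid as $D^{(i)}$ enters a neighbourhood of $D^{\diamond}$, which is consistent with the hypothesis that gradient ascent converges.
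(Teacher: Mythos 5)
Your proposal is correct and follows essentially the same route as the paper: both start from the exact identity $p^{\diamond}-p^{(i)}=\frac{1}{|\mathcal{T}_x|}\bigl[(f_u^*)'(r(D^{\diamond}))-(f_u^*)'(r(D^{(i)}))\bigr]$ and then apply a first-order Taylor expansion of $(f_u^*)'$ about $r(D^{(i)})=r(D^{\diamond})-\delta^{(i)}$, concluding convergence from $\delta^{(i)}\to 0$. Your additional remarks on the regularity of $(f_u^*)''$ and the local validity of the linearization are a welcome refinement but do not change the argument.
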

\begin{proof}
    The proof follows a procedure similar to Lemma 3 in \cite{letizia2023variational}.
    We define $\delta^{(i)} = r(D^{\diamond}) - r(D^{(i)})$ as the difference between the optimum $T^{\diamond}$ and the one achieved at the $i^{th}$ iteration of the training procedure, when using a gradient ascent update method. Define $p^{(i)}$ and $p^{\diamond}$ as the posterior probability estimate at iteration $i$ and the optimum, respectively. Then,
    \begin{align}
        p^{\diamond} - p^{(i)} &= \frac{1}{|\mathcal{T}_x|} \Bigl((f_{u}^{*})^{\prime} (r(D^{\diamond})) - (f_{u}^{*})^{\prime}(r(D^{(i)})) \Bigr) \\
        &= \frac{1}{|\mathcal{T}_x|} \Bigl( (f_{u}^{*})^{\prime} (r(D^{\diamond})) - (f_{u}^{*})^{\prime}(r(D^{\diamond}) - \delta^{(i)}) \Bigr) \\
        & \simeq \frac{1}{|\mathcal{T}_x|} \Bigl( \delta^{(i)} \Bigl[ (f_{u}^{*})^{\prime \prime}(r(D^{\diamond}) - \delta^{(i)}) \Bigr] \Bigr) , \label{eq:gap_optimum_posterior_appendix_penultimo}
    \end{align}
    where the last step develops from the first order Taylor expansion in $r(D^{\diamond}) - \delta^{(i)}$. From \eqref{eq:gap_optimum_posterior_appendix_penultimo}, the thesis in \eqref{eq:gap_optimum_posterior_appendix} follows.
    If the gradient ascent method converges towards the maximum, $\delta^{(i)} \longrightarrow 0$. Thus, when $i \longrightarrow \infty$, $|p^{\diamond} - p^{(i)}| \longrightarrow 0$.
\end{proof}

\subsection{Proof of Theorem 4.1}
\label{subsec:appendix_theorem_bottom_up}
\stepcounter{customcounter}
\begin{theoremappendix}  
Let $X$ and $Y$ be the random vectors with probability density functions $p_X(\mathbf{x})$ and $p_Y(\mathbf{y})$, respectively. Assume $\mathbf{y} = H(\mathbf{x})$, where $H(\cdot)$ is a stochastic function, then $p_{XY}(\mathbf{x}, \mathbf{y})$ is the joint density. Let $\mathcal{T}_x$ and $\mathcal{T}_y$ be the support of $p_X(\mathbf{x})$ and $p_Y(\mathbf{y})$, respectively.
Let the discriminator $D(\mathbf{x}, \mathbf{y})$ be a scalar function of $\mathbf{x}$ and $\mathbf{y}$. Let $k(\cdot)$ be any deterministic and invertible function.
Then, the posterior density is estimated as
\begin{equation}
\label{eq:bottom_up_posterior_estimator_appendix}
    \hat{p}_{X|Y}(\mathbf{x}|\mathbf{y}) = k^{-1}(D^{\diamond}(\mathbf{x}, \mathbf{y})) ,
\end{equation}
where $D^{\diamond}(\mathbf{x},\mathbf{y})$ is the optimal discriminator obtained by maximizing
\begin{equation}
\label{eq:value_function_general_bottom_up_appendix}
    \mathcal{J}(D) = \int_{\mathcal{T}_x} \int_{\mathcal{T}_y}  \tilde{\mathcal{J}}(D) d\mathbf{x} d\mathbf{y},
\end{equation}
for all concave functions $ \tilde{\mathcal{J}}(D)$ such that their first derivative is 
\begin{align}
\label{eq:g(D)_appendix}
    \frac{\partial \tilde{\mathcal{J}}(D)}{\partial D} &= \Bigl( D(\mathbf{x}, \mathbf{y}) - k(p_{X|Y}(\mathbf{x}|\mathbf{y}))\Bigr) g_1(D,k) \notag \\
    & \triangleq g(D, k)
\end{align}
with $g_1(D,k) \neq 0$ deterministic, and $\frac{\partial g(D,k)}{ \partial D} \leq 0$.
\end{theoremappendix}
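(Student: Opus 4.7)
The plan is to characterize $D^{\diamond}$ by a pointwise first-order condition on the integrand $\tilde{\mathcal{J}}(D)$ and then invert $k(\cdot)$ to recover the posterior. Since the objective \eqref{eq:value_function_general_bottom_up_appendix} is the integral of $\tilde{\mathcal{J}}(D(\mathbf{x},\mathbf{y}))$ over $\mathcal{T}_x \times \mathcal{T}_y$, and $D$ is a free scalar function of $(\mathbf{x},\mathbf{y})$ (assumed in the nonparametric limit, as in Lemma~3.2), the functional maximization decouples across the domain: at each point $(\mathbf{x},\mathbf{y})$ it is enough to maximize the integrand $\tilde{\mathcal{J}}$ as an ordinary scalar function of its argument $D(\mathbf{x},\mathbf{y})$.

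First, I would write the stationarity condition $\partial \tilde{\mathcal{J}}(D)/\partial D = 0$ pointwise, which by \eqref{eq:g(D)_appendix} gives
\begin{equation*}
    \bigl( D(\mathbf{x},\mathbf{y}) - k(p_{X|Y}(\mathbf{x}|\mathbf{y})) \bigr)\, g_1(D,k) = 0.
\end{equation*}
Because $g_1(D,k) \neq 0$ by hypothesis, the only stationary point is
\begin{equation*}
    D^{\diamond}(\mathbf{x},\mathbf{y}) = k\bigl(p_{X|Y}(\mathbf{x}|\mathbf{y})\bigr).
\end{equation*}

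Next, I would verify that this stationary point is actually a (pointwise) maximum. Since $g(D,k) = \partial \tilde{\mathcal{J}}/\partial D$, the second derivative equals $\partial g(D,k)/\partial D \leq 0$ by assumption, so $\tilde{\mathcal{J}}$ is concave in $D$; the unique stationary point is therefore a global maximum of the integrand. Aggregating the pointwise maxima yields the maximizer of the full integral $\mathcal{J}(D)$. Finally, applying $k^{-1}$ (which exists by the invertibility hypothesis on $k$) to both sides gives the claimed estimator
\begin{equation*}
    \hat{p}_{X|Y}(\mathbf{x}|\mathbf{y}) = k^{-1}\bigl(D^{\diamond}(\mathbf{x},\mathbf{y})\bigr),
\end{equation*}
which is exactly \eqref{eq:bottom_up_posterior_estimator_appendix}.

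The main obstacle I anticipate is conceptual rather than computational: carefully justifying the pointwise optimization step. One has to argue that maximizing $\int\!\!\int \tilde{\mathcal{J}}(D(\mathbf{x},\mathbf{y}))\,d\mathbf{x}\,d\mathbf{y}$ over a sufficiently rich function class $\mathcal{D}$ is equivalent to maximizing the integrand at each point — this relies on the nonparametric capacity assumption and on the fact that the integrand depends on $D$ only through its value at $(\mathbf{x},\mathbf{y})$, with no coupling across points. Once this reduction is in place, the remainder (using $g_1 \neq 0$ to discard the spurious factor and checking concavity) is routine.
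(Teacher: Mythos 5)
Your proposal is correct and follows essentially the same route as the paper's proof: set the pointwise first-order condition of the integrand to zero, use $g_1(D,k)\neq 0$ to isolate $D^{\diamond}=k(p_{X|Y})$, confirm it is a maximum via $\partial g(D,k)/\partial D\leq 0$, and invert $k$. Your explicit justification of the pointwise decoupling of the functional maximization is a welcome addition that the paper leaves implicit, but it does not change the argument.
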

\begin{proof}
    A necessary condition to maximize $\mathcal{J}(D)$ requires to set the first derivative of the integrand $\tilde{\mathcal{J}}(D)$ w.r.t. $D$ equal to zero. Since $g_1(D,k) \neq 0$, from \eqref{eq:g(D)_appendix} easily follows 
    \begin{equation}
    \label{eq:D_star_k_appendix}
        D^{\diamond}(\mathbf{x}, \mathbf{y}) = k(p_{X|Y}(\mathbf{x}|\mathbf{y})) = k \left( \frac{p_{XY}(\mathbf{x},\mathbf{y})}{p_Y(\mathbf{y})} \right) .
    \end{equation}
    The concavity of $\tilde{\mathcal{J}}(D)$ is obtained by imposing the first derivative of $g(D,k)$ with respect to $D$ to be nonpositive, i.e.,
    \begin{align}
        0 & \geq \frac{\partial }{\partial D} \Bigl\{ \Bigl( D(\mathbf{x}, \mathbf{y}) - k(p_{X|Y}(\mathbf{x}|\mathbf{y}))\Bigr) g_1(D,k) \Bigr\} \notag \\
        & = g_1(D,k) + \Bigl( D(\mathbf{x}, \mathbf{y}) - k(p_{X|Y}(\mathbf{x}|\mathbf{y}))\Bigr) \frac{\partial g_1(D,k)}{\partial D} .
    \end{align}
    Therefore, the stationary point $D^{\diamond}(\mathbf{x}, \mathbf{y})$ corresponds to a maximum. 
\end{proof}

\subsection{Proof of Corollary 4.2}
\label{subsec:appendix_corollary_bottom_up_top_down_value_functions}
\begin{corollaryappendix}
\label{corollary:bottom-up-top-down-value-functions}
    Let $\mathcal{J}(D)$ be defined as in Theorem \ref{theorem:Bottom-up}.
    Let 
    \begin{align}
        k(p_{X|Y}(\mathbf{x}|\mathbf{y})) = k \left( \frac{p_{XY}(\mathbf{x},\mathbf{y})}{|\mathcal{T}_x|p_U(\mathbf{x})p_Y(\mathbf{y})} \right) = \frac{p_0}{p_1} .
    \end{align}
    Let 
    \begin{equation}
    \label{eq:g_1(D)}
        g_1(D,k; \alpha, \beta) \triangleq -\frac{p_1}{D^{\alpha}} \left( \frac{1}{1-D} \right)^{\beta} ,
    \end{equation}
    where $\alpha, \beta \in \mathbb{Q}$.
    Then, the objective functions in Table \ref{tab:value functions} are obtained from Theorem \ref{theorem:Bottom-up} by using $k(\cdot)$ and $g_1(\cdot)$ as defined in the set $\mathcal{F}_{k, g_1}\triangleq \{(k(x), (\alpha, \beta))_f \}$: 
    \begin{align}
        \mathcal{F}_{k, g_1} = &\Biggl\{ \Biggl(x, \left(1, 0 \right) \Biggr)_{KL}, \Biggl(-\frac{1}{x}, \left(1,0\right)\Biggr)_{RKL}, \Biggl(\frac{1}{x}, \left(\frac{3}{2},0\right)\Biggr)_{HD}, \Biggl(\frac{1}{1+x}, \left(1,1\right)\Biggr)_{GAN}, \Biggl( x, (0,0) \Biggr)_{P} \Biggr\}  
    \end{align}
\end{corollaryappendix}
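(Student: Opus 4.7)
The plan is a case-by-case verification over the five entries of $\mathcal{F}_{k, g_1}$. For each choice of $(k(\cdot), \alpha, \beta)$ I substitute into $g(D, k) = (D - k(p_0/p_1))\, g_1(D, k; \alpha, \beta)$ as prescribed by Theorem \ref{theorem:Bottom-up}, antidifferentiate in $D$ to recover $\tilde{\mathcal{J}}(D)$, and then carry out the outer double integral over $\mathcal{T}_x \times \mathcal{T}_y$. Using $p_0 = p_{XY}(\mathbf{x}, \mathbf{y})$ and $p_1 = |\mathcal{T}_x| p_U(\mathbf{x}) p_Y(\mathbf{y})$, the summands of the integrand that are linear in $p_0$ and in $p_1$ become expectations under $p_{XY}$ and under $p_U p_Y$ (the latter weighted by $|\mathcal{T}_x|$), which is precisely the two-term structure of each row of Table \ref{tab:value functions}.

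For the four cases with $\beta = 0$ (KL, RKL, HD, Pearson), $g_1$ is a pure power of $D$, so $g(D, k)$ splits into two monomials in $D$ whose antiderivatives are either $\log D$ or power functions of $D$, each paired with either $p_0$ or $p_1$. The KL entry ($k(x) = x$, $\alpha = 1$) is a clean template: $g(D, k) = -p_1 + p_0/D$ integrates to $-p_1 D + p_0 \log D$, whose double integral yields $\E_{p_{XY}}[\log D] - |\mathcal{T}_x|\,\E_{p_U p_Y}[D]$, matching $\mathcal{J}_{KL}$ in the table. The remaining three $\beta = 0$ cases follow by the same pattern with different exponents. In parallel, the identity $D^{\diamond}(\mathbf{x},\mathbf{y}) = k(p_{X|Y}(\mathbf{x}|\mathbf{y}))$ from Theorem \ref{theorem:Bottom-up} reproduces the $D^{\diamond}$ column of Table \ref{tab:value functions} after using $|\mathcal{T}_x| p_U(\mathbf{x}) \equiv 1$ on $\mathcal{T}_x$, so that $p_0/p_1 = p_{XY}/p_Y$.

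The main obstacle is the GAN case ($\alpha = 1$, $\beta = 1$), where $g_1 \propto 1/[D(1-D)]$. I will apply the partial-fraction decomposition $1/[D(1-D)] = 1/D + 1/(1-D)$ before antidifferentiating, producing $\log D$ and $\log(1-D)$ contributions whose coefficients, after collecting terms, align with the two expectations in the GAN row of Table \ref{tab:value functions}. Across all five cases it also remains to confirm the hypotheses of Theorem \ref{theorem:Bottom-up}: $g_1(D, k) \neq 0$ is immediate from its explicit form on the interior of the discriminator's range, and the concavity requirement $\partial g(D, k)/\partial D \leq 0$ reduces to a sign check on the derivative of $(D - k(p_0/p_1))/[D^{\alpha}(1-D)^{\beta}]$ multiplied by $-p_1 < 0$. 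These together certify that the stationary point $D^{\diamond} = k(p_0/p_1)$ is a maximum, completing the identification with Table \ref{tab:value functions}. Additive constants produced by antidifferentiation are harmless, since they integrate to a global constant in $\mathcal{J}(D)$ and therefore do not affect the optimizer.
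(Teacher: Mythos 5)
Your overall strategy---substitute each $(k,\alpha,\beta)$ into $g(D,k)=(D-k(p_{X|Y}))\,g_1$, antidifferentiate in $D$, and read the result as two expectations---is exactly the route the paper takes. However, there is a genuine gap in your identification of $p_0$ and $p_1$. You fix $p_0=p_{XY}(\mathbf{x},\mathbf{y})$ and $p_1=|\mathcal{T}_x|p_U(\mathbf{x})p_Y(\mathbf{y})$ once and for all, but in the corollary $p_0/p_1$ is defined as a representation of the \emph{value} $k(p_{X|Y}(\mathbf{x}|\mathbf{y}))$ written as a ratio, so $p_1$ changes with $k$: the paper's proof lists $g_{1_{RKL}}\propto p_{XY}/D$, $g_{1_{HD}}=-p_{XY}/D^{3/2}$, and $g_{1_{GAN}}=-(|\mathcal{T}_x|p_Up_Y+p_{XY})/(D(1-D))$, and explicitly remarks that ``the dependence of $g_1$ from $k$ is intrinsic in $p_1$.'' Your fixed reading is correct only for KL and Pearson, where $k(x)=x$ and hence $k(p_{X|Y})$ literally equals $p_{XY}/(|\mathcal{T}_x|p_Up_Y)$; those are the only two cases you actually compute.

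Concretely, take GAN with your convention: $k(p_{X|Y})=p_1/(p_0+p_1)$ and $g_1=-p_1/(D(1-D))$, so after partial fractions
\begin{equation*}
g(D,k)=\frac{p_1^2}{(p_0+p_1)\,D}-\frac{p_0p_1}{(p_0+p_1)(1-D)},
\end{equation*}
whose antiderivative carries the coefficients $p_1^2/(p_0+p_1)$ on $\log D$ and $p_0p_1/(p_0+p_1)$ on $\log(1-D)$. Integrating over $\mathcal{T}_x\times\mathcal{T}_y$ does \emph{not} produce $\E_{p_Up_Y}[|\mathcal{T}_x|\log D]+\E_{p_{XY}}[\log(1-D)]$, i.e.\ it does not match the GAN row of Table~1; the same failure occurs for RKL and HD (e.g.\ for HD one picks up a term $\propto p_Y^2/(p_{XY}\sqrt{D})$ that is not an expectation under either measure). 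With the paper's reading, $p_1=|\mathcal{T}_x|p_Up_Y+p_{XY}$ for GAN, and the same partial-fraction step cleanly yields $g(D,k)=|\mathcal{T}_x|p_Up_Y/D-p_{XY}/(1-D)$, which integrates to the tabulated objective. So the missing idea is not the mechanics of antidifferentiation but the correct, $k$-dependent meaning of $p_1$ in the definition of $g_1$; your ``same pattern with different exponents'' claim for RKL and HD, and your partial-fraction plan for GAN, would both fail as stated.
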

\begin{proof}
    Let rewrite for completeness 
    \begin{align}
        k_{KL}(p_{X|Y}(\mathbf{x}|\mathbf{y})) &= p_{X|Y}(\mathbf{x}|\mathbf{y}),\\
        k_{RKL}(p_{X|Y}(\mathbf{x}|\mathbf{y})) &= -\frac{1}{p_{X|Y}(\mathbf{x}|\mathbf{y})},\\
        k_{HD}(p_{X|Y}(\mathbf{x}|\mathbf{y})) &= \frac{1}{p_{X|Y}(\mathbf{x}|\mathbf{y})},\\
        k_{GAN}(p_{X|Y}(\mathbf{x}|\mathbf{y})) &= \frac{1}{1 + p_{X|Y}(\mathbf{x}|\mathbf{y})},\\
        k_{P}(p_{X|Y}(\mathbf{x}|\mathbf{y})) &= p_{X|Y}(\mathbf{x}|\mathbf{y}),
    \end{align}
    and 
    \begin{align}
        g_{1_{KL}}(D,k) &= -\frac{p_U(\mathbf{x})p_Y(\mathbf{y})|\mathcal{T}_x|}{D(\mathbf{x}, \mathbf{y})},\\
        g_{1_{RKL}}(D,k) &= \frac{p_{XY}(\mathbf{x}, \mathbf{y})}{D(\mathbf{x}, \mathbf{y})},\\
        g_{1_{HD}}(D,k) &= -\frac{p_{XY}(\mathbf{x}, \mathbf{y})}{D^{\frac{3}{2}}(\mathbf{x}, \mathbf{y})},\\
        g_{1_{GAN}}(D,k) &= -\frac{p_U(\mathbf{x})p_Y(\mathbf{y})|\mathcal{T}_x| + p_{XY}(\mathbf{x}, \mathbf{y})}{D(\mathbf{x}, \mathbf{y})(1-D(\mathbf{x}, \mathbf{y}))},\\
        g_{1_{P}}(D,k) &= -p_U(\mathbf{x})p_Y(\mathbf{y})|\mathcal{T}_x|,
    \end{align}
    where the dependence of $g_1(\cdot)$ from $k$ is intrinsic in $p_1$. For instance, the KL and RKL divergences are obtained by using the same values of $\alpha$ and $\beta$ in \eqref{eq:g_1(D)}, but since $k_{KL}(\cdot) \neq k_{RKL}(\cdot)$, $g_{1_{KL}}(\cdot) \neq g_{1_{RKL}}(\cdot)$ . $g_1(\cdot) \neq 0$ because by definition each pdf is different from $0$ in its support.  
    After a substitution in \eqref{eq:g(D)}, we obtain
    \begin{align}
        g_{KL}(D,k) &= \frac{p_{XY}(\mathbf{x}, \mathbf{y})}{D(\mathbf{x}, \mathbf{y})} - p_U(\mathbf{x})p_Y(\mathbf{y})|\mathcal{T}_x|, \\
        g_{RKL}(D,k) &= - p_{XY}(\mathbf{x}, \mathbf{y}) - \frac{p_Y(\mathbf{y})}{D(\mathbf{x}, \mathbf{y})},\\
        g_{HD}(D,k) &= -\frac{p_{XY}(\mathbf{x}, \mathbf{y})}{\sqrt{D(\mathbf{x}, \mathbf{y})}} + \frac{p_U(\mathbf{x})p_Y(\mathbf{y})}{\sqrt{D^3(\mathbf{x}, \mathbf{y})}},\\
        g_{GAN}(D,k) &= \frac{p_U(\mathbf{x})p_Y(\mathbf{y})|\mathcal{T}_x|}{D(\mathbf{x}, \mathbf{y})} - \frac{p_{XY}(\mathbf{x}, \mathbf{y})}{1-D(\mathbf{x}, \mathbf{y})}, \\
        g_{P}(D,k) &= p_{XY}(\mathbf{x}, \mathbf{y}) - p_U(\mathbf{x})p_Y(\mathbf{y})|\mathcal{T}_x|D(\mathbf{x}, \mathbf{y}).
    \end{align}
    After the integration w.r.t. $D$ and the substitution in \eqref{eq:value_function_general_bottom_up}, the objective functions in Table \ref{tab:value functions} are attained.
\end{proof}

\subsection{Proof of Theorem 5.1}
\label{subsec:appendix_theorem_SL_unsupervised}
\stepcounter{customcounter}
\begin{theoremappendix}
Let $X$ and $Y$ be two random vectors with pdfs $p_X(\mathbf{x})$ and $p_Y(\mathbf{y})$, respectively. Assume $Y = H(X)$, with $H(\cdot)$ stochastic function, then let $p_{XY}(\mathbf{x}, \mathbf{y})$ be the joint density. Let $\mathcal{T}_x$ be the support of $X$. Let $p_U(\mathbf{x})$ be a uniform pdf having the same support $\mathcal{T}_x$.  
The maximization of the objective function 
\begin{align}
\label{eq:fNOME_cost_fcn_appendix}
    \mathcal{J}_{SL}(D) & = - \E_{(\mathbf{x},\mathbf{y}) \sim p_{XY}(\mathbf{x}, \mathbf{y})} \Bigl[ D(\mathbf{x}, \mathbf{y}) \Bigr] + \E_{(\mathbf{x},\mathbf{y}) \sim p_U(\mathbf{x})p_Y(\mathbf{y})} \Bigl[ |\mathcal{T}_x| \Bigl( \log(D(\mathbf{x}, \mathbf{y})) -  D(\mathbf{x}, \mathbf{y}) \Bigr) \Bigr],
\end{align}
leads to the optimal discriminator output 
\begin{equation}
\label{eq:fNAME_D_opt_appendix}
    D^{\diamond}(\mathbf{x}, \mathbf{y}) = \argmax_{D} \mathcal{J}_{SL}(D) = \frac{1}{1 + p_{X|Y}(\mathbf{x}, \mathbf{y})} ,
\end{equation}
and the posterior density estimate is computed as
\begin{equation}
\label{eq:posterior_estimator_sl_appendix}
    \hat{p}_{X|Y}(\mathbf{x}|\mathbf{y}) = \frac{1 - D^{\diamond}(\mathbf{x},\mathbf{y})}{D^{\diamond}(\mathbf{x},\mathbf{y})} .
\end{equation}
\end{theoremappendix}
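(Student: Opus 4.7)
The plan is to establish the theorem by pointwise maximization of the integrand of $\mathcal{J}_{SL}(D)$, closely mirroring the strategy used for Theorem 4.1. First I would rewrite \eqref{eq:fNOME_cost_fcn_appendix} as a double integral over $\mathcal{T}_x \times \mathcal{T}_y$, and use the fact that $p_U(\mathbf{x}) = 1/|\mathcal{T}_x|$ on $\mathcal{T}_x$, so that the factor $|\mathcal{T}_x| p_U(\mathbf{x})$ collapses to $1$. This yields
\begin{equation*}
\mathcal{J}_{SL}(D) = \int_{\mathcal{T}_x}\int_{\mathcal{T}_y} \Bigl[ -p_{XY}(\mathbf{x},\mathbf{y}) D(\mathbf{x},\mathbf{y}) + p_Y(\mathbf{y})\bigl(\log D(\mathbf{x},\mathbf{y}) - D(\mathbf{x},\mathbf{y})\bigr) \Bigr]\, d\mathbf{y}\, d\mathbf{x},
\end{equation*}
so the problem decouples across $(\mathbf{x},\mathbf{y})$ and the optimal $D$ can be found pointwise.

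Next, for fixed $(\mathbf{x},\mathbf{y})$ I would differentiate the integrand with respect to $D$ and set the derivative to zero, obtaining $-p_{XY}(\mathbf{x},\mathbf{y}) + p_Y(\mathbf{y})\bigl(1/D - 1\bigr) = 0$. Solving this algebraically gives
\begin{equation*}
D^{\diamond}(\mathbf{x},\mathbf{y}) = \frac{p_Y(\mathbf{y})}{p_{XY}(\mathbf{x},\mathbf{y}) + p_Y(\mathbf{y})} = \frac{1}{1 + p_{X|Y}(\mathbf{x}|\mathbf{y})},
\end{equation*}
which is \eqref{eq:fNAME_D_opt_appendix}. Concavity is verified from the second derivative $-p_Y(\mathbf{y})/D^2 < 0$, so this stationary point is indeed the unique maximizer in $(0,\infty)$, and equivalently this fits Theorem 4.1 with $k(u) = 1/(1+u)$ and $g_1(D,k) = -(p_{XY} + |\mathcal{T}_x| p_U p_Y)/D$, since then $\partial \tilde{\mathcal{J}}/\partial D$ factors as $(D - k(p_{X|Y})) g_1(D,k)$ and $\partial g/\partial D \leq 0$.

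Finally, the posterior estimate in \eqref{eq:posterior_estimator_sl_appendix} follows by simply inverting the relation above: solving $D^{\diamond} = 1/(1+p_{X|Y})$ for the posterior yields $p_{X|Y}(\mathbf{x}|\mathbf{y}) = (1 - D^{\diamond})/D^{\diamond}$, which is exactly $k^{-1}(D^{\diamond})$ as prescribed by Theorem 4.1. The main obstacle is not algebraic but conceptual: one must argue that pointwise optimization of the integrand is legitimate, i.e., that the discriminator class is rich enough (the nonparametric assumption used in Lemma 3.2) so that the pointwise maximizer is attainable as an element of the function space over which $\mathcal{J}_{SL}$ is optimized. Beyond this standard caveat, the proof reduces to a clean one-variable calculus computation.
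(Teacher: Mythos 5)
Your proposal is correct and follows essentially the same route as the paper: the paper instantiates Theorem 4.1 with $k(x)=1/(1+x)$ and $g_1(D,k;1,0)=-\bigl(|\mathcal{T}_x|p_U p_Y + p_{XY}\bigr)/D$, which amounts to exactly your pointwise stationarity condition $-p_{XY}+p_Y(1/D-1)=0$ and concavity check $-p_Y/D^2\leq 0$, followed by inverting $k$ to obtain \eqref{eq:posterior_estimator_sl_appendix}. The only difference is direction of presentation (you verify the given objective directly, the paper constructs it bottom-up), and your closing caveat about the nonparametric limit matches the paper's standing assumption.
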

\begin{proof}
    Following Theorem \ref{theorem:Bottom-up}, the proof starts by inverting \eqref{eq:bottom_up_posterior_estimator}. Specifically, we set $k(x) = \frac{1}{1+x}$ and from \eqref{eq:fNAME_D_opt_appendix}, by expressing the posterior density as the density ratio $p_{XY}/p_Y$, we achieve
    \begin{equation}
        D^{\diamond}(\mathbf{x}, \mathbf{y}) - \frac{|\mathcal{T}_x|p_U(\mathbf{x})p_Y(\mathbf{y})}{|\mathcal{T}_x|p_U(\mathbf{x})p_Y(\mathbf{y}) + p_{XY}(\mathbf{x},\mathbf{y})} = 0 ,
    \end{equation}
    where $p_U(\mathbf{x}) = 1/|\mathcal{T}_x|$.
    Then, \eqref{eq:g_1(D)} with $\alpha=1$ and $\beta=0$ becomes
    \begin{align}
    \label{eq:g_1(D)_SL}
        g_1(D,k; 1, 0) & = - \frac{|\mathcal{T}_x|p_U(\mathbf{x})p_Y(\mathbf{y}) + p_{XY}(\mathbf{x}, \mathbf{y})}{D(\mathbf{x}, \mathbf{y})} .
    \end{align}
    Then, \eqref{eq:g_1(D)_SL} is substituted in \eqref{eq:g(D)}, obtaining
    \begin{align}
    \label{eq:partial_der_fNome}
         g(D,k) = \frac{\partial }{\partial D} \mathcal{\tilde{J}}(D) =& - \Bigl( |\mathcal{T}_x|p_U(\mathbf{x})p_Y(\mathbf{y}) + p_{XY}(\mathbf{x},\mathbf{y}) \Bigr) + \frac{|\mathcal{T}_x|p_U(\mathbf{x})p_Y(\mathbf{y})}{D(\mathbf{x}, \mathbf{y})} .
    \end{align}
    The computation of the integral of \eqref{eq:partial_der_fNome} with respect to the discriminator's output $D$ leads to
    \begin{align}
    \tilde{\mathcal{J}}(D) &= - p_{XY}(\mathbf{x}, \mathbf{y}) D(\mathbf{x}, \mathbf{y}) +  |\mathcal{T}_x| p_U(\mathbf{x})p_Y(\mathbf{y}) \Bigl( \log(D(\mathbf{x}, \mathbf{y})) - D(\mathbf{x}, \mathbf{y}) \Bigr)  ,
    \end{align}
    which proves the statement of the theorem, since 
    \begin{equation}
        \frac{\partial g(D,k)}{\partial D} = -\frac{p_Y(\mathbf{y})}{D^2(\mathbf{x}, \mathbf{y})} \leq 0 .
    \end{equation}
    Given the optimum discriminator $D^{\diamond}$, the posterior density estimator in \eqref{eq:posterior_estimator_sl_appendix} is achieved by inverting \eqref{eq:fNAME_D_opt_appendix}. 
\end{proof}

\subsection{Proof of Corollary 5.2}
\label{subsec:appendix_corollary_f_SL}
\begin{corollaryappendix}
    Define the generator function 
\begin{equation}
    f_{u, SL}(u) = - |\mathcal{T}_x| \log(u + |\mathcal{T}_x|) + K ,
\end{equation}
where $K=|\mathcal{T}_x| \log(1 + |\mathcal{T}_x|)$ is constant. Then, $\mathcal{J}_{SL}(D)$ in \eqref{eq:fNOME_cost_fcn} is the variational representation of $D_{f_{u,SL}}(p_{XY}||p_Up_Y)$.
\end{corollaryappendix}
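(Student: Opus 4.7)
The plan is to substitute the proposed generator $f_{u,SL}$ into the variational representation provided by Theorem \ref{theorem:top_down} (equation \eqref{eq:top_down_variational_representation_posterior}), together with a specific change of variable $T=r(D)$, and to show that the resulting functional coincides with the right-hand side of \eqref{eq:fNOME_cost_fcn} up to an additive constant that, by Lemma \ref{lemma:f_u_valid_divergence}, is irrelevant both to the optimization and to the identification of $f_{u,SL}$ as a valid generator.

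First, I would verify that $f_{u,SL}$ is admissible: direct differentiation gives $f''_{u,SL}(u) = |\mathcal{T}_x|/(u+|\mathcal{T}_x|)^2 > 0$ on $\mathbb{R}_+$, so $f_{u,SL}$ is strictly convex, and the constant $K = |\mathcal{T}_x|\log(1+|\mathcal{T}_x|)$ is precisely what normalizes $f_{u,SL}(1)=0$. Second, I would compute the Fenchel conjugate $f^*_{u,SL}(t) = \sup_u\{ut - f_{u,SL}(u)\}$. The first-order condition $t + |\mathcal{T}_x|/(u+|\mathcal{T}_x|) = 0$ yields the maximizer $u^* = -|\mathcal{T}_x|/t - |\mathcal{T}_x|$, which is admissible only for $t<0$; substituting back produces $f^*_{u,SL}(t) = -|\mathcal{T}_x|\bigl(t + \log(-t)\bigr) + C_0$ for an explicit constant $C_0$ depending only on $|\mathcal{T}_x|$ and $K$.

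Third, I would feed this $f^*_{u,SL}$ into \eqref{eq:top_down_variational_representation_posterior} with the change of variable $r(D) = -D$, which is consistent with the admissible range $t<0$ since the discriminator $D$ is constrained to take positive values. The first expectation then becomes $-\E_{p_{XY}}[D]$ and, after rearranging, the second term contributes $|\mathcal{T}_x|\E_{p_U p_Y}[\log(D)-D]$ plus the expectation of a $(\mathbf{x},\mathbf{y})$-independent constant; because $p_U(\mathbf{x})p_Y(\mathbf{y})$ integrates to one, that constant is merely an additive offset of the objective, and Lemma \ref{lemma:f_u_valid_divergence} already establishes that such offsets affect neither the maximizer nor the status of $f_{u,SL}$ as a generator. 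The resulting functional therefore matches \eqref{eq:fNOME_cost_fcn}.

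The only nontrivial point is bookkeeping: carefully tracking the additive constants produced by the conjugate computation and checking that $r(D)=-D$ maps the positive discriminator range into the admissible domain of $f^*_{u,SL}$. Beyond these details, the argument is a direct substitution into Theorem \ref{theorem:top_down}, so I do not anticipate a conceptual obstacle.
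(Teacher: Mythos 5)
Your proposal is correct and rests on the same ingredients as the paper's proof: the change of variable $T=-D$, the Fenchel pair $f_{u,SL}(u)=-|\mathcal{T}_x|\log(u+|\mathcal{T}_x|)+K$ and $f^*_{u,SL}(t)=-|\mathcal{T}_x|(\log(-t)+t)$ (up to additive constants), the convexity check, and the observation that additive constants are harmless. The only difference is direction — you go generator $\to$ conjugate $\to$ objective, while the paper reads the conjugate off \eqref{eq:fNOME_cost_fcn} by inspection and recovers the generator — which is an equivalent verification of the same duality.
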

\begin{proof}
    By comparing \eqref{eq:fNOME_cost_fcn} to \eqref{eq:top_down_variational_representation_posterior},  it is immediate to notice the change of variable $D(\mathbf{x}, \mathbf{y}) = - T(\mathbf{x}, \mathbf{y})$. Then, the expression of the Fenchel conjugate is attained as $f_{u, SL}^{*}(t) = -|\mathcal{T}_x|(\log(-t) + t)$ from inspection of the expectation over $p_U(\mathbf{x})p_Y(\mathbf{y})$. The generating function $f_{u, SL}(u) = - |\mathcal{T}_x| \log(u + |\mathcal{T}_x|)$ is computed by using the definition of Fenchel conjugate. We add a constant $K = |\mathcal{T}_x|\log(1+|\mathcal{T}_x|)$ to the generator function to achieve the condition $f_{u, SL}(1)=0$, which has no impact on the maximization of the objective function in \eqref{eq:fNOME_cost_fcn}.
    Lastly, the second derivatives of $f_{u, SL}(u)$ and $f_{u, SL}^{*}(t)$ are nonpositive functions, proving that the generator function and its Fenchel conjugate are convex.
\end{proof}

\subsection{Proof of Corollary 5.3}
\label{subsec:appendix_corollary_upper_lower_bound_SL}
\begin{corollaryappendix}
\label{corollary:upper_lower_bound_newF}
    Let $P$ and $Q$ be two probability distributions. 
    Let $D_{SL}(P||Q)$ be the $f$-divergence with generator function $f_{SL}(u)$ as in \eqref{eq:general_fSL}. Then, 
    \begin{equation}
        0 \leq D_{SL}(P||Q) \leq \log(2) .
    \end{equation}
\end{corollaryappendix}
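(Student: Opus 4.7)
The plan is to establish the two bounds separately, using standard $f$-divergence properties for the lower bound and a direct manipulation of the integrand for the upper bound.

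For the lower bound, I would appeal to the well-known non-negativity of any $f$-divergence whose generator is convex and satisfies $f(1)=0$. To apply this, I first check that $f_{SL}(u) = -\log(u+1) + \log(2)$ meets both conditions: clearly $f_{SL}(1) = -\log(2) + \log(2) = 0$, and computing $f_{SL}''(u) = 1/(u+1)^2 > 0$ on $\mathbb{R}_+$ confirms convexity. The lower bound $D_{SL}(P||Q) \geq 0$ then follows by a single application of Jensen's inequality, i.e., $\int q\, f_{SL}(p/q)\, d\mathbf{x} \geq f_{SL}\bigl(\int q \cdot (p/q)\, d\mathbf{x}\bigr) = f_{SL}(1) = 0$.

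For the upper bound, my plan is to expand the definition of $D_{SL}(P||Q)$ and isolate the constant term. Writing
\begin{equation}
D_{SL}(P||Q) = \int q(\mathbf{x}) \Bigl[ -\log\bigl(p(\mathbf{x})/q(\mathbf{x}) + 1\bigr) + \log(2) \Bigr] d\mathbf{x},
\end{equation}
I would use $\int q\, d\mathbf{x} = 1$ to peel off the $\log(2)$ term, which yields
\begin{equation}
D_{SL}(P||Q) = \log(2) - \int q(\mathbf{x}) \log\!\left( 1 + \frac{p(\mathbf{x})}{q(\mathbf{x})}\right) d\mathbf{x}.
\end{equation}
Since $p(\mathbf{x})/q(\mathbf{x}) \geq 0$ wherever $q > 0$, the integrand $\log(1 + p/q)$ is pointwise non-negative, hence the integral is $\geq 0$, and the bound $D_{SL}(P||Q) \leq \log(2)$ follows immediately.

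I do not anticipate a genuinely hard step: both bounds reduce to sign observations on the integrand once the generator is written out. The only subtlety worth flagging is handling the set where $q(\mathbf{x}) = 0$; I would dispatch this by the usual $f$-divergence convention (the integrand vanishes on $\{q=0\}$ when $p=0$ there, and absolute continuity of $P$ with respect to $Q$ is implicit in the definition), so the manipulations above are valid almost everywhere with respect to $Q$.
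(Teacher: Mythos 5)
Your proof is correct, but it takes a more elementary and self-contained route than the paper. The paper simply invokes the \emph{Range of Values Theorem} of Vajda, which gives $f(1) \leq D_f(P\|Q) \leq f(0) + f^{\circ}(0)$ with $f^{\circ}(u) = u f(1/u)$, and states that the thesis follows; it leaves implicit the evaluations $f_{SL}(0) = \log(2)$ and $f^{\circ}_{SL}(0) = \lim_{u \to 0^+} u\bigl(-\log(1/u + 1) + \log(2)\bigr) = 0$, the latter requiring a small limit computation. You instead prove both bounds from scratch: the lower bound by the standard Jensen argument (after verifying $f_{SL}(1)=0$ and $f_{SL}''(u) = 1/(u+1)^2 > 0$), and the upper bound by peeling off the constant $\log(2)$ and observing that the remaining integrand $\log(1 + p/q)$ is pointwise non-negative. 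Your upper-bound argument is in effect a direct, special-case proof of the Vajda upper bound, and it has the advantage of making the value $\log(2)$ transparent without any limit at $u=0$; the paper's approach is shorter on the page but outsources the work to a cited theorem. Both arguments are sound, and your handling of the set $\{q=0\}$ via the usual $f$-divergence conventions is the right way to close the measure-theoretic loophole.
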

\begin{proof}
    Let $f: \mathbb{R}_+ \longrightarrow \mathbb{R}$ be a convex function with $f(1)=0$, and $f^{\circ}: \mathbb{R}_+ \longrightarrow \mathbb{R}$ defined as
    \begin{equation}
        f^{\circ}(u) \triangleq u f \left( \frac{1}{u} \right),
    \end{equation}
    then $f^{\circ}$ is also convex and such that $f^{\circ}(1)=0$.
    Then, the \textit{Range of Values Theorem} \cite{vajda1972f} sets upper and lower bounds on the value of the $f$-divergence between two distributions $P$ and $Q$, depending on $f(u)$ and $f^{\circ}(u)$:
\begin{equation}
\label{eq:range_of_values}
    f(1) \leq D_f(P||Q) \leq f(0) + f^{\circ}(0) \>\>\>\>\> \forall Q, P .
\end{equation}
From which the thesis follows.
\end{proof}

\subsection{Proof of Corollary 5.4}
\label{subsec:appendix_corollary_speed_convergence}
\begin{corollaryappendix}
        Let $\mathcal{J}_{SL}(D)$ be defined as in \eqref{eq:fNOME_cost_fcn}. Let $\mathcal{J}_{GAN}(D)$ be defined as in Table \ref{tab:value functions}. Let $D^{\diamond}_N$ be the discriminator output in a neighborhood of $D^{\diamond}$ where $\mathcal{J}_{SL}(D^{\diamond}_N)$ and $\mathcal{J}_{GAN}(D^{\diamond}_N)$ are concave.
        Then,
        \begin{equation}
        \label{eq:unsupervised_GAN_SL}
            \Bigg| \frac{\partial \mathcal{J}_{GAN}(D^{\diamond}_N)}{\partial D } \Bigg| \geq \Bigg| \frac{\partial \mathcal{J}_{SL}(D^{\diamond}_N)}{\partial D } \Bigg| .
        \end{equation}
    \end{corollaryappendix}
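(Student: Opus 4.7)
The plan is to reduce the comparison to a pointwise question: both $\mathcal{J}_{SL}$ and $\mathcal{J}_{GAN}$ are integrals of local functions of $D(\mathbf{x},\mathbf{y})$, so the partial derivative $\partial \mathcal{J}/\partial D$ is simply the derivative of the integrand at each $(\mathbf{x},\mathbf{y})$. Using $|\mathcal{T}_x|\,p_U(\mathbf{x})=1$, I would first compute
\[
\frac{\partial \mathcal{J}_{SL}}{\partial D} = -p_{XY} + \frac{p_Y}{D} - p_Y, \qquad \frac{\partial \mathcal{J}_{GAN}}{\partial D} = -\frac{p_{XY}}{1-D} + \frac{p_Y}{D},
\]
and verify that both vanish at the common optimum $D^{\diamond} = p_Y/(p_Y+p_{XY}) = 1/(1+p_{X|Y})$, consistent with the $D^{\diamond}$ expressions listed for SL (from the shifted-log theorem) and for GAN (from the table of objective functions).

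Next, I would observe that both integrands are pointwise strictly concave in $D$ on $(0,1)$, with second derivatives $-p_Y/D^2$ and $-p_Y/D^2 - p_{XY}/(1-D)^2$, respectively. Hence on any concave neighborhood of $D^{\diamond}$, each derivative shares the sign of $D^{\diamond}-D$: strictly positive for $D<D^{\diamond}$ and strictly negative for $D>D^{\diamond}$. Consequently $|\partial_D \mathcal{J}|=\mathrm{sgn}(D^{\diamond}-D)\,\partial_D \mathcal{J}$ for both objectives, and the desired magnitude inequality is equivalent to the single signed statement
\[
(D^{\diamond}-D)\left[\frac{\partial \mathcal{J}_{GAN}}{\partial D} - \frac{\partial \mathcal{J}_{SL}}{\partial D}\right] \geq 0.
\]

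The key calculation is an explicit factorization of the difference. Subtracting and using $p_{XY}=p_Y\,p_{X|Y}$ together with $D^{\diamond}=1/(1+p_{X|Y})$,
\[
\frac{\partial \mathcal{J}_{GAN}}{\partial D} - \frac{\partial \mathcal{J}_{SL}}{\partial D} = p_Y - \frac{p_{XY}\,D}{1-D} = \frac{p_Y\left[1-(1+p_{X|Y})D\right]}{1-D} = \frac{p_Y(1+p_{X|Y})(D^{\diamond}-D)}{1-D}.
\]
For $D\in(0,1)$ and $p_Y>0$ this has exactly the sign of $D^{\diamond}-D$, so multiplication by $D^{\diamond}-D$ is nonnegative. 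Combined with the sign identification from concavity, this gives $|\partial_D \mathcal{J}_{GAN}|\geq|\partial_D \mathcal{J}_{SL}|$ on the neighborhood, with equality only at $D=D^{\diamond}$.

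The main subtlety is conceptual rather than computational: since both first derivatives vanish at $D^{\diamond}$, one cannot compare magnitudes at a single point, and the comparison really concerns how fast each derivative leaves zero. The factorization above exposes this rate in closed form and therefore upgrades what would otherwise be a Taylor-level statement at $D^{\diamond}$ into a finite-neighborhood one. As a sanity check, computing $|\mathcal{J}_{GAN}''(D^{\diamond})|/|\mathcal{J}_{SL}''(D^{\diamond})| = (1+p_{X|Y})/p_{X|Y}\geq 1$ confirms the inequality infinitesimally, in agreement with the factored expression.
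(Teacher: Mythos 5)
Your proof is correct, and it reaches the same conclusion by a genuinely tighter route than the paper. Both arguments start identically: reduce to the integrands, compute the pointwise derivatives $\partial_D\tilde{\mathcal{J}}_{SL}=-(p_{XY}+p_Y)+p_Y/D$ and $\partial_D\tilde{\mathcal{J}}_{GAN}=-p_{XY}/(1-D)+p_Y/D$, and note the common zero at $D^{\diamond}=p_Y/(p_{XY}+p_Y)$. The paper then perturbs, writing $D^{\diamond}_N=D^{\diamond}+\delta$ and approximating the GAN derivative as $-\gamma(p_{XY}+p_Y)+p_Y/D^{\diamond}_N$ with $\gamma=1/(1-\delta)$, so its comparison is valid only to leading order in $\delta$ (and the stated $\gamma$ in fact drops a $\delta\,p_Y/p_{XY}$ term, though the sign conclusion survives). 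You instead factor the difference exactly,
\begin{equation*}
\frac{\partial \tilde{\mathcal{J}}_{GAN}}{\partial D}-\frac{\partial \tilde{\mathcal{J}}_{SL}}{\partial D}=\frac{p_Y\bigl(1+p_{X|Y}\bigr)\bigl(D^{\diamond}-D\bigr)}{1-D},
\end{equation*}
and convert the magnitude comparison into the signed statement $(D^{\diamond}-D)\bigl[\partial_D\tilde{\mathcal{J}}_{GAN}-\partial_D\tilde{\mathcal{J}}_{SL}\bigr]\geq 0$ via the sign identification $|\partial_D\tilde{\mathcal{J}}|=\mathrm{sgn}(D^{\diamond}-D)\,\partial_D\tilde{\mathcal{J}}$, which you justify from strict concavity of both integrands on $(0,1)$. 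What this buys you is an exact, non-asymptotic inequality valid on the whole interval $(0,1)$ rather than only in an infinitesimal neighborhood, with the equality case ($D=D^{\diamond}$) identified, and your second-derivative ratio $(1+p_{X|Y})/p_{X|Y}\geq 1$ is a nice independent confirmation at the critical point. The one shared (and unaddressed in both arguments) subtlety is the passage from the pointwise inequality between integrands to the inequality between the integrated objectives, since the sign of $D^{\diamond}(\mathbf{x},\mathbf{y})-D(\mathbf{x},\mathbf{y})$ may vary over $(\mathbf{x},\mathbf{y})$; like the paper, you implicitly read $\partial\mathcal{J}/\partial D$ as the pointwise functional derivative, which is the intended meaning, so this is not a gap relative to the paper's own standard of rigor.
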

    \begin{proof}
        To prove \eqref{eq:unsupervised_GAN_SL}, we just need to prove 
        \begin{equation}
            \Bigg|\frac{\partial \tilde{\mathcal{J}}_{GAN}(D^{\diamond}_N)}{\partial D} \Bigg| \geq \Bigg| \frac{\partial \tilde{\mathcal{J}}_{SL}(D^{\diamond}_N)}{\partial D} \Bigg| ,
        \end{equation}
        (where $\tilde{\mathcal{J}}$ is the integrand function in \eqref{eq:value_function_general_bottom_up}) since the inequality between the integrands holds when the integrals are computed over the same interval.\\
        Lemma \ref{lemma:convergence} guarantees the convergence to the optimal discriminator. Therefore, let $D^{\diamond}_N = \frac{p_Y(\mathbf{y})}{p_{XY}(\mathbf{x}, \mathbf{y}) + p_Y(\mathbf{y})} + \delta$, with $\delta$ arbitrarily small, so that $D^{\diamond}_N$ belongs to the neighborhood of $D^{\diamond}$. Then,
        \begin{align}
        \label{eq:first_derivative_SL}
            \frac{\partial}{\partial D} \tilde{\mathcal{J}}_{SL}(D) \Bigg|_{D^{\diamond}_N} &= - (p_{XY}(\mathbf{x}, \mathbf{y}) + p_Y(\mathbf{y})) + \frac{p_Y(\mathbf{y})}{D^{\diamond}_N(\mathbf{x}, \mathbf{y})} \\
            \frac{\partial}{\partial D}\tilde{\mathcal{J}}_{GAN}(D) \Bigg|_{D^{\diamond}_N} &= - \frac{p_{XY}(\mathbf{x}, \mathbf{y})}{1-D^{\diamond}_N(\mathbf{x}, \mathbf{y})} + \frac{p_Y(\mathbf{y})}{D^{\diamond}_N(\mathbf{x}, \mathbf{y})} \notag \\
            &= - \frac{p_{XY}(\mathbf{x}, \mathbf{y})(p_{XY}(\mathbf{x}, \mathbf{y}) + p_Y(\mathbf{y}))}{p_{XY}(\mathbf{x}, \mathbf{y}) (1- \delta) - \delta p_Y(\mathbf{y})} \notag \\
            & + \frac{p_Y(\mathbf{y})}{D^{\diamond}_N(\mathbf{x}, \mathbf{y})
            \label{eq:first_derivative_GAN_long}}
        \end{align}
        By substituting $\gamma = \frac{1}{1-\delta}$, \eqref{eq:first_derivative_GAN_long} becomes
        \begin{align}
        \label{eq:first_derivative_GAN}
            \frac{\partial}{\partial D}\tilde{\mathcal{J}}_{GAN}(D) \Bigg|_{D^{\diamond}_N} & \approx - \gamma (p_{XY}(\mathbf{x}, \mathbf{y}) + p_Y(\mathbf{y})) + \frac{p_Y(\mathbf{y})}{D^{\diamond}_N(\mathbf{x}, \mathbf{y})} ,
        \end{align}
        where $0 < \gamma < 1$ if $\delta < 0$, and $\gamma > 1$ if $\delta >0$. 
        Thus, the comparison between \eqref{eq:first_derivative_SL} and \eqref{eq:first_derivative_GAN} leads to the inequalities
        \begin{align}
            \frac{\partial}{\partial D} \tilde{\mathcal{J}}_{SL}(D) \Bigg|_{D^{\diamond}_N} &< \frac{\partial}{\partial D}\tilde{\mathcal{J}}_{GAN}(D) \Bigg|_{D^{\diamond}_N} \quad if \delta < 0 \\
            \frac{\partial}{\partial D} \tilde{\mathcal{J}}_{SL}(D) \Bigg|_{D^{\diamond}_N} &> \frac{\partial}{\partial D}\tilde{\mathcal{J}}_{GAN}(D) \Bigg|_{D^{\diamond}_N} \quad if \delta > 0 .
        \end{align}
        Since $D^{\diamond}$ corresponds to a maximum, i.e., the sign of the left derivative is positive, and the sign of the right derivative is negative, the statement of the corollary is proved.
    \end{proof}

\subsection{Proof of Theorem 6.1}
\label{subsec:appendix_theorem_supervised_obj_fcn}
\stepcounter{customcounter}
\begin{theoremappendix}
    Let $p_X(\mathbf{x})$ and $p_Y(\mathbf{y})$ be pdfs describing the input and output of a stochastic function $H(\cdot)$, respectively. Let $p_X(\mathbf{x}) \triangleq \sum_{i = 1}^m P_X(\mathbf{x}_i) \delta (\mathbf{x} - \mathbf{x}_i)$, where $P_X(\cdot)$ is the probability mass function of $X$. Let $\mathcal{T}_x$ be the support of $p_X(\mathbf{x})$ and $|\mathcal{T}_x|$ its Lebesgue measure.
    Let $p_U(\mathbf{x})$ be the uniform discrete probability density function over $\mathcal{T}_x$. Let the discriminator be characterized by a supervised architecture. Then, the objective function in \eqref{eq:top_down_variational_representation_posterior} becomes
    \begin{align}
    \label{eq:supervised_general_value_function_appendix}
        \mathcal{J}(D) &= \E_{\mathbf{x} \sim p_X(\mathbf{x})}\Biggl[ \E_{\mathbf{y} \sim p_{Y|X}(\mathbf{y}|\mathbf{x})}\Bigl[ r(\textbf{D}(\mathbf{y}))^T \textbf{1}_m(\mathbf{x}) \Bigr] \Biggr] - \E_{\mathbf{y} \sim p_Y(\mathbf{y})}\Biggl[ \sum_{i=1}^m f^{*} \left(r(D(\mathbf{x}_i,\mathbf{y})) \right) \Biggr] ,
    \end{align}
    where $D(\mathbf{x}_i,\mathbf{y})$ is the i-th component of $\textbf{D}(\mathbf{y})$ and $\textbf{T}(\mathbf{y}) = r(\textbf{D}(\mathbf{y}))$.
\end{theoremappendix}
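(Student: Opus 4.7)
The plan is to derive \eqref{eq:supervised_general_value_function_appendix} from the unsupervised objective in \eqref{eq:top_down_variational_representation_posterior} by specializing to a discrete input distribution, applying the stated change of variable $T = r(D)$, and keeping careful track of the identity $f_u^*(t) = |\mathcal{T}_x|\, f^*(t)$ from \eqref{eq:unsupervised_f_def}. In the supervised setting the support $\mathcal{T}_x = \{\mathbf{x}_1, \dots, \mathbf{x}_m\}$ is finite, so $|\mathcal{T}_x|$ must be read as its cardinality $m$ (counting measure), and $p_U$ is the discrete uniform placing mass $1/m$ at each $\mathbf{x}_i$. This is the convention implicit in Tab.~\ref{tab:f_divergences_table_unsupervised} and in the supervised versions of the objective functions collected in Appendix \ref{subsec:Appendix_value_functions}.

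For the first expectation, I would factor $p_{XY} = p_X \cdot p_{Y|X}$ and substitute the discrete representation $p_X(\mathbf{x}) = \sum_{i=1}^m P_X(\mathbf{x}_i)\, \delta(\mathbf{x} - \mathbf{x}_i)$. The sifting property collapses the $\mathbf{x}$ integral into a finite sum, yielding $\sum_{i=1}^m P_X(\mathbf{x}_i)\, \E_{p_{Y|X}(\cdot|\mathbf{x}_i)}[r(D(\mathbf{x}_i,\mathbf{y}))]$. Since $r(D(\mathbf{x}_i,\mathbf{y}))$ is the $i$-th coordinate of $r(\textbf{D}(\mathbf{y}))$, it can be rewritten as the inner product $r(\textbf{D}(\mathbf{y}))^T \textbf{1}_m(\mathbf{x}_i)$, and the sum reassembles into $\E_{\mathbf{x}\sim p_X}\bigl[\E_{\mathbf{y}\sim p_{Y|X}(\cdot|\mathbf{x})}[r(\textbf{D}(\mathbf{y}))^T \textbf{1}_m(\mathbf{x})]\bigr]$, matching the first term of \eqref{eq:supervised_general_value_function_appendix}.

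For the second expectation, $p_U(\mathbf{x}) p_Y(\mathbf{y})$ factorizes and the discrete uniform expectation over $\mathbf{x}$ becomes $\frac{1}{m}\sum_{i=1}^m(\cdot)$. Hence the term equals $\frac{1}{m}\sum_{i=1}^m \E_{p_Y}[f_u^*(r(D(\mathbf{x}_i,\mathbf{y})))]$. Substituting $f_u^*(t) = m\, f^*(t)$ cancels the $1/m$ factor and produces $\E_{p_Y}\bigl[\sum_{i=1}^m f^*(r(D(\mathbf{x}_i,\mathbf{y})))\bigr]$, which is exactly the second term in the statement.

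The main (and really only) obstacle is notational: one must ensure the counting-measure interpretation of $|\mathcal{T}_x|$ is consistent with $p_U$ being a discrete uniform, so that the balancing factor in $f_u^* = |\mathcal{T}_x|\, f^*$ precisely cancels the $1/|\mathcal{T}_x|$ arising from the uniform expectation. Once this bookkeeping is resolved, the proof is a direct unfolding of definitions. No analytic subtleties arise, since the convexity, optimality, and convergence properties are inherited from Theorem \ref{theorem:top_down} and Lemma \ref{lemma:convergence} and need not be re-established here.
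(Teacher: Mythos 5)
Your proposal is correct and follows essentially the same route as the paper's proof: substitute the discrete delta representations of $p_X$ and $p_U$ into the double integral form of \eqref{eq:top_down_variational_representation_posterior}, collapse the $\mathbf{x}$-integrals via the sifting property, and let the factor $|\mathcal{T}_x|$ in $f_u^* = |\mathcal{T}_x| f^*$ cancel against $P_U(\mathbf{x}_i) = 1/|\mathcal{T}_x|$. Your explicit remark that $|\mathcal{T}_x|$ must be read as the cardinality $m$ for this cancellation to work is the same bookkeeping the paper performs implicitly by setting $P_U(\mathbf{x}_i)=1/|\mathcal{T}_x|$.
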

\begin{proof}
    Let the alphabet of $X$ be $\mathcal{A}_x = \left\{ \mathbf{x}_1, \dots, \mathbf{x}_m \right\}$.
    The objective function in \eqref{eq:top_down_variational_representation_posterior} can be expressed as
    \begin{align}
        \mathcal{J}(D) &= \int_{\mathcal{T}_y} \int_{\mathcal{T}_x} p_X(\mathbf{x}) p_{Y|X}(\mathbf{y}|\mathbf{x}) r(D(\mathbf{x},\mathbf{y})) - p_U(\mathbf{x}) p_Y(\mathbf{y}) f_{u}^{*} \left( r(D(\mathbf{x}, \mathbf{y})) \right) d\mathbf{x}d\mathbf{y} .
    \end{align}
    Then, let $p_U(\mathbf{x}) \triangleq \sum_{i = 1}^m P_U(\mathbf{x}_i) \delta (\mathbf{x} - \mathbf{x}_i)$, with $P_U(\mathbf{x}_i) = \frac{1}{|\mathcal{T}_x|}$.
    \begin{align}
        \mathcal{J}(D) &= \int_{\mathcal{T}_y} \int_{\mathcal{T}_x} \sum_{i = 1}^m P_X(\mathbf{x}_i) \delta (\mathbf{x} - \mathbf{x}_i) p_{Y|X}(\mathbf{y}|\mathbf{x}) r(D(\mathbf{x}, \mathbf{y})) - \sum_{i = 1}^m \delta (\mathbf{x} - \mathbf{x}_i) p_Y(\mathbf{y}) f^{*} \left( r(D(\mathbf{x},\mathbf{y})) \right) d\mathbf{x}d\mathbf{y} ,
    \end{align}
    where $f_{u}^{*} \left( r(D(\mathbf{x},\mathbf{y})) \right) = |\mathcal{T}_x| f^{*} \left( r(D(\mathbf{x},\mathbf{y})) \right)$.
    Then, by using the indicator property of the delta function, the objective function becomes 
    \begin{align}
    \label{eq:supervised_general_integral_of_sum}
        \mathcal{J}(D) &= \int_{\mathcal{T}_y} \sum_{i = 1}^m P_X(\mathbf{x}_i) p_{Y|X}(\mathbf{y}|\mathbf{x}_i) r(D(\mathbf{x}_i,\mathbf{y})) d\mathbf{y} - \int_{\mathcal{T}_y} p_Y(\mathbf{y}) \sum_{i = 1}^m f^{*} \left( r(D(\mathbf{x}_i,\mathbf{y})) \right) d\mathbf{y} ,
    \end{align}
    that is equivalent to \eqref{eq:supervised_general_value_function_appendix}.

    
\end{proof}

\subsection{Proof of Closed Form Posterior in the Exponential Case}
\label{subsec:appendix closed form exponential}
\textbf{Closed Form Posterior Exponential Case}: Let define the model $Y= X+N$, where $X \sim exp(\lambda)$, $N \sim exp(\lambda)$. Therefore, $Y \sim \Gamma(2, \frac{1}{\lambda})$ (i.e., a Gamma distribution with shape $2$ and rate $\frac{1}{\lambda}$). 
Let define $Y|X \triangleq Z$, then the cumulative density function (CDF) is
\begin{align}
    P\left[ Z \leq z \right] &= P \left[ N+x_c \leq z \right] = P\left[ N \leq z - x_c \right] \notag \\
    &= \left( 1-e^{-\lambda(z-x_c)} \right)\mathds{1}(z-x_c) ,
\end{align}
where $X_c$ is constant.
By deriving the CDF w.r.t. $z$, we compute the likelihood
\begin{equation}
    p_{Y|X}(y|x) = \lambda e^{-\lambda(y|x - x_c)} \mathds{1}(y|x - x_c) = p_N(n).
\end{equation}
Then, the posterior probability is computed as
\begin{align}
\label{eq:posterior_closed_form_exponential}
    p_{X|Y}(x|y) &= \frac{p_N(n)p_X(x)}{p_Y(y)} \notag \\
    &= \frac{1}{y} \mathds{1}(x) \mathds{1}(y) .
\end{align}

\subsection{Proof of Closed Form Posterior in the Gaussian Case}
\label{subsec:appendix closed form gaussian}
\textbf{Closed Form Posterior Gaussian Case}: Let define the model $Y = X+N$, where $X \sim \mathcal{N}(0, \Sigma_X)$, $N \sim \mathcal{N}(0, \Sigma_N)$. Thus, $Y|X \sim \mathcal{N}(X, \Sigma_N)$, $Y \sim \mathcal{N}(0, \underbrace{\Sigma_N + \Sigma_X}_{\triangleq \Sigma_Y})$. 

The posterior probability is
\begin{align}
p_{X|Y}(x|y) &= \frac{p_{Y|X}(y|x) p_{X}(x)}{p_{Y}(y)} \\
&= \frac{ \frac{1}{\sqrt{(2 \pi)^d |\Sigma_N|}} e^{-\frac{1}{2} (y-x)^T \Sigma_N^{-1} (y - x) } }{\frac{1}{\sqrt{(2 \pi)^d |\Sigma_Y|}} e^{-\frac{1}{2} y^T \Sigma_Y^{-1} y }} \notag \\ 
& \cdot \frac{1}{\sqrt{(2 \pi)^d |\Sigma_X|}} e^{-\frac{1}{2} x^T \Sigma_X^{-1} x }\\ 
&= \frac{1}{\sqrt{(2 \pi)^d}} \sqrt{\frac{|\Sigma_Y|}{|\Sigma_N||\Sigma_X|}} \exp\left\{-\frac{1}{2}G\right\} ,
\end{align}
where
\begin{align}
G = x^T \Sigma_x^{-1} x + y^T \Sigma_N^{-1} y + x^T \Sigma_N^{-1} x - 2 x^T \Sigma_N^{-1}y - y^T \Sigma_Y^{-1} y . 
\end{align}

In the scalar case, where $d=1$ and $\Sigma_N = \sigma_N^2$, $\Sigma_X = \sigma_X^2$ and $\Sigma_Y = \sigma_X^2 + \sigma_N^2$, we have:
\begin{align}
G &= \frac{x^2}{\sigma_X^2} + \frac{y^2}{\sigma_N^2} + \frac{x^2}{\sigma_N^2} - 2 \frac{xy}{\sigma_N^2} - \frac{y^2}{\sigma_y^2} \\
&= x^2 \left( \frac{1}{\sigma_X^2} + \frac{1}{\sigma_N^2} \right) + y^2 \left( \frac{1}{\sigma_N^2} - \frac{1}{\sigma_Y^2}\right) - \frac{2xy}{\sigma_N^2} \\
&= x^2 \left( \frac{\sigma_X^2 + \sigma_N^2}{\sigma_X^2\sigma_N^2} \right) + y^2 \left( \frac{\sigma_Y^2 - \sigma_N^2}{\sigma_N^2\sigma_Y^2}\right) - \frac{2xy}{\sigma_N^2} \\
&= \frac{1}{\sigma_N^2} \left[ x^2 \frac{\sigma_N^2 + \sigma_X^2}{\sigma_X^2}  + y^2 \frac{\sigma_x^2}{\sigma_X^2 + \sigma_N^2} - 2xy \right] \\
&= \frac{1}{\sigma_N^2} \left[ x^2 k^2  + \frac{y^2}{k^2} - 2xy \right] \\
&= \frac{1}{\sigma_N^2} \left[ kx - \frac{y}{k} \right]^2 ,
\end{align}
where $k \triangleq \frac{\sigma_X^2 + \sigma_N^2}{\sigma_X^2} > 1$. \\
Then, in the scalar case the posterior density expression becomes 
\begin{align}
\label{eq:posterior_closed_form_gaussian}
    p_{X|Y}(x|y) = \frac{1}{\sqrt{2 \pi}} \sqrt{\frac{\sigma_Y^2}{\sigma_N^2 \sigma_X^2}} e^{-\frac{1}{2 \sigma_N^2} \left[ kx - \frac{y}{k} \right]^2}.
\end{align}

\section{Appendix: Bottom-up Approach}
\label{subsec:appendix_insights_bottom_up}
In this section, we provide additional insights on the bottom-up approach proposed in Theorem \ref{theorem:Bottom-up}. The idea is to design the objective function by starting from the desired discriminator's output. When designing an objective function using the bottom-up approach, the choice of $k(\cdot)$ is the fundamental starting point of the procedure, which is summarized in the following.
\begin{itemize}
    \item $k(\cdot)$ is the first DOF and must be chosen as a deterministic and invertible function in the domain $x>0$. Some examples are $k(x)=x$ and $k(x)=1/(1+x)$. Then the posterior pdf must be expressed as the density ratio $p_{XY}/p_Y$.
    \item $g_1(\cdot)$ is an additional DOF and can be chosen as defined in Corollary \ref{corollary:bottom-up-top-down-value-functions}. 
    \item Substitute $k(\cdot)$ and $g_1(\cdot)$ in \eqref{eq:g(D)}.
    \item Compute the integral of \eqref{eq:g(D)} w.r.t. $D$, which leads to $\tilde{\mathcal{J}}(D)$.
    \item The integral in \eqref{eq:value_function_general_bottom_up} represents the expectation computed over the supports $\mathcal{T}_x$ and $\mathcal{T}_y$.
\end{itemize}
This exact procedure is used to prove Theorem \ref{theorem:new_f}.\\
In the case of the KL-based objective function that we proposed, to ensure that \eqref{eq:bottom_up_posterior_estimator} is a density for the case of discrete $X$, we use the softmax as the last activation function of the discriminator. However, $k(\cdot)$ is not the softmax function because if it were the softmax, then $D^\diamond(\mathbf{x}, \mathbf{y}) = softmax(p_{X|Y}(\mathbf{x}|\mathbf{y}))$, where $p_{X|Y}(\mathbf{x}|\mathbf{y})$ is the density (i.e., the output of the softmax will be $p_{X|Y}(\mathbf{x}|\mathbf{y})$ itself, thus we do not need to apply additional transformations). Therefore, in the case of the KL divergence, $k^{-1}(x)=x$. 

\section{Appendix: Additional Numerical Results}
\label{subsec:appendix_numerical_results}
\subsection{Image Datasets}
We study the performance of the proposed objective functions for classification (listed in Appendix \ref{subsec:Appendix_value_functions}) for four image classification datasets. \textbf{MNIST} \cite{lecun1998gradient} comprises 10 classes with 60,000 images for training and 10,000 images for testing. \textbf{Fashion MNIST} has 10 classes with 60,000 images for training and 10,000 images for testing. \textbf{CIFAR10} \cite{krizhevsky2009learning} has 10 classes
with 50,000 images for training and 10,000 images for testing. \textbf{CIFAR100} \cite{krizhevsky2009learning} has 100 classes with 50,000 images for training and 10,000 images for testing. 
Table \ref{tab:classification_images_appendix} shows the classification accuracy of different network architectures trained by using the supervised versions of the objective functions listed in Table \ref{tab:value functions} and the one presented in Theorem \ref{theorem:new_f} (see Appendix \ref{subsec:Appendix_value_functions}). The accuracies are in the form $XX \pm YY$, where $XX$ and $YY$ represent the mean and standard deviation, respectively, obtained over multiple runs of the code. The tests are run by using various architectures: VGG \cite{DBLP:journals/corr/SimonyanZ14a}, DLA\footnote{The implementation of the DLA is a simplified version of the one presented in the original paper} \cite{yu2018deep}, ResNet18 \cite{resnet}, DenseNet \cite{densenet}, PreActResNet \cite{he2016identity}, MobileNetV2 \cite{sandler2018mobilenetv2}. Interestingly, from Tab. \ref{tab:classification_images_appendix}, it can be observed that every network architecture suits some objective functions more than others. For instance, the PreActResNet attains the highest accuracy when trained with the GAN-based objective function (even if the difference in performance is just 0.2\% different from the second-best one). Differently, the DenseNet performs optimally in CIFAR10 and CIFAR100 when trained with the KL-based objective function, even if the difference in accuracy w.r.t. the SL-based objective function is minimal. 
The MobileNetV2 obtains the best performance when trained with the SL-based divergence, and the difference with the second-best objective function is around $2\%$, which is significant. 
The choice of the architecture often depends on the goal of the classification algorithm. For embedded systems, light architectures are used. Therefore, the MobileNetV2 is an option. In such a case, the SL divergence is the preferred choice for the network's training.
\setcounter{table}{4}
\begin{table*}
\caption{Classification accuracy on MNIST (M), Fashion MNIST (FM), CIFAR10 (C10), and CIFAR100 (C100). The PreActResNet is referred to as PAResNet, while the MobileNetV2 is referred to as MobileNet.} 
  \begin{center}
  \begin{small}
  \begin{sc}
    \begin{tabular}{ c c c c c c c c } 
     \hline
     Dataset & Model & CE & RKL & HD & GAN & P & SL \\
     \hline
     M & Shallow & $\textbf{99.08} \pm 0.06$ & $96.05 \pm 0.25$ & $98.68 \pm 0.05$ & $\textbf{99.08} \pm 0.07$ & $98.89 \pm 0.08$ & $99.03 \pm 0.04$ \\
     \hline
     FM & Shallow & $91.64 \pm 0.09$ & $82.63 \pm 1.78$ & $90.75 \pm 0.13$ & $91.63 \pm 0.10$ & $89.86 \pm 0.67$ & $\textbf{91.83} \pm 0.02$ \\
     \hline
      \multirow{7}{*}{C10} & Shallow & $70.13 \pm 0.05$ & $63.59 \pm 0.34$ & $69.38 \pm 0.28$ & $69.98 \pm 0.15$ & $59.62 \pm 0.45$ & $\textbf{70.87} \pm 0.26$  \\
      & VGG & $93.69 \pm 0.03$ & $84.24 \pm 2.21$ & $93.51 \pm 0.06$ & $93.75 \pm 0.04$ & $84.79 \pm 0.21$ & $\textbf{93.93} \pm 0.08$\\
      & DLA & $95.04 \pm 0.02$ & $90.83 \pm 0.10$ & $94.56 \pm 0.11$ & $95.04 \pm 0.13$ & $91.61 \pm 0.21$ & $\textbf{95.31} \pm 0.09$\\
      & ResNet & $95.39 \pm 0.04$ & $92.88 \pm 0.26$ & $95.15 \pm 0.08$ & $95.24 \pm 0.06$ & $93.78 \pm 0.21$ & $\textbf{95.43} \pm 0.04$\\
      & DenseNet & $\textbf{95.82} \pm 0.06$ & $91.52 \pm 0.14$ & $94.93 \pm 0.07$ & $95.67 \pm 0.02$ & $94.34 \pm 0.16$ & $95.53 \pm 0.13$  \\
      & PAResNet & $94.30 \pm 0.15$ & $89.43 \pm 0.39$ & $56.44 \pm 0.02$ & $\textbf{95.17} \pm 0.07$ & $86.24 \pm 0.08$ & $95.09 \pm 0.02$ \\
      & MobileNet & $92.59 \pm 0.13$ & $83.97 \pm 0.21$ & $91.95 \pm 0.33$ & $92.37 \pm 0.14$ & $84.30 \pm 0.32$ & $\textbf{93.89} \pm 0.15$ \\
      \hline
      \multirow{6}{*}{C100} & VGG & $72.73 \pm 0.30$ & $45.80 \pm 2.86$ & $73.51 \pm 0.03$ & $68.88 \pm 0.20$ & $37.19 \pm 0.66$ & $\textbf{73.61} \pm 0.05$\\
      & DLA & $76.29 \pm 0.43$ & $68.86 \pm 1.17$ & $78.63 \pm 0.14$ & $77.34 \pm 0.22$ & $57.97 \pm 0.07$ & $\textbf{78.65} \pm 0.01$\\
      & ResNet & $\textbf{78.29} \pm 0.18$ & $ 70.68 \pm 0.44$ & $77.59 \pm 0.06$ & $77.43 \pm 0.08$ & $61.12 \pm 0.23$ & $78.03 \pm 0.04$\\
      & DenseNet & $\textbf{80.09} \pm 0.02$ & $72.03 \pm 0.23$ & $79.91 \pm 0.01$ & $79.32 \pm 0.34$ & $62.27 \pm 0.21$ & $80.03 \pm 0.02$\\
      & PAResNet & $77.13 \pm 0.15$ & $59.28 \pm 0.12$ & $76.98 \pm 0.05$ & $\textbf{77.39} \pm 0.18$ & $61.64 \pm 0.11$ & $77.19 \pm 0.25$ \\
      & MobileNet & $72.61 \pm 0.08$ & $53.17 \pm 0.35$ & $73.00 \pm 0.30$ & $65.66 \pm 0.46$ & $46.00 \pm 0.37$ & $\textbf{74.78} \pm 0.23$ \\
     \hline
    \end{tabular}
    \end{sc}
    \end{small}
    \end{center}
    \vskip -0.1in
    \label{tab:classification_images_appendix}
\end{table*}

The SL divergence has been proved to have favourable convergence conditions in Corollary \ref{corollary:unsupervised_GAN_SL}. We numerically compare the convergence speed of the GAN and SL divergences on the CIFAR10 dataset in Figure \ref{fig:speed_accuracy}, demonstrating the effectiveness of Corollary \ref{corollary:unsupervised_GAN_SL}. The accuracy behavior is attained by averaging over multiple runs of the code.
In Figure \ref{fig:speed_accuracy}, the accuracy over the test dataset is showed, for each training epoch, with a semi-transparent color (blue and orange for the GAN and SL divergences, respectively). The vivid colors represent a moving average over $5$ epochs, helping the clarity of the visualization. Quantitatively, we report in Tab. \ref{tab:speed_accuracy} the difference between the accuracy obtained by training the network with the SL and GAN divergences. In detail, each column is identified by a number $k$ and contains, for various discriminator architectures, the quantity $A^k_{SL} - A^k_{GAN}$, where $A^k_{f}$ is the average accuracy over multiple runs of the code for the selected $f$-divergence, in the epochs interval $[1, k]$. Each cell in Table \ref{tab:speed_accuracy} contains a positive value, which shows the average faster convergence property of the SL divergence. Furthermore, the difference in speed convergence is more significant for the VGG training than for the ResNet18 and DLA architectures. 
\begin{figure}[htbp]
	\centerline{\includegraphics[width=\columnwidth]{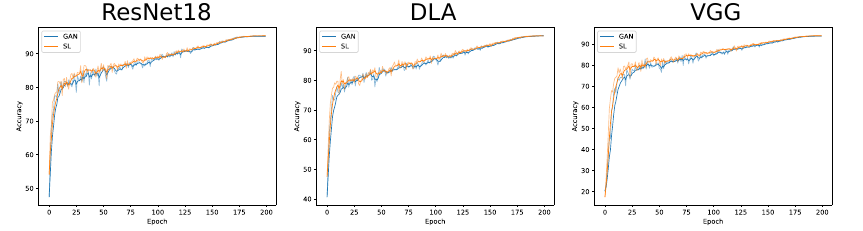}}
	\caption{Convergence speed of the test accuracy over $200$ training epochs.}
	\label{fig:speed_accuracy}
\end{figure}
 \begin{table}
\caption{Difference in speed of convergence of the classification accuracy on CIFAR10, between the SL and GAN divergences. Each column is characterized by a number $k$, which represents the interval $[1,k]$ of epochs over which the average is computed. Each cell contains the difference of accuracy $A^k_{SL} - A^k_{GAN}$.} 
\centering
\vskip 0.15in
  \begin{center}
  \begin{small}
  \begin{sc}
    \begin{tabular}{ c | c | c | c | c | c | c | c | c | c | c } 
     \hline
     Model & 5 & 10 & 25 & 50 & 75 & 100 & 125 & 150 & 175 & 200 \\
     \hline
     \hline
      ResNet18  & $4.15$ & $2.81$ & $1.48$ & $1.43$ &$1.16$ &$1.12$ & $0.95$ & $0.86$ & $0.75$ & $0.67$ \\
      \hline
      DLA & $5.40$ & $4.39$ & $2.52$ & $1.43$ & $1.15$ & $1.11$ & $1.00$ & $0.91$ & $0.81$ & $0.72$\\
      \hline
      VGG & $5.93$ & $7.40$ & $5.19$ & $3.42$ & $2.68$ &$2.36$ & $2.07$ &$1.84$ & $1.63$ & $1.44$ \\
     \hline
    \end{tabular}
    \end{sc}
    \end{small}
    \end{center}
    \vskip -0.1in
    \label{tab:speed_accuracy}
\end{table}

The speed of convergence and the upper-boundness of $D_{SL}$ is (see Corollaries \ref{corollary:unsupervised_GAN_SL} and \ref{corollary:upper_lower_bound_newF} in Appendix \ref{subsec:appendix_corollary_speed_convergence} and \ref{subsec:appendix_corollary_upper_lower_bound_SL}, respectively) confirm the utility of the SL divergence for double optimization problems. For instance, when the objective function of the learning algorithm is formulated as a max-max game. For example, in \cite{wei2020optimizing} the $f$-mutual information (which is estimated by formulating a maximization problem) is maximized for classification with noisy labels. Similarly, in \cite{hjelm2018learning} the authors maximize the JS-mutual information (where JS refers to the Jensen-Shannon divergence, which is equivalently used in this paper as GAN divergence) for representation learning applications (then the model is extended in \cite{NEURIPS2019_ddf35421}). In \cite{letizia2023cooperative}, the authors formulate a mutual information maximization algorithm to achieve the channel capacity in a data communication system. In \cite{zhu2021Arbitrary}, the authors maximize the JS-mutual information for the generation of talking faces.

\subsection{Additional Decoding Tasks}

\subsubsection{AWGN}
We analyze the decoding task in presence of additive white Gaussian noise (AWGN) in the communication channel. 
Let $X$ be a $d$-dimensional binary vector, and $N \sim \mathcal{N}(0,\Sigma_N)$ be Gaussian noise, with $\Sigma_N$ diagonal. Let $Y=X+N$ be the output of the communication channel. 
The SER behavior when varying the SNR is shown in Fig. \ref{fig:AWGN_latent_dim6} for each objective function analyzed. To compare the estimated SER, we visualize the SER achieved by the maxL decoder, which corresponds to the optimal decoder for an AWGN channel with uniform distribution $p_X(\mathbf{x})$ \cite{proakis2007fundamentals}.
The proposed SL divergence achieves the best performance and close to the optimal maxL decoder. In general, different objective functions perform better than $\mathcal{J}_{KL}(D)$.\\
\begin{figure}[htbp]
	\centerline{\includegraphics[width=0.7\columnwidth]{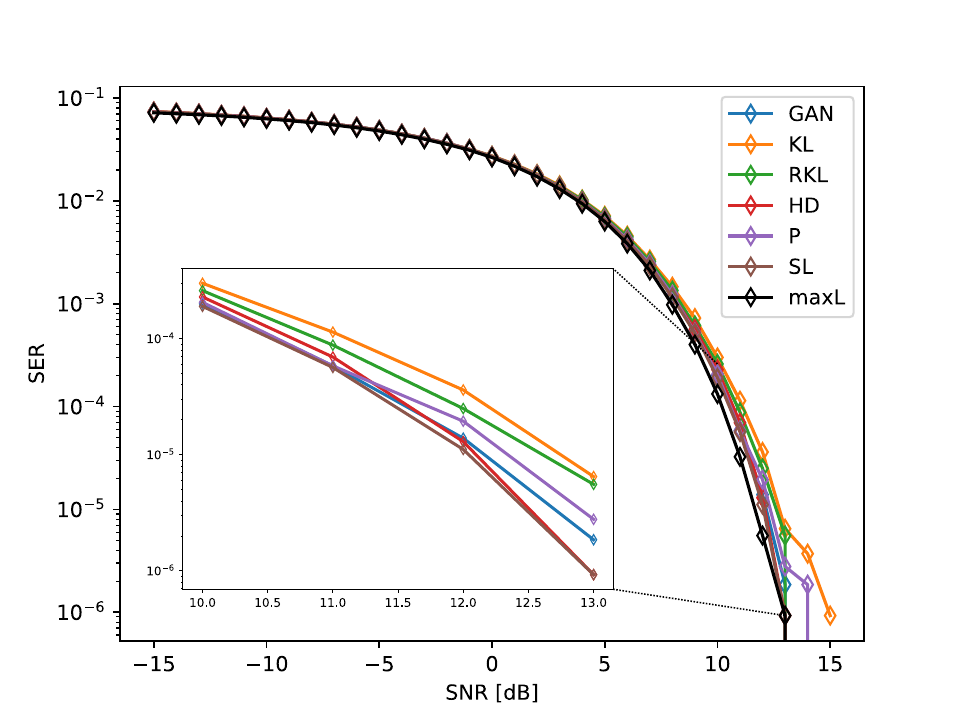}}
	\caption{SER achieved in an AWGN channel by the proposed posterior probability estimators.}
	\label{fig:AWGN_latent_dim6}
\end{figure}

\subsubsection{PAM with Non-Uniform Source}
Similarly to Section \ref{sec:Results}, we consider a 4-PAM transmission. However, in this case we examine the case where the symbols transmitted do not have a uniform prior probability $p_X(\mathbf{x})$. We define the alphabet of $X$ to be $\mathcal{A}_x = \{ x _1, x_2, x_3, x_4\}$ with probabilities $P(x_1) = P(x_2) = P/2$ and $P(x_3) = P(x_4) = (1-P)/2$, where $P=0.05$. The SER for various values of SNR is reported in Fig. \ref{fig:pam_triangular}, where the discriminators trained with the supervised versions of the objective functions in Tab. \ref{tab:value functions} and in \eqref{eq:fNOME_cost_fcn} (see Appendix \ref{subsec:Appendix_value_functions}) are compared with the maxL and MAP decoders. Although the maxL decoder is optimal for an AWGN channel, the extreme non-uniformity of the channel significantly impacts its performance. Differently, the optimal MAP decoder knows the distribution of $p_X(\mathbf{x})$. The discriminator trained with the SL and GAN divergences achieves performance close to the optimal MAP decoder.

\begin{figure}[htbp]
	\centerline{\includegraphics[width=0.7\columnwidth]{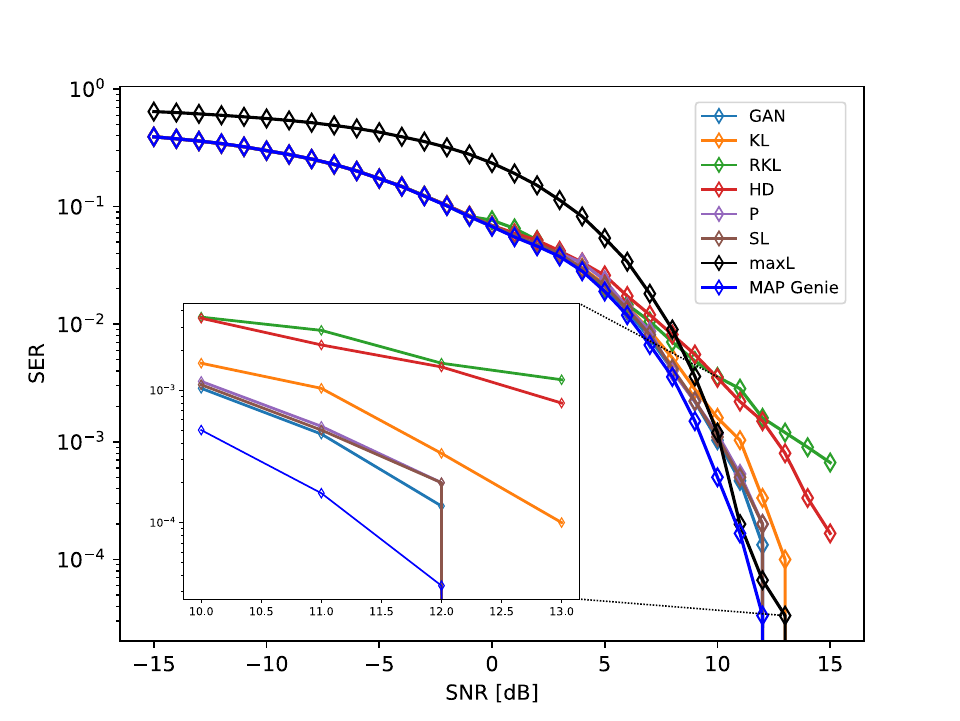}}
	\caption{SER achieved in an AWGN channel by a 4-PAM with non-uniform source probability distribution.}
	\label{fig:pam_triangular}
\end{figure}

\end{document}